
\documentclass{article}

\usepackage{microtype}
\usepackage{graphicx}
\usepackage{booktabs} 

\usepackage[utf8]{inputenc} 
\usepackage[T1]{fontenc}    
\usepackage{hyperref}       
\usepackage{nicefrac}       
\usepackage{microtype}      
\usepackage[table]{xcolor}
\usepackage{framed}
\colorlet{shadecolor}{pink}
\usepackage{authblk}
\usepackage{csquotes}

\usepackage{graphicx}
\usepackage{soul}
\usepackage{subcaption}
\usepackage{booktabs} 
\usepackage{tablefootnote}
\usepackage{dsfont}

\usepackage{multirow}
\usepackage{makecell}

\usepackage{amsmath,amsthm,amssymb,amsfonts}
\usepackage{algorithmic}
\usepackage{enumerate}
\usepackage{comment}
\usepackage{bm}
\usepackage{bbm}
\usepackage{pifont}
\usepackage{makecell}


\usepackage{hyperref}



\usepackage[accepted]{icml2023}

\usepackage{amsmath}
\usepackage{amssymb}
\usepackage{mathtools}
\usepackage{amsthm}

\usepackage[capitalize,noabbrev]{cleveref}

\theoremstyle{plain}
\newtheorem{theorem}{Theorem}[section]

\newtheorem{lemma}[theorem]{Lemma}
\newtheorem{corollary}[theorem]{Corollary}
\theoremstyle{definition}
\newtheorem{definition}[theorem]{Definition}
\newtheorem{assumption}[theorem]{Assumption}
\theoremstyle{remark}
\newtheorem{remark}[theorem]{Remark}

\usepackage[textsize=tiny]{todonotes}

\icmltitlerunning{Beyond Uniform Lipschitz Condition in Differentially Private Optimization}

\begin{document}

\twocolumn[
\icmltitle{Beyond Uniform Lipschitz Condition in Differentially Private Optimization}



\icmlsetsymbol{equal}{*}

\begin{icmlauthorlist}
\icmlauthor{Rudrajit Das}{equal,ut}
\icmlauthor{Satyen Kale}{goo}
\icmlauthor{Zheng Xu}{goo}
\icmlauthor{Tong Zhang}{goo,hkust}
\icmlauthor{Sujay Sanghavi}{ut}
\end{icmlauthorlist}

\icmlaffiliation{ut}{UT Austin}
\icmlaffiliation{goo}{Google Research}
\icmlaffiliation{hkust}{HKUST}

\icmlcorrespondingauthor{Rudrajit Das}{rdas@utexas.edu}

\icmlkeywords{Machine Learning, ICML}

\vskip 0.3in
]



\printAffiliationsAndNotice{\icmlEqualContribution} 

\begin{abstract}
  Most prior results on differentially private stochastic gradient descent (DP-SGD) are derived under the simplistic assumption of uniform Lipschitzness, i.e., the per-sample gradients are uniformly bounded. We generalize uniform Lipschitzness by assuming that the per-sample gradients have sample-dependent upper bounds, i.e., per-sample Lipschitz constants, which themselves may be unbounded. We provide principled guidance on choosing the clip norm in DP-SGD for convex over-parameterized settings satisfying our general version of Lipschitzness when the per-sample Lipschitz constants are bounded; specifically, we recommend tuning the clip norm only till values up to the minimum per-sample Lipschitz constant. This finds application in the private training of a softmax layer on top of a deep network pre-trained on public data. We verify the efficacy of our recommendation via experiments on 8 datasets. Furthermore, we provide new convergence results for DP-SGD on convex and nonconvex functions when the Lipschitz constants are unbounded but have bounded moments, i.e., they are heavy-tailed. 
\end{abstract}

\section{Introduction}
\label{intro}
With the ever-increasing amount of data being used, there is a growing need for the development of privacy-preserving training schemes for machine learning (ML) models. Differential privacy (DP) \citep{dwork2006calibrating} is a popular privacy-preserving framework that is being incorporated in the training of ML models. We formally define DP in \Cref{def-dp}, but at a high level, DP can be guaranteed by adding Gaussian noise, where the noise scale is determined by the \enquote{sensitivity} to an individual's data. There has been copious research on private optimization for private training; in this paper, we focus on DP-SGD \citep{abadi2016deep} which is the default algorithm for private optimization in practice.

We briefly introduce the problem setting and DP-SGD to facilitate further discussion (see \Cref{sec:prelim} for more details). We consider empirical risk minimization (ERM) of $f(\bm{w}) = \frac{1}{n}\sum_{i=1}^n f_i(\bm{w})$, where each $f_i: \mathbb{R}^d \xrightarrow{} \mathbb{R}$. In every iteration of DP-SGD (stated in \Cref{alg:1}), the optimizer receives a noise-perturbed average of the \textit{clipped} per-sample gradients for performing the update; noise is added to guarantee differential privacy. Specifically, at iteration $t$, the optimizer receives $\bm{g}_t = \frac{1}{b}\sum_{i \in \mathcal{S}_t} \text{clip}(\nabla f_i(\bm{w}_t), \tau) + \bm{\zeta}_t$, where $\mathcal{S}_t$ is a random batch of samples formed by picking each sample in $\{1, \ldots, n\}$ with probability $({b}/{n})$, $\text{clip}(\bm{z},c) := \bm{z} \min(1, {c}/{\|\bm{z}\|})$ for a vector $\bm{z}$, $\tau$ is the clipping threshold or clip norm, and $\bm{\zeta}_t$ is an isotropic Gaussian random vector whose variance is proportional to $\tau^2$ and also depends on the amount of privacy required.

Clipping is employed in DP-SGD to bound the maximum sensitivity of the average gradient to each sample’s individual gradient, which is required to set the noise variance. However, clipping can also make $\bm{g}_t$ a {\em biased} estimator of $\nabla f(\bm{w}_t)$, and the amount of bias depends on the clip norm $\tau$ -- the higher we set $\tau$, the lower is the bias, and vice-versa. As the noise variance is proportional to $\tau^2$, there is an inherent tension between the bias and variance of $\bm{g}_t$ due to the clip norm $\tau$. This raises a natural question - \textit{how do we set \enquote{good} clip norms to balance the bias-variance tradeoff?}

In order to circumvent the analysis of the clipping bias, most prior convergence results for private optimization \citep{bassily2014private,bassily2019private,wang2018differentially,wang2019efficient} assume that the loss function is \textit{uniformly Lipschitz} for all samples and model parameters, i.e., the per-sample gradients (w.r.t. the model parameters) have a sample-independent upper bound known as the Lipschitz constant. Under this assumption, the clip norm is set equal to the Lipschitz constant resulting in zero bias as no clipping happens. {But in practice, such a choice of clip norm results in very poor performance; see \Cref{fig:results} and its caption.} 
Additionally, uniform Lipschitzness does not even hold for simple problems like linear regression with Gaussian data, precluding the existence of a trivial clip norm for analysis. 

{In this work, we generalize uniform Lipschitzness by instead assuming that the per-sample gradients have sample-dependent upper bounds, i.e., per-sample Lipschitz constants, which themselves may be unbounded. {Under this generalized assumption, we provide a theoretically-motivated clip norm selection strategy for convex settings.} 
Our method finds direct application in the private training of the softmax layer of pre-trained deep networks which is a popular and effective scheme for private training \citep{de2022unlocking, mehta2022large}. In practice, the clip norm is tuned over multiple runs which is not only computationally inefficient but also increases the privacy cost \cite{papernot2021hyperparameter}. So it is desirable to have principled methods for setting the clip norm to alleviate these two issues.} {Additionally, we provide novel convergence results for DP-SGD when the per-sample Lipschitz constants are heavy-tailed.}

Before we state our contributions in detail, we need to briefly introduce the metric quantifying convergence, which we call the \enquote{optimization risk}. Let $\bm{w}_\text{priv}$ be the output of DP-SGD (Alg. \ref{alg:1}). If $f$ is convex, the optimization risk is the expected suboptimality gap, i.e., $\mathbb{E}[f(\bm{w}_\text{priv})] - \min_{\bm{w}} f(\bm{w})$. If $f$ is nonconvex, the optimization risk is the expected gradient-norm squared, i.e. $\mathbb{E}[\|\nabla f(\bm{w}_\text{priv})\|^2]$. When DP-SGD is $(\varepsilon,\delta)$-DP (defined in \Cref{def-dp}), our convergence results are expressed in terms of the following key quantity:
\begin{equation}
    \label{eq:def-vaphi}
    \varphi := {\sqrt{\nu d \log({1}/{\delta})}}/{n\varepsilon},
\end{equation}
where $d$ is the dimension of the model parameters, $n$ is the number of samples, and $\nu$ is an absolute constant. We assume that $n$ is large enough so that $\varphi < 1$, and the number of iterations of DP-SGD is sufficiently large. 
We now list our \textbf{main contributions}, and also summarize the main theoretical results in Tables \ref{tab-2} and \ref{tab-1}.

\textbf{(a)} Throughout this work, we generalize uniform Lipschitzness by assuming that the per-sample gradients have {sample-dependent} upper bounds, i.e., \textit{per-sample Lipschitz constants}, which themselves may be unbounded; we call this \textit{generalized Lipschitzness} (\Cref{asmp-lip-gen}). 
\vspace{-0.1 cm}
\begin{itemize}
    \item In \Cref{sec:1}, we provide a principled clip norm {tuning} strategy for DP-SGD under generalized Lipschitzness. Specifically, we focus on convex settings with near-interpolation like conditions (Asmp. \ref{overparam}), i.e., $\frac{1}{n}\sum_{i=1}^n (f_i(\bm{w}^{*}) - \min_{\bm{w}} f_i(\bm{w})) \approx 0$ where $\bm{w}^{*} = \text{arg min}_{\bm{w}} f(\bm{w})$; over-parameterization is an example of this. For such cases, we recommend \textit{tuning the clip norm only till values up to the minimum per-sample Lipschitz constant} (\Cref{reco}), say $G_{\text{min}}$, based on \Cref{thm-small-clip} where we show that the optimization risk attains the best bound when the clip norm is $\leq G_{\text{min}}$. This is in contrast to prior {theoretical} works which set the clip norm equal to the \textit{maximum} per-sample Lipschitz constant, say $G_{\text{max}}$, for ease of analysis. 
    \vspace{-0.1 cm}
    \item {Our recommendation for convex settings is of relevance to the private training of the softmax (i.e., last) layer of deep networks pre-trained on public data. In \Cref{expts}, we corroborate our recommendation with experiments on four\footnote{We show results on two more datasets in \Cref{extra-expts}.} vision datasets, viz., Caltech-256 \cite{griffin2007caltech}, Food-101 \cite{bossard14}, CIFAR-100 and CIFAR-10, and two language datasets, viz., TweetEval Emoji \cite{barbieri2018semeval} and Emotion \cite{saravia-etal-2018-carer}. As an e.g., for Caltech-256 and Food-101 with $\varepsilon = 2$, the test accuracy obtained by setting the clip norm $\tau = G_{\text{min}}$ is better than that of $\tau = G_{\text{max}}$ by more than 23\% and 11\%, respectively.}
\end{itemize}
\vspace{-0.2 cm}

\textbf{(b)} {In \Cref{non-lip}, we provide optimization risk bounds for DP-SGD under {generalized Lipschitzness}, without assuming interpolation, when the per-sample Lipschitz constants have bounded $k^{\text{th}}$ moment, i.e., they are \textit{heavy-tailed} (Asmp. \ref{dec23-asmp1}).} For private \textbf{unconstrained} \textbf{convex} and smooth \textbf{nonconvex} optimization under 
this assumption, we derive risk bounds of $\mathcal{O}({\varphi^{1 - \frac{2}{k+1}}})$ and $\mathcal{O}({\varphi^{1 - \frac{1}{2k-1}}})$, respectively\footnote{The seemingly better result for the nonconvex setting is because of the difference in the risk metrics between the convex and nonconvex cases.}; see Theorems \ref{thm1-jan19} and \ref{thm1-dec15}. Under an additional mild assumption, we improve the risk bound in the convex case to $\mathcal{O}({\varphi^{1 - \frac{1}{k}}})$ and also derive a matching lower bound, thereby establishing the \textit{optimality of DP-SGD} in this case; see \Cref{asmp1-may4} and Theorems \ref{thm1-may4} and \ref{lower_bound}. These are the first results for private \textbf{unconstrained} optimization under the heavy-tailed assumption or anything similar. Our results also imply the \textit{optimality of DP-SGD} in the \textbf{unconstrained} convex case under \textit{uniform} Lipschitzness; see Cor. \ref{cor-unif}. To our knowledge, this is the first matching lower bound for the \textbf{unconstrained} convex case under \textit{uniform} Lipschitzness.
\vspace{-0.2 cm}
\begin{table}[!htb]
\caption{\textbf{Summary of our clip norm selection result} (Thm. \ref{thm-small-clip}) for the convex case under generalized Lipschitzness (Asmp. \ref{asmp-lip}) and interpolation (Asmp. \ref{overparam}). $G_1$ and $G_n$ are the \textit{minimum} and \textit{maximum} per-sample Lipschitz constants as per Asmp. \ref{asmp-lip}. In the table, $\alpha^{*} = \alpha(G_1) \geq 1$, where $\alpha(G_1)$ is defined in \Cref{may2-asmp-1}, and $B = \mathcal{O}(\|\bm{w}_0 - \bm{w}^{*}\| G_n \varphi)$, where $\bm{w}_0$ is the initial point, $\bm{w}^{*}$ is a minimizer of $f$ and $\varphi = \mathcal{O}(\nicefrac{\sqrt{d  \log({1}/{\delta})}}{n\varepsilon})$.
}
\label{tab-2}
\vspace{-0.3 cm}
\begin{center}
\resizebox{\columnwidth}{!}{
\begin{tabular}{|l|c|}
\toprule
Clip Norm $\tau$ & \makecell{Risk Upper Bound when \\ Perfect Interpolation holds \\
($\Delta(\bm{w}^{*}) = 0$ in Asmp. \ref{overparam})} \\
\midrule
$\in (0, G_1]$ (\textbf{this work})  & ${B}/{{\alpha}^{*}}$ (${\alpha}^{*} \geq 1$)\\
\midrule
$\in (G_1, G_n)$ (\textbf{this work})  & $\geq {B}/{\alpha}^{*}$ but $\leq B$ \\
\midrule
$G_n$ (default choice of prior theory)  & $B$ \\
\bottomrule
\end{tabular}
}
\end{center}
\vskip -0.1in
\end{table}

\begin{table*}[!htb]
\renewcommand\thempfootnote{\arabic{mpfootnote}}
\begin{minipage}{\textwidth} 
\caption{\textbf{Summary of optimization risk (OR) bounds.} 
OR is defined in \Cref{def-cor} and $\varphi = \mathcal{O}(\nicefrac{\sqrt{d  \log({1}/{\delta})}}{n\varepsilon}) < 1$.
In \Cref{dec23-asmp1}, we assume that the per-sample gradients have {sample-dependent} upper bounds with bounded $k^{\text{th}}$ moment ($k > 1$).
In \Cref{asmp1-may4}, we assume a mild lower bound on function suboptimality of points far away from the optimum.
}
\label{tab-1}
\vspace{-0.15cm}
\begin{center}
\resizebox{\columnwidth}{!}{
\begin{tabular}{c|c|c|c}
\toprule
Reference & Assumption(s) \& Setting & Risk Upper Bound & Matching Lower Bound? \\ 
\midrule
\citet{bassily2014private} & {Uniform} Lipschitz \& Convex Constrained  & $\mathcal{O}(\varphi)$ & Yes \\
\midrule
\textbf{This work} & {Uniform} Lipschitz \& Convex {Unconstrained} ($\mathcal{W} = \mathbb{R}^d$) & $\mathcal{O}(\varphi)$ (Cor. \ref{cor-unif}) & Yes, {for $\delta < e^{-\varepsilon^2}$} (Cor. \ref{cor-unif}) \\
\midrule
\makecell{\citet{kamath2021improved}, \\ \citet{lowy2022private}\footnote{These two works consider the stochastic optimization setting with $(0,\mathcal{O}(\varepsilon^2))$-zCDP and derive bounds for the generalization error. In \Cref{non-lip-cvx}, we show that the same bounds hold for the training error (i.e., OR) in empirical risk minimization with $(\varepsilon,\delta)$-DP.}} & \Cref{dec23-asmp1} \& Convex Constrained  & $\mathcal{O}(\varphi^{1 - \frac{1}{k}})$  & Yes \\
\midrule
\textbf{This work} & \Cref{dec23-asmp1} \& Convex Unconstrained  & $\mathcal{O}(\varphi^{1 - \frac{2}{k+1}})$ (Thm. \ref{thm1-jan19}) & ? \\
\midrule
\textbf{This work} & Assumptions \ref{dec23-asmp1}, \ref{asmp1-may4} \& Convex Unconstrained & $\mathcal{O}(\varphi^{1 - \frac{1}{k}})$ (Thm. \ref{thm1-may4}) &
Yes, {for $\delta < e^{-\varepsilon^2}$} (Thm. \ref{lower_bound}) \\
\midrule
\makecell{\citet{arora2022faster}, \\ \citet{tran2022momentum}\footnote{The $\mathcal{O}(\varphi^{4/3})$ bound is attained by algorithms very different from DP-SGD. For DP-SGD like algorithms, \citet{wang2018differentially} obtain the best known bound of $\mathcal{O}(\varphi)$.}} & {Uniform} Lipschitz \& Smooth Nonconvex Unconstrained & $\mathcal{O}(\varphi^{4/3})$ & ?\\
\midrule
\textbf{This work} & \Cref{dec23-asmp1} \& Smooth Nonconvex Unconstrained  & $\mathcal{O}(\varphi^{1 - \frac{1}{2k-1}})$ (Thm. \ref{thm1-dec15}) & ? \\
\bottomrule
\end{tabular}
}
\end{center}
\end{minipage}
\end{table*}

\section{Related Work}
\label{rel-wrk}
\textbf{DP-(S)GD with Clipping:}
\citet{abadi2016deep} introduce the famous DP-SGD algorithm with clipping for differentially private training in practice.  
There are some papers analyzing the effect of clipping in DP-(S)GD in different settings such as \citet{chen2020understanding,bu2021convergence,song2021evading}. {However, these works do not provide any practical insights into how to set the clip norm for DP-SGD, which is a key focus of our work. Related to our focus, there are some variants of DP-SGD such as \citet{andrew2019differentially,du2021dynamic,wu2021adaptive} that adaptively change the clip norm and/or noise variance to improve convergence. However, practitioners typically use a \textit{constant} clip norm which does not change with the iteration number, and so we only focus on how to set a \textit{constant} clip norm used in \textit{vanilla} DP-SGD \cite{abadi2016deep}.
}

\textbf{Differentially Private Optimization under Uniform Lipschitzness:} There are several papers on private empirical risk minimization (ERM) \cite{chaudhuri2008privacy, chaudhuri2011differentially,kifer2012private,song2013stochastic,duchi2013local,bassily2014private,talwar2014private,talwar2015nearly,wu2017bolt,iyengar2019towards} and private stochastic optimization \cite{bassily2014private,bassily2019private,feldman2020private,asi2021private-2,asi2021private,kulkarni2021private,bassily2021differentially} for \textit{convex Lipschitz} objectives within a \textit{bounded set}. The optimal risk bound for private constrained convex ERM over a bounded set 
in the Lipschitz case 
is $\mathcal{O}(\varphi)$ \cite{bassily2014private}. \citet{zhang2017efficient, wang2018differentially,wang2019efficient,arora2022faster,tran2022momentum} derive convergence results for private unconstrained \textit{nonconvex} ERM with {Lipschitz and smooth} objectives.
\citet{wang2018differentially, wang2019efficient} obtain a risk bound of $\mathcal{O}(\varphi)$ for DP-SGD like algorithms. \citet{arora2022faster,tran2022momentum} obtain an improved bound of $\mathcal{O}(\varphi^{4/3})$ but with more advanced algorithms. 
{These papers simply clip the gradients up to the Lipschitz constant, and do not explore the effect of clipping to smaller values. But as shown in Fig. \ref{fig:results}, such a choice of clip norm performs poorly in practice and besides, uniform Lipschitzness may not always hold.
}

\textbf{Bounded Gradient Moments:} Our assumption of per-sample Lipschitz constants having bounded $k^\text{th}$ moment, i.e. \Cref{dec23-asmp1}, generalizes the \enquote{heavy-tailed} assumption of \citet{wang2020differentially,hu2021high} for private stochastic convex optimization (SCO) with bounded second moment. The bounded $k^\text{th}$ moment assumption has been also analyzed in \citet{kamath2021improved,lowy2022private} for private SCO. 
However, these papers focus only on \textit{constrained convex} optimization. In practice, however, unconstrained minimization is usually performed while training ML models. In this paper, we focus on the \textit{more practical and harder} (w.r.t. analysis) case of private \textit{unconstrained} convex as well as nonconvex optimization (which has not been considered before) under this assumption. We discuss the assumptions of these papers in more detail after \Cref{dec23-asmp1}. Note that none of these papers have any result like our theoretically-motivated clip norm selection method \textit{for use in practice} with \textit{our level of experimental verification}.

\section{Preliminaries} \label{sec:prelim}
\textbf{Notation:} Vectors and matrices are in bold face. 
For any $n \in \mathbb{N}$, the set $\{1,\ldots,n\}$ is denoted by $[n]$, and the uniform distribution over $\{0,\ldots,n\}$ is denoted by $\text{unif}[0,n]$.
$\|.\|$ denotes the $\ell_2$ norm throughout this work. For a function $h$ and any point $\bm{x} \in \mathcal{X}$, the \enquote{suboptimality gap} (at $\bm{x}$) over $\mathcal{X}$ means $h(\bm{x}) - \min_{\bm{x}' \in \mathcal{X}} h(\bm{x}')$.
The function $\text{clip}(.,.): \mathbb{R}^d \times \mathbb{R}^{+} \xrightarrow{}\mathbb{R}^d$ is defined as:
\begin{equation}
    \label{eq:jan8-1}
    \text{clip}(\bm{z},c) := \bm{z} \min(1, {c}/{\|\bm{z}\|}).
\end{equation}
\begin{definition}[\textbf{Lipschitz}]
$h:\mathcal{T} \xrightarrow{} \mathbb{R}$ is to said to be $G$-Lipschitz if $\sup_{\bm{t} \in \mathcal{T}}\|\nabla h(\bm{t})\| \leq G$.
\end{definition}
\begin{definition}[\textbf{Smooth}]
$h:\mathcal{T} \xrightarrow{} \mathbb{R}$ is to said to be $L$-smooth if for all $\bm{t}, \bm{t}' \in \mathcal{T}$, $\|\nabla h(\bm{t}) - \nabla h(\bm{t}')\| \leq L\|\bm{t} - \bm{t}'\|$.
\end{definition}

\begin{definition}
[\textbf{Differential Privacy} \cite{dwork2014algorithmic}]
\label{def-dp}
Suppose each sample $\in \mathcal{S}$. Given a query function $h:\mathcal{S}^n \xrightarrow{} \mathcal{X}$, a randomized mechanism $\mathcal{M}:\mathcal{X} \xrightarrow{} \mathcal{Y}$ is said to be $(\varepsilon,\delta)$-DP if for any two datasets $\mathcal{D}, \mathcal{D}' \in \mathcal{S}^n$ differing in exactly one sample and for any measurable $\mathcal{R} \in \mathcal{Y}$, $\mathbb{P}(\mathcal{M}(h({D})) \in \mathcal{R}) \leq e^{\varepsilon} \mathbb{P}(\mathcal{M}(h({D}')) \in \mathcal{R}) + \delta$.
\end{definition}
The customary way to guarantee DP is to add zero-mean Gaussian noise to the output of $h(.)$ above; this is known as the Gaussian mechanism \cite{dwork2014algorithmic}.

\textbf{Problem Setting and DP-SGD:} 
Suppose we are given a dataset of $n$ i.i.d. samples (features and corresponding labels) $\mathcal{Z} := \{(\bm{x}_i, y_i)\}_{i=1}^n$ drawn from some distribution $\mathcal{D}$. We wish to train a model, parameterized by $\bm{w} \in \mathcal{W} \subseteq \mathbb{R}^d$, on the data via DP-SGD such that the whole training process is $(\varepsilon,\delta)$-DP. We use a loss function $\ell(\bm{w},.)$ (for e.g., the squared loss or cross-entropy loss with some regularization possibly) to learn the model. Let $f_i(\bm{w}) := \ell(\bm{w}, \bm{x}_i, y_i)$; then, we are trying to privately minimize
\begin{equation}
    \label{eq:nov17-100}
    f(\bm{w}) = \frac{1}{n}\sum_{i=1}^n f_i(\bm{w}).
\end{equation}
DP-SGD is summarized in \Cref{alg:1}. Gradient clipping is employed to bound the sensitivity of the average gradient to each sample's individual gradient. Gaussian noise is added to guarantee DP.
In \citet{abadi2016deep}, the last iterate $\bm{w}_T$ is returned; in contrast, we return a random iterate.
We now specify the value of $\sigma_n^2$ required to make Alg. \ref{alg:1} $(\varepsilon, \delta)$-DP  using the moments accountant method of \citet{abadi2016deep}; we provide a short proof in \Cref{pf-dp}.  
\begin{theorem}[\textbf{Moments Accountant~\cite{abadi2016deep}}]
\label{thm-dp}
For $\varepsilon < \mathcal{O}\big(\frac{b^2}{n^2}T\big)$, \Cref{alg:1} is $(\varepsilon, \delta)$-DP for $\sigma_n^2 = \frac{\nu T \log(\frac{1}{\delta}) \tau^2}{n^2 \varepsilon^2}$, where $\nu$ is an absolute constant.
\end{theorem}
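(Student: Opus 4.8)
The plan is to follow the moments accountant framework of \citet{abadi2016deep}. First I would observe that each iteration of \Cref{alg:1} is an instance of the subsampled Gaussian mechanism: up to the deterministic $1/b$ scaling, the quantity released at step $t$ is $\sum_{i \in \mathcal{S}_t} \text{clip}(\nabla f_i(\bm{w}_t), \tau)$ perturbed by Gaussian noise, where $\mathcal{S}_t$ is a Poisson subsample formed by including each index independently with probability $q = b/n$. Since each summand has $\ell_2$ norm at most $\tau$ by the definition of $\text{clip}(\cdot,\tau)$ in \eqref{eq:jan8-1}, adding or removing a single sample changes the sum by at most $\tau$ in $\ell_2$ norm, so the per-step $\ell_2$-sensitivity is $\tau$. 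Writing the noise standard deviation on the sum as $\sigma\tau$ for a noise multiplier $\sigma$, the scaling by $1/b$ gives the relation $\sigma_n^2 = \sigma^2\tau^2/b^2$ for the variance of $\bm{\zeta}_t$, which I will use at the very end to translate a bound on $\sigma$ into the claimed formula for $\sigma_n^2$.

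Next I would assemble the three standard ingredients of the accountant. First, for neighboring datasets $D, D'$ I define the privacy-loss random variable $c(o) = \log\big(\Pr[\mathcal{M}(D)=o]/\Pr[\mathcal{M}(D')=o]\big)$ and its log moment $\alpha(\lambda) = \max_{D,D'} \log \mathbb{E}_{o \sim \mathcal{M}(D)}[\exp(\lambda c(o))]$. Second, I invoke linear composability of log moments, so that the $T$-fold composition satisfies $\alpha_{1:T}(\lambda) \le \sum_{t=1}^T \alpha_t(\lambda) = T\,\alpha_1(\lambda)$, the last equality holding because the per-step privacy behaviour is identical across iterations. Third, I invoke the tail bound converting moments to DP: the overall mechanism is $(\varepsilon,\delta)$-DP whenever $\delta \ge \min_\lambda \exp(\alpha_{1:T}(\lambda) - \lambda\varepsilon)$.

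The crux, and the step I expect to be the main obstacle, is the single-step moment bound for the subsampled Gaussian mechanism. After rescaling the sensitivity to $1$, this amounts to controlling $\alpha_1(\lambda)$ for the mixture $\mu = (1-q)\,\mathcal{N}(0,\sigma^2) + q\,\mathcal{N}(1,\sigma^2)$ against $\mu_0 = \mathcal{N}(0,\sigma^2)$, i.e.\ bounding $\log \mathbb{E}_{z\sim\mu_0}\big[(\mu_0(z)/\mu(z))^\lambda\big]$ together with the symmetric quantity with the roles of $\mu_0$ and $\mu$ swapped. Following \citet{abadi2016deep}, I would expand the integrand binomially in the subsampling parameter $q$, evaluate the resulting Gaussian moments term by term, and show that in the regime $q \lesssim 1/\sigma$ and for integer $\lambda$ up to $\mathcal{O}(\sigma^2 \log(1/(q\sigma)))$ the dominant contribution yields $\alpha_1(\lambda) \le c\, q^2 \lambda^2 / \sigma^2$ for an absolute constant $c$. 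Verifying that the higher-order terms are genuinely negligible inside this parameter range is the delicate technical heart of the argument; everything else is bookkeeping.

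Finally I would combine and optimize. Composition gives $\alpha_{1:T}(\lambda) \le c\,T q^2 \lambda^2 / \sigma^2$, and the tail bound requires $c\,T q^2 \lambda^2/\sigma^2 - \lambda\varepsilon \le \log\delta$. Minimizing the left-hand side over $\lambda$ at $\lambda^\star = \varepsilon\sigma^2/(2cTq^2)$ produces the condition $\sigma^2 \ge 4c\,T q^2 \log(1/\delta)/\varepsilon^2$, while the upper restriction $\varepsilon < \mathcal{O}(q^2 T) = \mathcal{O}(b^2 T/n^2)$ is precisely what keeps $\lambda^\star$ within the admissible integer range demanded by the single-step bound. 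Substituting $q = b/n$ and the relation $\sigma_n^2 = \sigma^2\tau^2/b^2$ then gives $\sigma_n^2 = \nu\, T \log(1/\delta)\,\tau^2 / (n^2\varepsilon^2)$ with $\nu = 4c$ an absolute constant, as claimed.
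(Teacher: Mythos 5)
Your proposal is correct and takes essentially the same approach as the paper: the paper's own proof simply invokes the moments-accountant result of \citet{abadi2016deep} as a black box, which is exactly the machinery (per-step $\ell_2$-sensitivity $\tau$, the subsampled-Gaussian log-moment bound $\alpha(\lambda)\leq c q^2\lambda^2/\sigma^2$, linear composition, the tail bound, and optimization over $\lambda$ yielding $\nu = 4c$ and the restriction $\varepsilon < \mathcal{O}(b^2 T/n^2)$) that you unpack. The only step the paper makes explicit that you leave implicit is that returning the uniformly random iterate $\bm{w}_{\widehat{t}}$ is post-processing of the already $(\varepsilon,\delta)$-DP trajectory and hence preserves the guarantee.
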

\begin{algorithm}[!tbh]
	\caption{\texttt{DP-SGD} \cite{abadi2016deep}}
	\label{alg:1}
	\begin{algorithmic}[1]
		\STATE {\bfseries Input:}
		Domain of parameters $\mathcal{W}$, initial point $\bm{w}_0 \in \mathcal{W}$, number of iterations $T$, learning rates  $\{\eta_{t}\}_{t=0}^{T-1}$, sample selection probability $(b/n)$, clip norm $\tau$ and noise variance $\sigma_n^2$.
		\vspace{0.05 cm}
		\FOR{$t = 0,\dots, T-1$}
		\vspace{0.05 cm}
		\STATE Form a random mini-batch $\mathcal{S}_t$ by picking each sample independently of the others with probability $b/n$.
		\vspace{0.05 cm}
		\STATE  
		Get the noisy mini-batch stochastic gradient
		$\bm{g}_t = \frac{1}{b}\sum_{i \in \mathcal{S}_t} \text{clip}(\nabla f_i(\bm{w}_t), \tau) + \bm{\zeta}_t$, where $\bm{\zeta}_t \sim \mathcal{N}(\vec{0}_d, \sigma_n^2 \text{I}_d)$ and $\text{clip}()$ is defined in (\ref{eq:jan8-1}).
		\vspace{0.08 cm}
		\STATE Let $\bm{z}_{t+1} \xleftarrow{} \bm{w}_t -  \eta_t \bm{g}_t$. 
		Update $\bm{w}_{t+1} \xleftarrow{} \Pi_{\mathcal{W}}(\bm{z}_{t+1})$, where $\Pi_{\mathcal{W}}(\bm{z})$ is the projection of $\bm{z}$ onto $\mathcal{W}$. (Note that $\Pi_{\mathbb{R}^d}(\bm{z}) = \bm{z}$.)
		\vspace{0.08 cm}
		\ENDFOR
		\vspace{0.05 cm}
		\STATE Return $\bm{w}_\text{priv} = \bm{w}_{\widehat{t}}$, where $\widehat{t} \sim \text{unif}[0,T-1]$.
	\end{algorithmic}
\end{algorithm}

Finally, we define our convergence metric for DP-SGD which we call the \textit{optimization risk}.
\begin{definition}[\textbf{Optimization Risk}]
\label{def-cor}
Recall $\bm{w}_\text{priv}$ is the output of DP-SGD (Alg. \ref{alg:1}) after $T$ iterations.
\\
\textbf{(i)} Suppose $f$ is convex. We define the \textit{convex optimization risk} as $\textup{OR}(T) := \big(\mathbb{E}[f(\bm{w}_\text{priv})] - f(\bm{w}^{*})\big)$, where $\bm{w}^{*} \in \text{argmin}_{\bm{w} \in \mathcal{W}} f(\bm{w})$. 

\textbf{(ii)} Suppose $f$ is nonconvex \& $\mathcal{W} = \mathbb{R}^d$. We define the \textit{nonconvex optimization risk} as $\textup{OR}(T) :=  \mathbb{E}[\|\nabla f(\bm{w}_\text{priv})\|^2]$.

Note that the expectations above are w.r.t. the randomness of \Cref{alg:1} (in particular, conditioned on the dataset $\mathcal{Z}$).
\end{definition}
\vspace{-0.3 cm}
Also recall the key quantity $\varphi = \frac{\sqrt{\nu d  \log({1}/{\delta})}}{n\varepsilon} < 1$ defined in \cref{eq:def-vaphi}. Our bounds on the optimization risk will be in terms of $\varphi$.
{For brevity, we only present abridged versions of our results in the main paper and provide the full versions and proofs in the Appendix.}

\section{Generalized Lipschitzness}
\label{dp-sgd-setting}
Here we introduce our \textit{proposed} generalization to the commonly used uniform Lipschitzness assumption. 
\begin{assumption}[\textbf{Generalized Lipschitzness}]
\label{asmp-lip-gen}
For any $\bm{w} \in \mathcal{W}$ and $(\bm{x},y) \sim \mathcal{D}$, the following holds for some sample-dependent function $G(\bm{x},y)$:
\begin{equation*}
    \|\nabla_{\bm{w}} \ell(\bm{w}, \bm{x}, y)\| \leq G(\bm{x}, y),
\end{equation*}
where $\ell$ is our loss function (mentioned before \cref{eq:nov17-100}). We call $G(\bm{x}, y)$ the \enquote{per-sample Lipschitz constant}.\footnote{This assumption extends to non-differentiable functions by replacing the gradient with sub-gradient.}
\end{assumption}
\vspace{-0.2 cm}
Note that we are \textit{not} imposing the condition that $G(\bm{x}, y)$ be itself bounded for all $(\bm{x}, y)$. In fact, if we do impose that, then we recover uniform Lipschitzness. We now provide an important example where generalized Lipschitzness holds.

\textbf{Logistic regression:} 
Consider doing logistic regression for multi-class classification with the cross-entropy loss, where $m$ is the number of classes. Suppose $\bm{x} \sim \mathcal{F}$ (with a `1' appended to account for the bias term) is the feature and $y \in [m]$ is the corresponding class number.
In \Cref{log-reg-asmp}, we show that $\|\nabla \ell_{\bm{w}}(\bm{w}, \bm{x}, y)\| \leq \sqrt{2} \|\bm{x}\|$. Thus, \Cref{asmp-lip-gen} holds with $G(\bm{x},y) = \sqrt{2}\|\bm{x}\|$ \textit{for any} $\mathcal{W}$.

\section{Clip Norm Selection for DP-SGD under Generalized Lipschitzness}
\label{sec:1}
{Here, we will provide a principled strategy to select the constant clip norm $\tau$ (which does not change across iterations) in DP-SGD (\Cref{alg:1}) under generalized Lipschitzness when the maximum per-sample Lipschitz constant is bounded, i.e., uniform Lipschitzness holds.} To that end, we assume the following.
\begin{assumption}
\label{asmp-lip}
\Cref{asmp-lip-gen} holds for the dataset $\mathcal{Z} = \{(\bm{x}_i, y_i)\}_{i=1}^n$ that we receive. Let $G_i = G(\bm{x}_i, y_i)$ for $i \in [n]$. Also, without loss of generality, the sample indices are arranged so that $0 < G_1 < \ldots < G_n < \infty$\footnote{We assume strict inequalities here because the probability measure of equality holding is zero. Also, we assume $G_1 > 0$, as otherwise $f_1$ is a constant function which is trivially minimized everywhere, and we can minimize $\frac{1}{n-1}\sum_{i=2}^n f_i(\bm{w})$ instead.}.
\end{assumption}
Thus, $\{G_i\}_{i=1}^n$ are the per-sample Lipschitz constants for the dataset $\mathcal{Z}$. For e.g., logistic regression with cross-entropy loss satisfies \Cref{asmp-lip} with $G_i = \sqrt{2}\|\bm{x}_i\|$ {(see discussion on logistic regression after \Cref{asmp-lip-gen})}.

Under \Cref{asmp-lip}, if we follow the approach of prior theoretical works such as \citet{bassily2014private}, then we would choose $G_n$ as the clip norm $\tau$ -- this is associated with zero bias (as no actual clipping occurs) but high noise variance, yielding a risk bound of $\mathcal{O}(G_n \varphi)$ in the convex case. While the dependence on $\varphi$ is tight \cite{bassily2014private}, it is not clear if $\tau = G_n$ leads to the best \textbf{constant factors} in the risk bound -- which is what makes a difference in practice. {Using Thm. \ref{thm-small-clip} of this work, \textit{we show that the best constant factors are obtained by choosing $\tau \leq G_1$ in convex problems with near-interpolation like conditions} (i.e., the training data can be almost perfectly fitted) while retaining the $\mathcal{O}(G_n \varphi)$ dependence. This is consistent with experiments in \Cref{expts}, where clip norms $\leq G_1$ perform the best. Intuitively, this happens because the high noise variance associated with large clip norms is more detrimental to convergence 
than the bias associated with small clip norms. 
Let us now talk about this result in more detail.}

\subsection{Convex Case}
\label{clip-norm-choice}
The results here are for a general convex constraint set $\mathcal{W}$ which can be $\mathbb{R}^d$ too. 
Before we can state our result, we need to introduce some definitions first. Let $\Psi := \text{arg min}_{\bm{w} \in \mathcal{W}} f(\bm{w})$ and $f_i^{*} = \min_{\bm{w} \in \mathcal{W}} f_i(\bm{w})$. 
\begin{definition}
\label{het-def}
For any $\bm{w}^{*} \in \Psi$, define $\Delta(\bm{w}^{*}) := \frac{1}{n}\sum_{i \in [n]} (f_i(\bm{w}^{*}) - f_i^{*})$.
\end{definition}

\begin{definition}
\label{may2-asmp-1}
Suppose Assumption \ref{asmp-lip} holds. For clip norm $\tau \in (0, G_n]$, define:
\begin{equation*}
    \alpha(\tau) := 
    \inf_{\bm{w} \in \mathcal{W} \setminus \Psi} \frac{\frac{1}{n}\sum_{i \in [n]}\min\big(\frac{1}{\tau},\frac{1}{G_i}\big) \big({f_i(\bm{w}) - f_i^{*}}\big)}{\frac{1}{n}\sum_{i \in [n]}\frac{f_i(\bm{w}) - f_i^{*}}{G_n}}.
\end{equation*}
Note that: 
\\
\textup{{(i)}} $\alpha(\tau) \geq 1$ for all $\tau \in (0, G_n]$ and $\alpha(G_n) = 1$.
\\
\textup{{(ii)}} $\alpha(\tau)$ is a non-increasing function of $\tau$.
\\
\textup{{(iii)}} $\alpha(\tau) = \alpha(G_1)$ for all $\tau \in (0,G_1]$.
\\
\textup{{(iv)}} $\alpha(G_1)$ is strictly greater than 1 \textit{unless} there exists a $\widetilde{\bm{w}}^{\ast}$ such that $\widetilde{\bm{w}}^{\ast}$ is a minimizer of $\{f_i\}_{i=1}^{n-1}$ but not of $f_n$.
\\
\textup{{(v)}} $\frac{G_n}{\tau \alpha(\tau)} \geq 1$ for all $\tau \in (0, G_n]$.
\\
\textup{{(vi)}} $\frac{G_n}{\tau \alpha(\tau)}$ is a non-increasing function of $\tau$.
\end{definition}

Let us see why $\alpha(\tau) \geq 1$ in \Cref{may2-asmp-1}. By definition, $f_i(\bm{w}) - f_i^{*} \geq 0$ for all $\bm{w} \in \mathcal{W}$. Now since $G_1 < \ldots < G_n$ (as per \Cref{asmp-lip}) and $\tau \leq G_n$, we have that: 
\begin{equation}
    \label{eq:may4-1}
    \min\Big(\frac{1}{\tau},\frac{1}{G_i}\Big) \big({f_i(\bm{w}) - f_i^{*}}\big) \geq \frac{f_i(\bm{w}) - f_i^{*}}{G_n} \text{ } \forall  i \in [n].
\end{equation}
Thus, $\alpha(\tau) \geq 1$ for all $\tau \leq G_n$. 
(ii) and (iii) are easy to verify using properties of $\min()$. Let us now discuss why (iv) must be true. For $\tau = G_1$, the only way equality will hold in \cref{eq:may4-1} for some $\bm{w} \notin \Psi$ is if $f_i(\bm{w}) = f_i^{*}$ $\forall$ $i \in [n-1]$ but $f_n(\bm{w}) > f_n^{*}$; (iv) follows from this. (v) and (vi) can be checked by just writing out the expression for $\frac{G_n}{\tau \alpha(\tau)}$ and then using properties of $\min()$.
We are now ready to present our result for the convex case. 

\begin{theorem}[\textbf{Convex Case}]
\label{thm-small-clip}
Suppose each $f_i$ is convex, $\mathcal{W}$ is a convex set (which can be $\mathbb{R}^d$) and Assumption \ref{asmp-lip} holds. Fix some $C > 0$. In Alg. \ref{alg:1}, set $T = \frac{1}{3 \varphi^2}$ and $\eta_t = \frac{3 C \varphi}{2 \tau}$ $\forall$ $t$ for clip norm $\tau$. 
Then, DP-SGD has the following optimization risk bound as a function of $\tau \in (0, G_n]$:
\begin{multline}
    \label{eq:may6-1}
    \textup{OR}(T) \leq \underbrace{\frac{1}{\alpha(\tau)}\Bigg(\bigg(\frac{\|\bm{w}_0 - \bm{w}^{*}\|^2}{C} + 2 C\bigg) G_n \varphi\Bigg)}_\textup{(A)} \\ + \underbrace{\Bigg(\frac{G_n}{\tau \alpha(\tau)} - 1\Bigg) {\Delta(\bm{w}^{*})}}_\textup{(B)}, 
\end{multline}
where $\Delta(\bm{w}^{*}) \geq 0$ and $\alpha(\tau) \geq 1$ are as defined in Definitions \ref{het-def} and \ref{may2-asmp-1}, respectively.
\end{theorem}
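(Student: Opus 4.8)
The plan is to run a standard projected-SGD potential argument on $\|\bm{w}_t - \bm{w}^{*}\|^2$ for a fixed minimizer $\bm{w}^{*} \in \Psi$, but with the per-sample clip factors routed so as to produce the two separate terms (A) and (B). First I would use non-expansiveness of $\Pi_{\mathcal{W}}$ (valid since $\bm{w}^{*} \in \mathcal{W}$) to obtain the one-step inequality $\|\bm{w}_{t+1} - \bm{w}^{*}\|^2 \le \|\bm{w}_t - \bm{w}^{*}\|^2 - 2\eta_t\langle \bm{g}_t, \bm{w}_t - \bm{w}^{*}\rangle + \eta_t^2\|\bm{g}_t\|^2$, then take expectation conditioned on $\bm{w}_t$. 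Because $\bm{\zeta}_t$ is zero-mean and independent and each sample enters $\mathcal{S}_t$ independently with probability $b/n$, the conditional mean of $\bm{g}_t$ equals $\bar{\bm{g}}_t := \frac{1}{n}\sum_{i=1}^n \mathrm{clip}(\nabla f_i(\bm{w}_t),\tau)$, so the cross term reduces to $\langle \bar{\bm{g}}_t, \bm{w}_t - \bm{w}^{*}\rangle$.

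The crux, and the step I expect to be the main obstacle, is lower-bounding $\langle \bar{\bm{g}}_t, \bm{w}_t - \bm{w}^{*}\rangle$ so as to expose $\alpha(\tau)$ and $\Delta(\bm{w}^{*})$. Write the clip factor $c_i := \min(1, \tau/\|\nabla f_i(\bm{w}_t)\|) \in (0,1]$. Multiplying the convexity inequality $\langle \nabla f_i(\bm{w}_t), \bm{w}_t - \bm{w}^{*}\rangle \ge f_i(\bm{w}_t) - f_i(\bm{w}^{*})$ by $c_i > 0$ and averaging gives $\langle \bar{\bm{g}}_t, \bm{w}_t - \bm{w}^{*}\rangle \ge \frac{1}{n}\sum_i c_i\big(f_i(\bm{w}_t) - f_i(\bm{w}^{*})\big)$. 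The key decomposition is $f_i(\bm{w}_t) - f_i(\bm{w}^{*}) = (f_i(\bm{w}_t) - f_i^{*}) - (f_i(\bm{w}^{*}) - f_i^{*})$, where both parentheses are nonnegative. On the ``progress'' term I would use $\|\nabla f_i(\bm{w}_t)\| \le G_i$ (Asmp.~\ref{asmp-lip}), which forces $c_i \ge \min(1,\tau/G_i) = \tau\min(1/\tau, 1/G_i)$, and then invoke the definition of $\alpha(\tau)$ directly to get $\frac{1}{n}\sum_i c_i(f_i(\bm{w}_t) - f_i^{*}) \ge \frac{\tau\alpha(\tau)}{G_n}\big((f(\bm{w}_t) - f(\bm{w}^{*})) + \Delta(\bm{w}^{*})\big)$, using the identity $\frac{1}{n}\sum_i (f_i(\bm{w}_t) - f_i^{*}) = (f(\bm{w}_t) - f(\bm{w}^{*})) + \Delta(\bm{w}^{*})$. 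On the ``gap'' term I would use only $c_i \le 1$ and nonnegativity to get $-\frac{1}{n}\sum_i c_i(f_i(\bm{w}^{*}) - f_i^{*}) \ge -\Delta(\bm{w}^{*})$. Writing $\beta := \tau\alpha(\tau)/G_n$ (which satisfies $\beta \le 1$ by Def.~\ref{may2-asmp-1}(v)), these combine to $\langle \bar{\bm{g}}_t, \bm{w}_t - \bm{w}^{*}\rangle \ge \beta\big(f(\bm{w}_t) - f(\bm{w}^{*})\big) - (1-\beta)\Delta(\bm{w}^{*})$. Getting the two bounds on $c_i$ to point in the right directions simultaneously, tight on the progress term and loose on the gap term, is the delicate part.

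For the second-moment term I would use $\|\mathrm{clip}(\cdot,\tau)\| \le \tau$ together with independence of the noise to bound $\mathbb{E}[\|\bm{g}_t\|^2 \mid \bm{w}_t] \le M$ with $M = \mathcal{O}(\tau^2) + d\sigma_n^2$; invoking Thm.~\ref{thm-dp} with $T = 1/(3\varphi^2)$ gives $d\sigma_n^2 = \tau^2/3$, so $M = \mathcal{O}(\tau^2)$ with an explicit constant. Substituting the inner-product lower bound and this second-moment bound into the one-step inequality, rearranging to isolate $f(\bm{w}_t) - f(\bm{w}^{*})$, taking total expectation, and summing over $t = 0,\dots,T-1$ telescopes the potential to
\begin{equation*}
2\eta\beta\sum_{t=0}^{T-1}\mathbb{E}\big[f(\bm{w}_t) - f(\bm{w}^{*})\big] \le \|\bm{w}_0 - \bm{w}^{*}\|^2 + 2\eta(1-\beta)T\,\Delta(\bm{w}^{*}) + \eta^2 T M.
\end{equation*}
Since $\bm{w}_\text{priv}$ is a uniform random iterate, dividing by $2\eta\beta T$ makes the left side exactly $\mathrm{OR}(T)$. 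Plugging in $\eta = 3C\varphi/(2\tau)$ and $T = 1/(3\varphi^2)$, the first term becomes $\frac{\|\bm{w}_0 - \bm{w}^{*}\|^2 G_n\varphi}{C\,\alpha(\tau)}$, the variance term becomes $\frac{2C G_n\varphi}{\alpha(\tau)}$ (supplying the $2C$ inside term (A)), and the middle term becomes $\frac{1-\beta}{\beta}\Delta(\bm{w}^{*}) = \big(\frac{G_n}{\tau\alpha(\tau)} - 1\big)\Delta(\bm{w}^{*})$, which is precisely term (B). Collecting the first two into $\frac{1}{\alpha(\tau)}\big((\frac{\|\bm{w}_0-\bm{w}^{*}\|^2}{C} + 2C)G_n\varphi\big)$ yields \eqref{eq:may6-1}. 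The constant in $M$ is what pins down the exact $2C$ factor, so the only arithmetic I would watch carefully is the second-moment constant under Poisson subsampling.
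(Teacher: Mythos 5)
Your proposal is correct and follows essentially the same route as the paper's proof: non-expansive projection, conditional unbiasedness of $\bm{g}_t$ for the average clipped gradient, per-sample convexity weighted by the clip factors, the decomposition $f_i(\bm{w}_t)-f_i(\bm{w}^{*}) = (f_i(\bm{w}_t)-f_i^{*})-(f_i(\bm{w}^{*})-f_i^{*})$ with $c_i \geq \min(1,\tau/G_i)$ on the first piece and $c_i \leq 1$ on the second, and the second-moment bound $\mathbb{E}[\|\bm{g}_t\|^2] \leq 2\tau^2(1+T\varphi^2)$ that produces the $2C$ constant. The only (immaterial) difference is ordering: you invoke $\alpha(\tau)$ and the decomposition inside the one-step recursion before telescoping, while the paper telescopes first and applies the decomposition and the definition of $\alpha(\tau)$ to the resulting averaged bound.
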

In \cref{eq:may6-1}, term (A) is a non-decreasing function of $\tau$; this follows from (ii) in \Cref{may2-asmp-1}. But term (B) $\geq 0$ is a non-increasing function of $\tau$; this follows from (v) and (vi) in \Cref{may2-asmp-1}. Thus, increasing $\tau$ increases (A) but reduces (B), and vice-versa. So in general, there is a \textbf{tradeoff} between the values of (A) and (B) while setting $\tau$. To provide a recommendation for the choice of $\tau$, we shall focus on a commonly occurring scenario.

\textbf{Interpolation:} Modern \textbf{over-parameterized} ML models are able to perfectly fit all the training data \cite{zhang2021understanding,ma2018power}. In such cases, it is reasonable to make the following \textit{relaxed interpolation} assumption (based on Assumption 1 of \citet{ma2018power}).
\begin{assumption}[\textbf{Relaxed Interpolation}]
\label{overparam}
There exists some $\bm{w}^{*} \in \textup{argmin}_{\bm{w} \in \mathcal{W}} f(\bm{w})$ such that $\Delta(\bm{w}^{*}) = \frac{1}{n}\sum_{i \in [n]} (f_i(\bm{w}^{*}) - f_i^{*}) \approx 0$.
\end{assumption}
\vspace{-0.1 cm}
Assumption 1 of \citet{ma2018power} would imply the existence of some $\bm{w}^{*} \in \textup{argmin}_{\bm{w} \in \mathcal{W}} f(\bm{w})$ such that $\Delta(\bm{w}^{*}) = 0$ (as $\bm{w}^{*} \in \textup{argmin}_{\bm{w} \in \mathcal{W}} f_i(\bm{w})$ $\forall$ $i \in [n]$, per their assumption). One can expect \Cref{overparam} to hold for \textit{separable} classification problems even without over-parameterization.

\textit{Under \Cref{overparam}}, the dominant term in the risk bound of \Cref{thm-small-clip} (Eq. (\ref{eq:may6-1})) would be (A). As discussed previously, (A) is a non-decreasing function of $\tau$. Thus, the lowest risk bound in \cref{eq:may6-1} is obtained for $\tau \in (0,G_1]$. Also recall that $\alpha(\tau) = \alpha(G_1)$ $\forall \text{ } \tau \in (0,G_1]$ (see (iii) in \Cref{may2-asmp-1}).  Since $\alpha(G_n) = 1$, there is an $\alpha(G_1)$-fold improvement in the risk bound with $\tau \leq G_1$ compared to the naive choice of $\tau = G_n$ when $\Delta(\bm{w}^{*}) = 0$.

\begin{remark}[\textbf{Recommendation}]
\label{reco}
For settings where interpolation holds (such as over-parameterization), we recommend tuning the clip norm only till values up to the minimum per-sample Lipschitz constant.
\end{remark}
{We assume that we have some prior estimate of the minimum Lipschitz constant, just like prior works on uniform Lipschitzness assume that the maximum Lipschitz constant is known. For e.g., it can be estimated privately by applying the {Report Noisy Max} method \cite{dwork2014algorithmic} on the negative Lipschitz constants.}

\subsection{Empirical Results}
\label{expts}
We consider the base problem of private multinomial logistic regression (a convex problem satisfying Asmp. \ref{asmp-lip}) to corroborate our theory and recommendation. But we focus on the more powerful application of privately training (only) the last (softmax) layer of deep networks pre-trained on public data which is equivalent to performing logistic regression with features being the previous layer's outputs. Training a softmax layer over {pre-trained networks} is a popular and effective scheme for private training \cite{de2022unlocking, mehta2022large}. This is because fine-tuning all the layers may not significantly improve performance but the extra parameters increase the privacy cost. In addition, training only the last layer is computationally much cheaper than fine-tuning all the layers with DP-SGD (due to per-sample clipping).

{Our experiments here are conducted on four vision datasets available in Torchvision, viz., Caltech-256 with 257 classes, Food-101 with 101 classes, CIFAR-100 with 100 classes and CIFAR-10 with 10 classes, and two language datasets available in Hugging Face, viz., TweetEval Emoji with 20 classes and Emotion with 6 classes. For Caltech-256 and Food-101 (resp., CIFAR-100 and CIFAR-10), we use 512-dimensional features obtained from the pre-softmax layer of a pre-trained ResNet-34 (resp., ResNet-18) model on ImageNet for logistic regression, which is equivalent to training the last (i.e., softmax) layer of a pre-trained ResNet-34 (resp., ResNet-18) model. For the language datasets, we use 768-dimensional features obtained from a pre-trained DistilBERT model \cite{sanh2019distilbert} for logistic regression, which is the same as training a linear layer on top of a pre-trained DistilBERT model.}
As mentioned after \Cref{asmp-lip}, the per-sample Lipschitz constant is equal to $\sqrt{2}$ times the norm of the sample's feature vector (with a `1' appended to incorporate the bias term). 
We consider three privacy levels - $(2,10^{-5})$-DP, $(4,10^{-5})$-DP and $(6,10^{-5})$-DP, with {batch size = 500}. We test several values of the clip norm $\tau$, viz., the $0^{\text{th}}$, $10^{\text{th}}$, $20^{\text{th}}$, $40^{\text{th}}$, $80^{\text{th}}$ and $100^{\text{th}}$ percentile of the per-sample Lipschitz constants (as well as some values smaller than the $0^{\text{th}}$ percentile). Note that $G_1$ and $G_n$ correspond to the $0^{\text{th}}$ and $100^{\text{th}}$ percentiles, respectively. For each value of $\tau$, we tune over several values of the constant learning rate $\eta$, viz., $\{0.0001, 0.0003, 0.0006, 0.001, 0.003, 0.006, 0.01, 0.03, 
\\
0.06, 0.1, 0.3, 0.6, 1, 3, 6, 10\}$. PyTorch's Opacus library \cite{opacus} is used for private training. 

In \Cref{fig:results}, we plot the {best test accuracy} obtained for different values of $\tau$ (by tuning over $\eta$) averaged over the last 5 epochs and across 3 different runs. The figure caption discusses the results. {The exact accuracy values are listed in \Cref{expts-table}.
Also, in \Cref{extra-expts}, we show empirical 
results on two more datasets, viz., EMNIST and Fashion-MNIST.
}

\begin{figure*}[!htb]
\centering 
\subfloat[Caltech-256]{
    \label{caltech256}
	\includegraphics[width=0.35\textwidth]{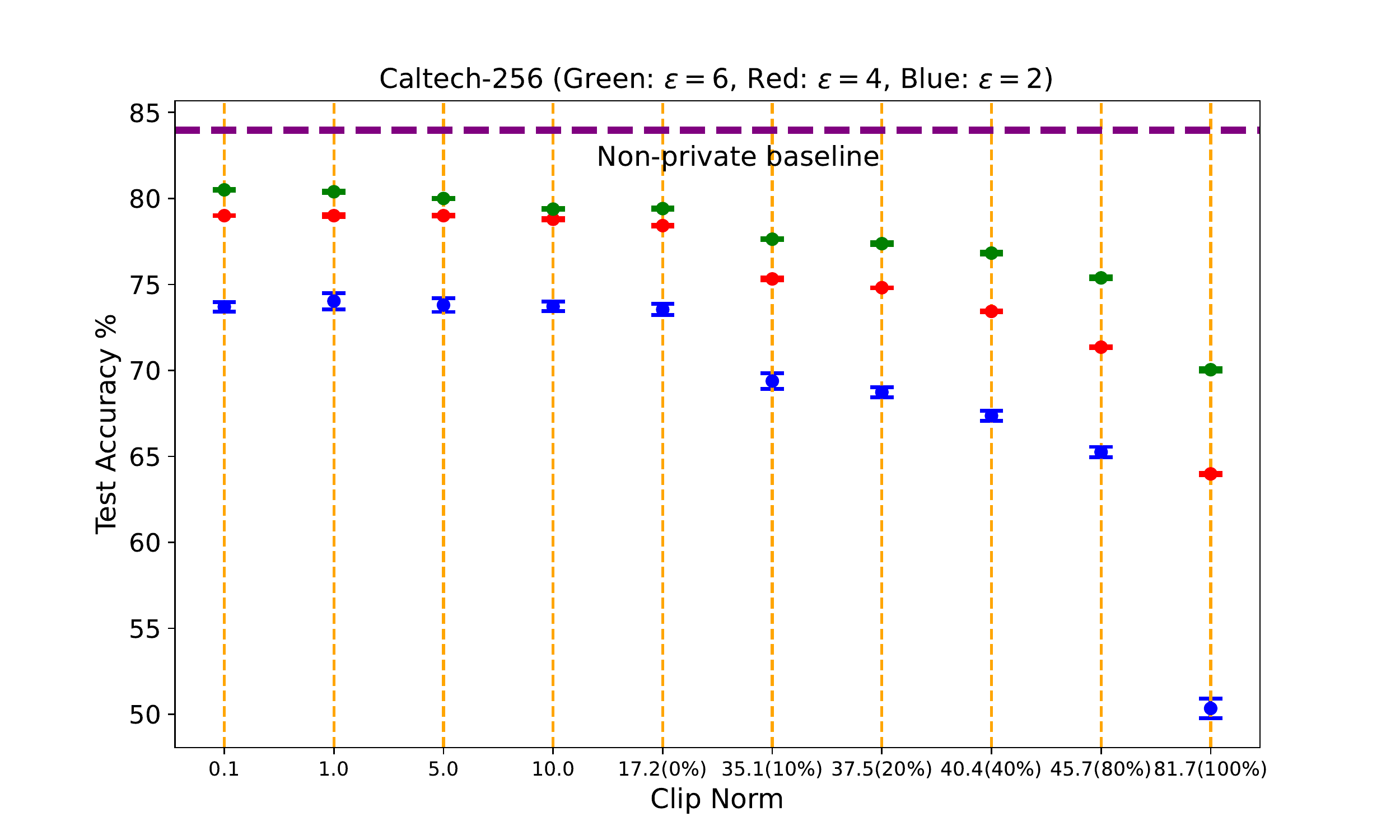}
	}
\kern-2em
\subfloat[Food-101]{
    \label{food101}
	\includegraphics[width=0.35\textwidth]{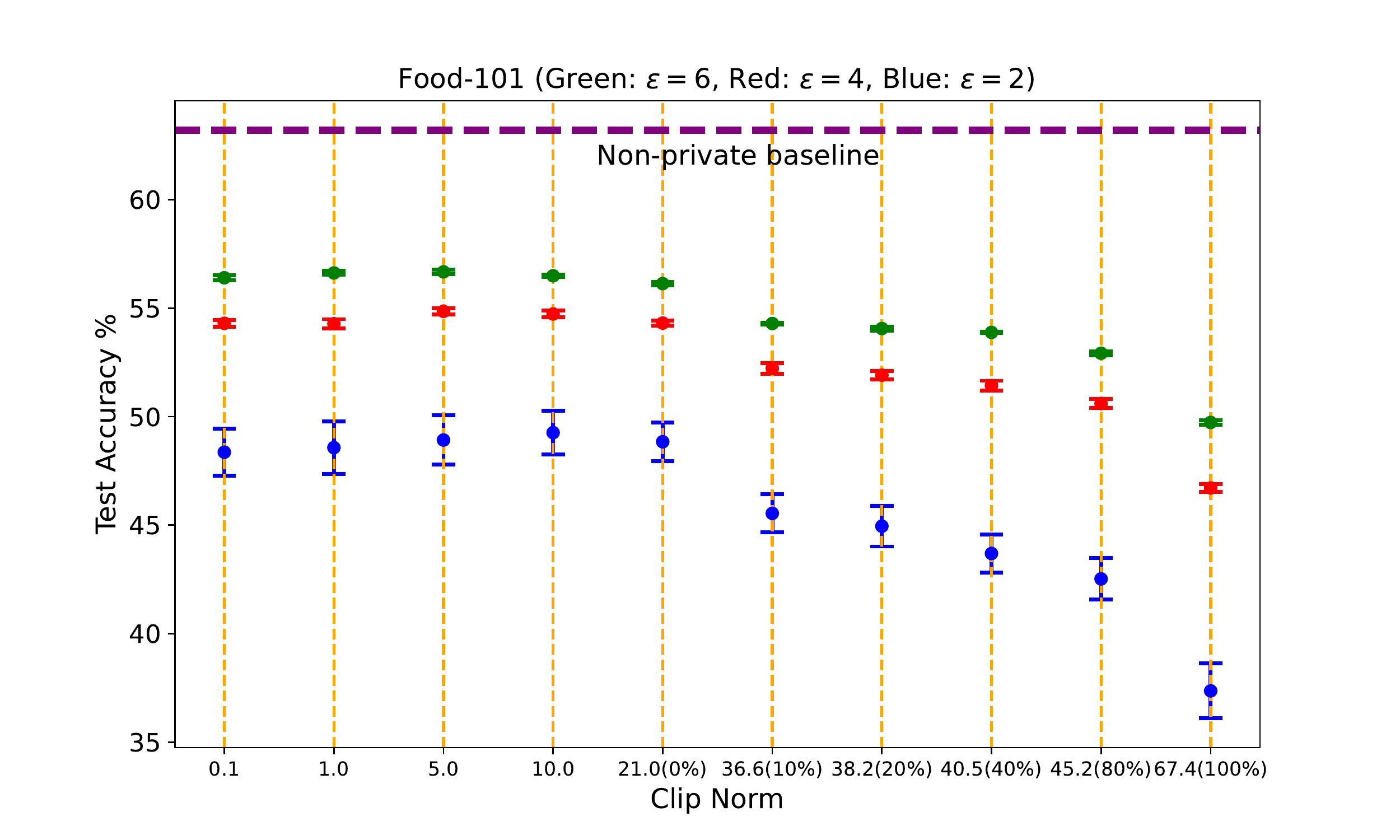}
	}
\kern-2em
\subfloat[CIFAR-100]{
    \label{cifar100}
	\includegraphics[width=0.35\textwidth]{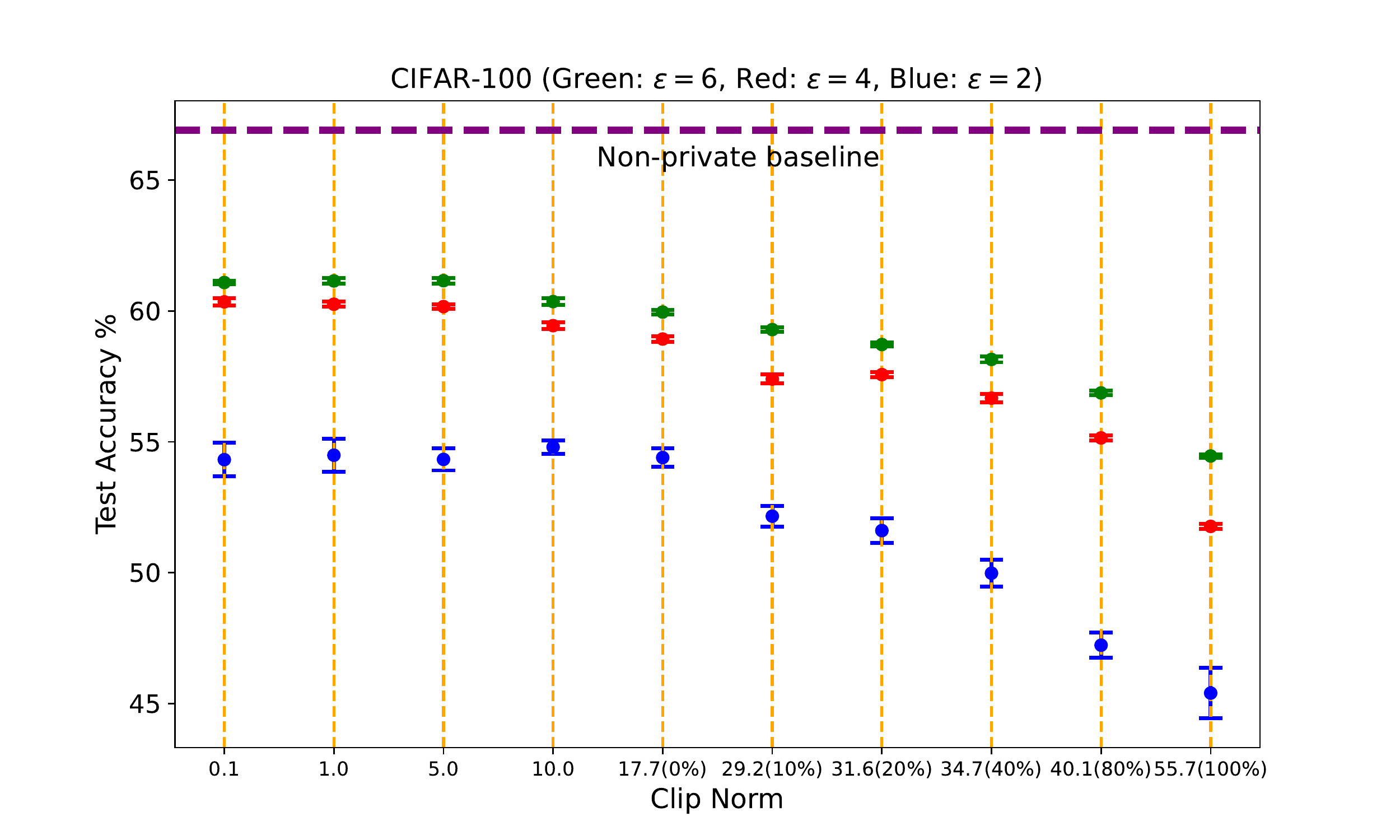}
	} 
\\
\subfloat[CIFAR-10]{
    \label{cifar10}
	\includegraphics[width=0.35\textwidth]{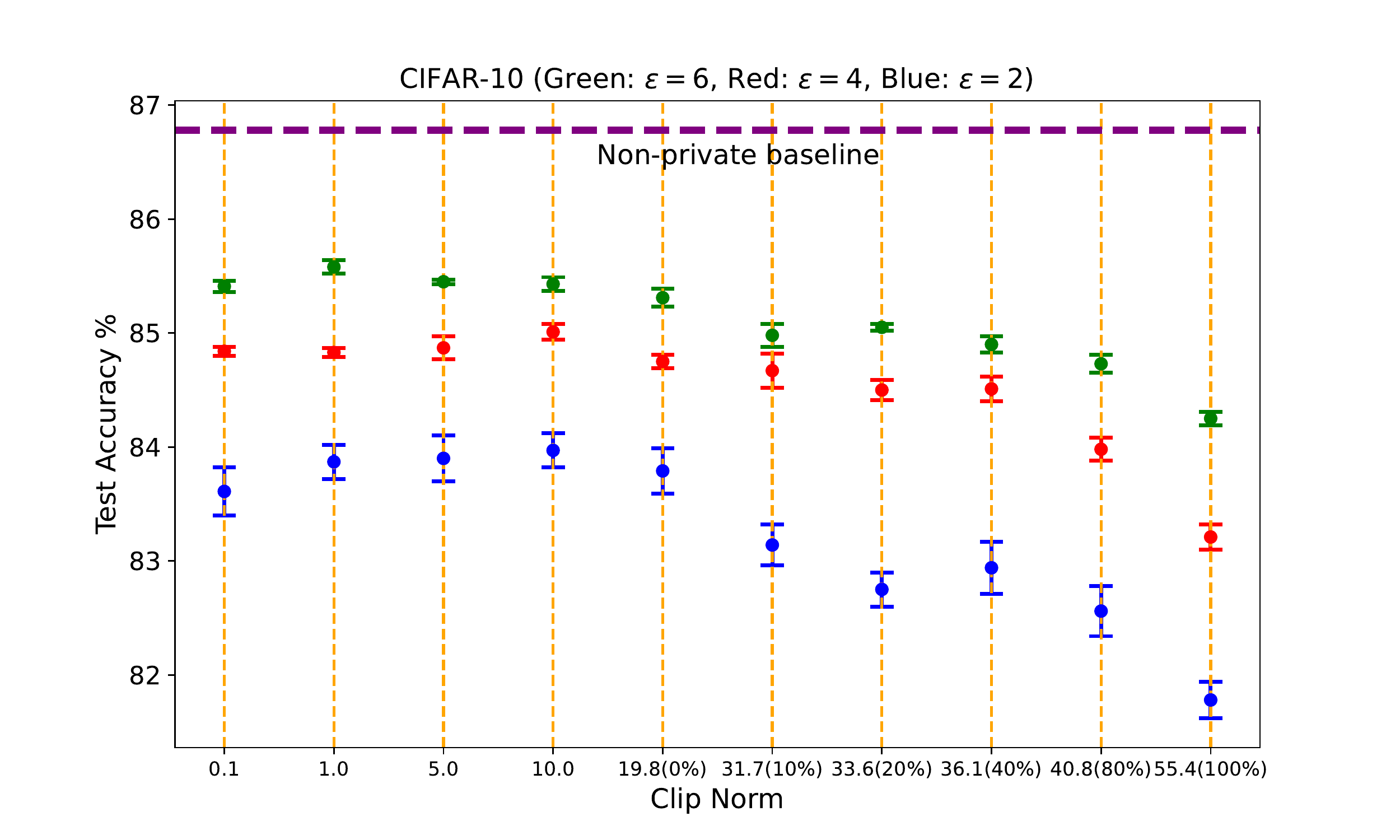}
	} 
\kern-2em
\subfloat[TweetEval Emoji]{
    \label{emoji}
	\includegraphics[width=0.35\textwidth]{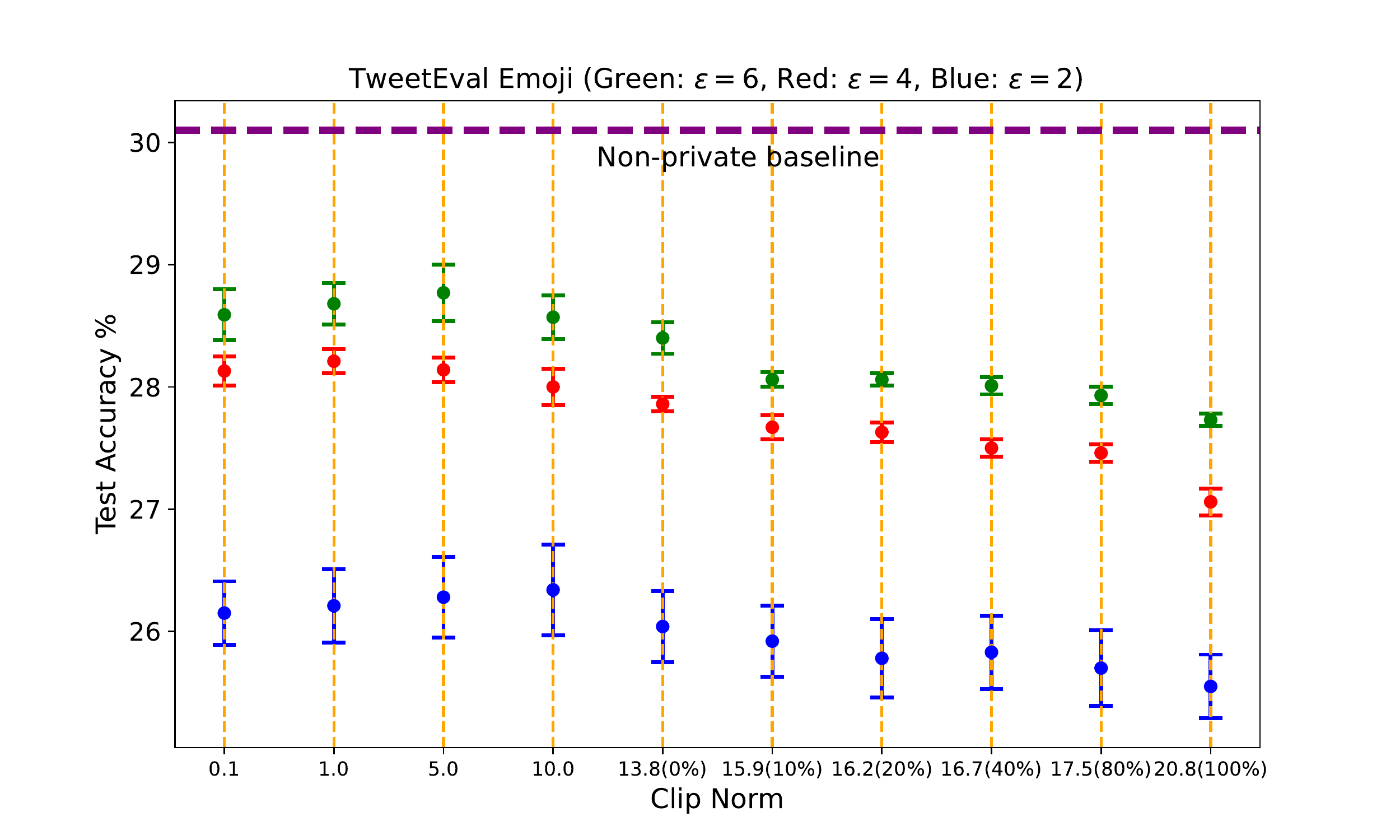}
	} 
\kern-2em
\subfloat[Emotion]{
    \label{emotion}
	\includegraphics[width=0.35\textwidth]{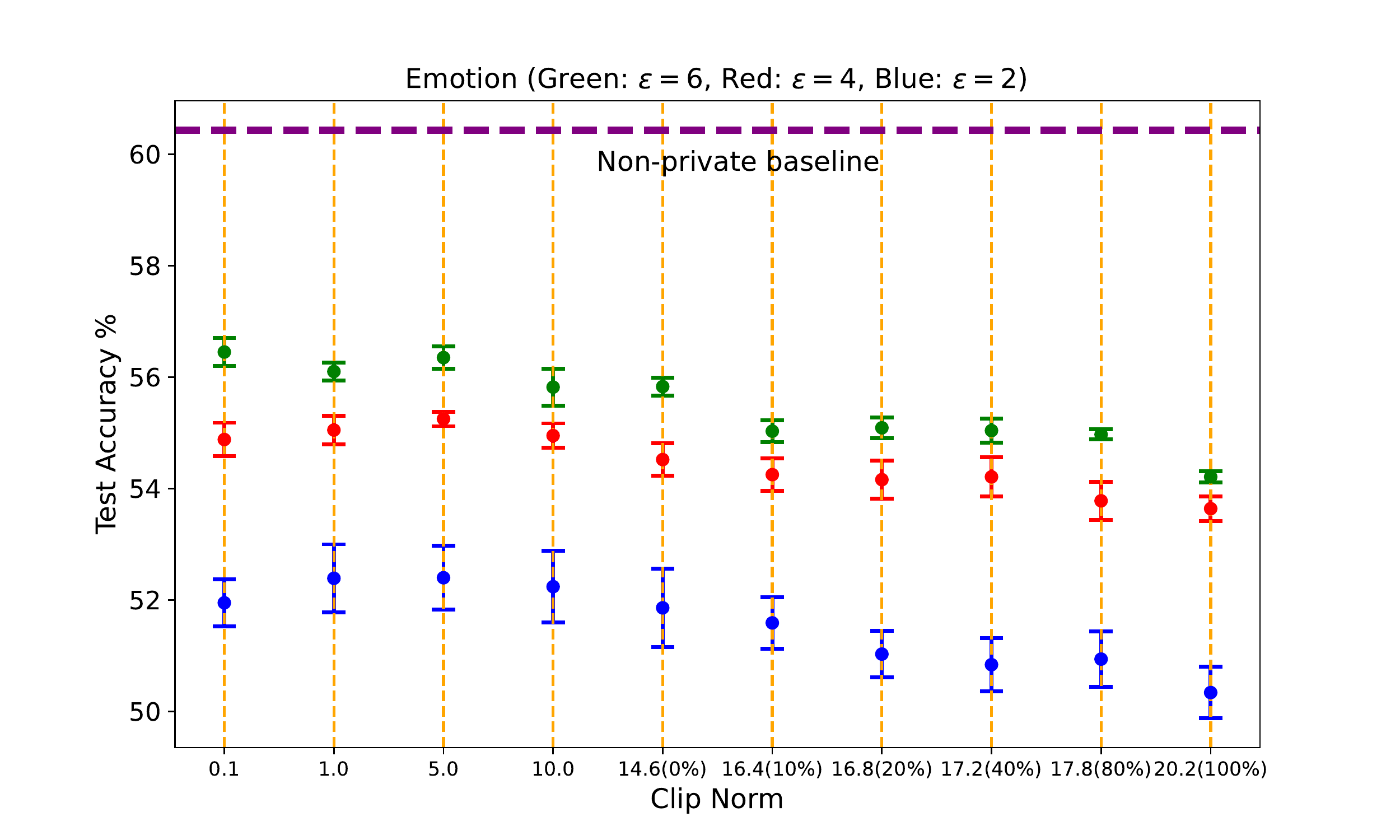}
	} 
\caption{Average test accuracy (depicted by the blobs) $\pm$ 1 standard deviation (depicted by the bars around the blobs) in the last 5 epochs for different values of clip norm $\tau$. \enquote{\%} in the x-axis stands for percentile. Observe that \textit{clip norms $\leq G_1$ ($0^{\text{th}}$ percentile or minimum of per-sample Lipschitz constants) generally perform better than clip norms $> G_1$}. Specifically, the performance with $\tau = G_1$ is significantly better than that with $\tau = G_n$ ($100^{\text{th}}$ percentile or maximum). 
Concretely, for Caltech-256 and Food-101 (datasets with the largest no. of classes), in the case of $\varepsilon = 2$, the mean accuracy with $\tau = G_1$ is better than that with $\tau = G_n$ by $>$ 23\% and  11\%, respectively; the corresponding improvement in the case of $\varepsilon = 4$ is $>$ 15\% and 7.5\%, respectively. These observations are consistent with our theory and recommendation in \Cref{clip-norm-choice}.}
\label{fig:results}
\end{figure*}

{Now that we have corroborated our insight from \Cref{clip-norm-choice} that clip norms $\leq$ $G_1$ yield the best performance, {we shall show some results when $G_1$ is first estimated \textit{privately} using the Report Noisy Max method (applied on the negative Lipschitz constants) and then used in DP-SGD as the clip norm. We abbreviate the Report Noisy Max method as RNMM subsequently.} We show results for the first three datasets in \Cref{fig:results}, viz., Caltech-256, Food-101 and CIFAR-100. We set $\varepsilon = 0.3$ for RNMM and $\varepsilon=\{1.7, 3.7, 5.7\}$ for DP-SGD so that the \textit{total} privacy budget is $\varepsilon=\{2, 4, 6\}$ for which we showed results earlier \textit{without} RNMM (where we did \textit{not} estimate $G_1$ privately).\footnote{We did not optimize the splitting of privacy budget between RNMM and DP-SGD; this is something that can be investigated in future work.} \Cref{tab-new=rnmm} compares the results with and without RNMM. The table caption discusses the results. 
}

{
\begin{table*}[!htb]
\centering
\caption{{Average test accuracy $\pm$ 1 standard deviation in the last 5 epochs (i) \textit{with} and (ii) \textit{without} RNMM as described in the last paragraph of \Cref{expts}; the latter is the best accuracy (w.r.t. the clip norm) among the results in \Cref{fig:results} (which are without RNMM). Note that the difference in accuracy with and without RNMM \textit{reduces as $\varepsilon$ increases}. Specifically, for $\varepsilon = \{4, 6\}$, RNMM does not harm performance too much.}}
\vspace{0.2 cm}
\resizebox{0.95\textwidth}{!}{
\begin{tabular}{|c|c|c|c|c|c|c|}
\hline
\textbf{Dataset} & \makecell{$\varepsilon = 2$ \\ \textbf{w/} RNMM} & \makecell{$\varepsilon = 2$ \\ \textbf{w/o} RNMM} & \makecell{$\varepsilon = 4$ \\ \textbf{w/} RNMM} & \makecell{$\varepsilon = 4$ \\ \textbf{w/o} RNMM} & \makecell{$\varepsilon = 6$ \\ \textbf{w/} RNMM} & \makecell{$\varepsilon = 6$ \\ \textbf{w/o} RNMM}
\\
\hline
Caltech-256 & $70.03\pm0.66$ & $74.03\pm0.47$ & $78.27\pm0.05$ & $79.00\pm0.04$ & $79.53\pm0.07$ & $80.50\pm0.04$
\\
\hline
Food-101 & $45.84\pm2.00$ & $49.26\pm1.00$ & $54.69\pm0.14$ & $54.85\pm0.14$ & $56.30\pm0.12$ & $56.67\pm0.10$
\\
\hline
CIFAR-100 & $52.89\pm1.02$ & $55.10\pm0.26$ & $60.19\pm0.09$ & $60.35\pm0.13$ & $60.89\pm0.04$ & $61.16\pm0.11$
\\
\hline
\end{tabular}
}
\label{tab-new=rnmm}
\end{table*}
}

\section{Convergence of DP-SGD under Heavy-Tailed Lipschitz Constants}
\label{non-lip}
{When uniform Lipschitzness holds, the optimization risk is $\mathcal{O}(\varphi)$ in the convex constrained case \cite{bassily2014private} as well as the smooth nonconvex unconstrained case \cite{wang2018differentially}. In this section, we shall quantify the optimization risk under generalized Lipschitzness (\Cref{asmp-lip-gen}) with the relaxation of the per-sample Lipschitz constants being \textit{heavy-tailed}, instead of being bounded; we \textbf{do not assume interpolation} here. More specifically, we assume that the per-sample Lipschitz constants have bounded $k^\text{th}$ uncentered moment, for some $k>1$, w.r.t. the distribution $\mathcal{D}$. This is formally stated next.
\begin{assumption}[\textbf{Bounded $k^{\text{th}}$ Moment}]
\label{dec23-asmp1}
Suppose \Cref{asmp-lip-gen} holds. For some $k > 1$ and $G > 0$, we have: 
\begin{equation*}
    \Big(\mathbb{E}_{(\bm{x},y) \sim \mathcal{D}}\Big[\big(G(\bm{x}, y)\big)^k\Big]\Big)^{1/k} \leq G.
\end{equation*}
\end{assumption}
Let us revisit the logistic regression example that we discussed after \Cref{asmp-lip-gen} to see why \Cref{dec23-asmp1} is weaker than uniform Lipschitzness. We saw that \Cref{asmp-lip-gen} holds with $G(\bm{x},y) = \sqrt{2}\|\bm{x}\|$ here. Now, if the feature distribution $\mathcal{F}$ is such that its support includes \textit{unbounded} vectors but $\mathbb{E}_{\mathcal{F}}[\|\bm{x}\|^p] \leq G_{\mathcal{F}}^p < \infty$ for some $p > 1$, then \Cref{dec23-asmp1} holds here for $k=p$ and $G = \sqrt{2} G_{\mathcal{F}}$ but uniform Lipschitzness does not hold. However, if the support of $\mathcal{F}$ includes only \textit{bounded} vectors, then both \Cref{dec23-asmp1} and uniform Lipschitzness hold. Also note that uniform Lipschitzness is a special case of \Cref{dec23-asmp1} with $k = \infty$ and a finite $G$.}

{\Cref{dec23-asmp1}\footnote{{This is of significance even in non-private optimization; for e.g., \citet{zhang2020adaptive} show that the gradients of Transformer models such as BERT are heavy-tailed.}} 
is similar to the bounded moment assumption made in \citet{wang2020differentially, kamath2021improved,lowy2022private} for private \textit{stochastic convex optimization} (SCO). But \citet{wang2020differentially, kamath2021improved} assume coordinate-wise bounded moments which we do not, \citet{wang2020differentially} only consider $k=2$ and \citet{kamath2021improved} assume bounded \textit{centered} (around the mean) moment. All these papers provide results for the convex case \textit{within a bounded convex set} (i.e., $\mathcal{W}$ is bounded); we shall focus on the \textbf{unconstrained} (i.e., $\mathcal{W} = \mathbb{R}^d$) \textbf{convex} case here, which is \textit{more practical and harder} (w.r.t. analysis). For completeness, we include a result for ERM in the \textit{constrained} convex case in \Cref{non-lip-cvx} which matches the bound of \citet{kamath2021improved} for SCO. Moreover, we also present a result for the \textbf{unconstrained smooth nonconvex} case; these works do not provide any results for the smooth nonconvex case.}

Before we present our results for the \textit{unconstrained convex} case, let us first discuss the main technical difficulty compared to the \textit{constrained convex} case. The overall optimization bias in the convex case depends on the bias in mean gradient estimation induced due to clipping as well as on the distance of the current point ($\bm{w}_t$) from the optimal point ($\bm{w}^{*}$), i.e., $\|\bm{w}_t - \bm{w}^{*}\|$ at iteration $t$. 
In the constrained convex case, $\|\bm{w}_t - \bm{w}^{*}\|$ can be easily bounded by the diameter of the constraint set. However, in the unconstrained case, bounding $\|\bm{w}_t - \bm{w}^{*}\|$ needs extra work; surprisingly, this has not been analyzed in prior work on DP-SGD. 
Now, we present our result under \Cref{dec23-asmp1} in the \textit{unconstrained} \textit{convex} case. 
\begin{theorem}[\textbf{Unconstrained Convex Case}]
\label{thm1-jan19}
Suppose \Cref{dec23-asmp1} holds, $f$ is convex and $\mathcal{W} = \mathbb{R}^d$. Fix some $\gamma \in (0,1)$ and $C > 0$. In \Cref{alg:1}, set $T = {\varphi^{-2}}$, $\tau = \mathcal{O}\big({G} \gamma^{-\frac{1}{k}} \varphi^{-\frac{2}{k+1}}\big)$ and $\eta_t = \mathcal{O}\big(C \gamma^{\frac{1}{k}} G^{-1} \varphi^{1+\frac{2}{k+1}}\big)$ for all $t < T$. Then with a probability of at least $(1 - \gamma)$ which is w.r.t. the random dataset $\mathcal{Z}$ that we obtain, DP-SGD (\Cref{alg:1}) has the following guarantee:
\begin{equation*}
    \textup{OR}(T) \leq \mathcal{O}\Bigg(\frac{G}{\gamma^{1/k}}\Bigg(\frac{\|\bm{w}_0 - \bm{w}^{*}\|^2}{C} + C \Bigg)\Bigg){\varphi^{(1 - \frac{2}{k+1})}}.
\end{equation*}
\end{theorem}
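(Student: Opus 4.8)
The plan is to recast DP-SGD (\Cref{alg:1}) as a \emph{biased} stochastic gradient method and run a convexity-based descent analysis, treating the clipping bias and the injected Gaussian noise as separate error sources. Writing $\bar{\bm{g}}_t := \frac{1}{n}\sum_{i=1}^n \mathrm{clip}(\nabla f_i(\bm{w}_t),\tau) = \mathbb{E}[\bm{g}_t\mid\bm{w}_t]$ for the expected clipped update, I would decompose $\bm{g}_t = \nabla f(\bm{w}_t) + \bm{b}_t + (\bm{g}_t - \bar{\bm{g}}_t)$, where $\bm{b}_t := \bar{\bm{g}}_t - \nabla f(\bm{w}_t)$ is the clipping bias and the last term is zero-mean (subsampling noise plus the DP noise $\bm{\zeta}_t$). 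Since $\mathcal{W}=\mathbb{R}^d$ there is no projection to worry about.

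First I would control the bias uniformly in $t$. Because $\|\nabla f_i(\bm{w})\|\le G_i := G(\bm{x}_i,y_i)$ for all $\bm{w}$ by \Cref{dec23-asmp1}, clipping only affects samples with $G_i>\tau$, so $\|\bm{b}_t\|\le\frac{1}{n}\sum_{i:G_i>\tau}(G_i-\tau)\le\frac{1}{\tau^{k-1}}\cdot\frac{1}{n}\sum_{i=1}^n G_i^k$, using $G_i\le G_i(G_i/\tau)^{k-1}$ on the truncated set. Since $\mathbb{E}_{\mathcal{Z}}[\frac{1}{n}\sum_i G_i^k]\le G^k$, Markov's inequality gives, with probability at least $1-\gamma$ over the draw of $\mathcal{Z}$, a uniform bound $\|\bm{b}_t\|\le B := G^k/(\gamma\tau^{k-1})$. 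Substituting $\tau=\mathcal{O}(G\gamma^{-1/k}\varphi^{-2/(k+1)})$ turns this into $B=\mathcal{O}(G\gamma^{-1/k}\varphi^{2(k-1)/(k+1)})$; the factor $\gamma^{-1/k}$ in the final bound arises from combining the $\gamma^{-1}$ of Markov with the $\gamma^{-1/k}$ scaling of $\tau$.

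Next I would bound the second moment: each clipped per-sample gradient has norm $\le\tau$, so the clipped (subsampled) part of $\bm{g}_t$ has squared norm $\mathcal{O}(\tau^2)$, while the DP noise contributes $d\sigma_n^2$. Substituting $\sigma_n^2$ from \Cref{thm-dp} and using $T=\varphi^{-2}$ with \cref{eq:def-vaphi} gives $d\sigma_n^2 = T\tau^2\varphi^2 = \tau^2$, hence $\mathbb{E}[\|\bm{g}_t\|^2\mid\bm{w}_t]=\mathcal{O}(\tau^2)$. The standard one-step inequality for $\|\bm{w}_{t+1}-\bm{w}^*\|^2$, combined with convexity ($\langle\nabla f(\bm{w}_t),\bm{w}_t-\bm{w}^*\rangle\ge f(\bm{w}_t)-f(\bm{w}^*)$) and Cauchy--Schwarz on the bias term, then telescopes (recall $\bm{w}_\mathrm{priv}$ is a uniform iterate, matching \Cref{def-cor}) to
\begin{equation*}
\mathrm{OR}(T)\le \frac{\|\bm{w}_0-\bm{w}^*\|^2}{2\eta T} + \mathcal{O}(\eta\tau^2) + B\max_{t<T}\sqrt{\mathbb{E}\|\bm{w}_t-\bm{w}^*\|^2}.
\end{equation*}

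The main obstacle is the last (bias) term, which needs control of the iterate distance $\mathbb{E}\|\bm{w}_t-\bm{w}^*\|^2$ in the \emph{unconstrained} setting — exactly the difficulty flagged before the theorem, since there is no domain diameter to fall back on. I would close this with a self-bounding argument: dropping the nonnegative suboptimality term from the one-step inequality yields $D_{t+1}\le D_t+2\eta B\sqrt{D_t}+\mathcal{O}(\eta^2\tau^2)$ for $D_t:=\mathbb{E}\|\bm{w}_t-\bm{w}^*\|^2$, and with $M:=\max_{t\le T}\sqrt{D_t}$ the summed inequality $M^2\le D_0 + 2(T\eta B)M + \mathcal{O}(T\eta^2\tau^2)$ bounds $M$ by its positive root. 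The choice $\eta=\mathcal{O}(C\gamma^{1/k}G^{-1}\varphi^{1+2/(k+1)})$ makes $T\eta^2\tau^2=\mathcal{O}(C^2)$ and $T\eta B=\mathcal{O}(C)$, so $M=\mathcal{O}(\|\bm{w}_0-\bm{w}^*\|+C)$. Finally, plugging in the parameters shows the initialization and variance terms are each $\mathcal{O}\!\big(\tfrac{G}{\gamma^{1/k}}(\tfrac{\|\bm{w}_0-\bm{w}^*\|^2}{C}+C)\varphi^{1-2/(k+1)}\big)$, while the bias term $BM$ carries the strictly larger exponent $\varphi^{2(k-1)/(k+1)}$ and (after an AM--GM step relating $\|\bm{w}_0-\bm{w}^*\|+C$ to $\tfrac{\|\bm{w}_0-\bm{w}^*\|^2}{C}+C$) is dominated, giving the stated rate. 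The delicate points are verifying the self-bounding constants and confirming that the bias term stays subdominant for every $k>1$.
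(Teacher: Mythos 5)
Your proposal is correct and reaches the stated rate, and its skeleton coincides with the paper's proof of \Cref{thm1-jan19-full}: the same one-step convexity inequality, the same clipping-bias bound $\|\bm{b}_t\| \leq G_{\mathcal{Z}}^k/\tau^{k-1}$ with $G_{\mathcal{Z}}^k = \frac{1}{n}\sum_i G_i^k$ (the paper's \Cref{bias-heavy-tail}), the same $\mathcal{O}(\tau^2)$ second-moment bound, and the same Markov step $G_{\mathcal{Z}}^k \leq G^k/\gamma$ over the draw of $\mathcal{Z}$. Where you genuinely diverge is the one step the paper flags as the main difficulty: controlling $\|\bm{w}_t - \bm{w}^*\|$ without a domain diameter. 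The paper's \Cref{lem1-jan18} does this by a direct displacement argument, $\mathbb{E}\|\bm{w}_t - \bm{w}^*\| \leq \|\bm{w}_0 - \bm{w}^*\| + \eta T\bigl(G_{\mathcal{Z}} + \tau\varphi\bigr)$, bounding each clipped update's norm by $G_{\mathcal{Z}}$ via Jensen and the accumulated Gaussian noise by $\sqrt{t\,d\sigma_n^2}$; with the chosen parameters this bound actually \emph{grows} like $\varphi^{\frac{2}{k+1}-1}$, but the product of it with the bias $G_{\mathcal{Z}}^k/\tau^{k-1}$ still lands at order $\varphi^{1-\frac{2}{k+1}}$, so the bias term there is of the \emph{same} order as the leading term. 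Your self-bounding recursion $M^2 \leq D_0 + 2T\eta B M + \mathcal{O}(T\eta^2\tau^2)$ instead yields the $\varphi$-independent bound $M = \mathcal{O}(\|\bm{w}_0-\bm{w}^*\| + C)$, which makes the bias contribution $BM = \mathcal{O}(\varphi^{\frac{2(k-1)}{k+1}})$ strictly lower order; the paper's full statement indeed carries such a higher-order $\varphi^{2(1-\frac{2}{k+1})}$ remainder, consistent with your accounting. Your route is arguably cleaner in that it never lets the iterate bound blow up and makes the dominance of the $\varphi^{1-\frac{2}{k+1}}$ term transparent for every $k>1$; the paper's route has the advantage of bounding the distance by a quantity that does not depend on the bias at all, which it then reuses verbatim in the proof of \Cref{thm1-may4}. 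The only detail to write out carefully is that the summed recursion bounds every prefix $D_s$, so the quadratic inequality really does hold for $M = \max_{s\leq T}\sqrt{D_s}$ and not just for $\sqrt{D_T}$; as sketched, that step is sound.
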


\begin{remark}[\textbf{Comparison with prior results}]
As per Thm. \ref{thm1-jan19}, the optimization risk is $\mathcal{O}(\varphi^{1 - {\frac{2}{k+1}}})$ in the
the bounded $k^{\text{th}}$ moment \textbf{unconstrained} convex case. In comparison, the risk is $\mathcal{O}(\varphi^{1 - {\frac{1}{k}}})$ in the bounded $k^{\text{th}}$ moment \textbf{constrained} convex case as per Thm. \ref{thm1-dec13} in \Cref{non-lip-cvx}, and $\mathcal{O}(\varphi)$ in the uniform Lipschitz convex case \cite{bassily2014private}.
\end{remark}
{The difference in the risk bound between the constrained and unconstrained cases arises because $\|\bm{w}_t - \bm{w}^{*}\| = \mathcal{O}(1)$ (w.r.t. $\varphi$) in the constrained case but our bound for $\|\bm{w}_t - \bm{w}^{*}\|$ in the unconstrained case depends on $\varphi$. If one is able to improve this bound for the unconstrained case, then the risk for the unconstrained case will also improve; but doing so in the absence of any other assumption seems difficult.
}

Under a mild additional assumption, we are able to improve the risk bound to $\mathcal{O}(\varphi^{1 - \frac{1}{k}})$ in the unconstrained case, thereby matching the result in the constrained case. We present this extra assumption first, followed by the result.
\begin{assumption}
\label{asmp1-may4}
For any $\bm{w}$ s.t. $\|\bm{w} - \bm{w}^{*}\| > D = \mathcal{O}(1)$, where $\bm{w}^{*} \in \text{arg min}_{\bm{w} \in \mathbb{R}^d} f(\bm{w})$:
\begin{equation}
    \label{eq:asmp}
    f(\bm{w}) - f(\bm{w}^{*}) > \Big({16 \varphi^{1 - \frac{1}{k}} {G}}\Big) \|\bm{w} - \bm{w}^{*}\|.
\end{equation}
\end{assumption}
\vspace{-0.3 cm}
For large $n$ (our regime), $\varphi$ is small; thus, ${16 \varphi^{(1 - {1}/{k})} {G}}$ is also small in comparison to the average Lipschitz constant $=\mathcal{O}(G)$ and so, assuming \cref{eq:asmp} \textit{for points far away from the optimum} is reasonable. Besides, an assumption similar to \Cref{asmp1-may4}, known as \textit{sharpness} (or the \L{}ojasiewicz inequality), has been used in prior optimization literature \cite{polyak1979sharp,burke1993weak,bolte2017error,roulet2017sharpness,davis2018subgradient}. {We provide an example for \Cref{asmp1-may4} in Appendix \ref{new-asmp-eg}.}

\begin{theorem}[\textbf{Unconstrained Convex Case Under \Cref{asmp1-may4}}]
\label{thm1-may4}
Suppose Assumptions \ref{dec23-asmp1} and \ref{asmp1-may4} hold, $f$ is convex and $\mathcal{W} = \mathbb{R}^d$. Fix some $C > 0$. In \Cref{alg:1}, set $T = \varphi^{-2}$, $\tau = \mathcal{O}\big(G \varphi^{-\frac{1}{k}}\big)$
and $\eta_t = \mathcal{O}\big(C G^{-1} \varphi^{1+\frac{1}{k}}\big)$ for all $t < T$. Then with a probability of at least $3/4$ which is w.r.t. the random dataset $\mathcal{Z}$ that we obtain,
DP-SGD (Alg. \ref{alg:1}) has the following \textbf{improved} guarantee:
\begin{equation*}
    \textup{OR}(T) \leq \mathcal{O}\Bigg({G}\Bigg(\frac{\|\bm{w}_0 - \bm{w}^{*}\|^2}{C} + C + {D} \Bigg)\Bigg){\varphi^{(1 - \frac{1}{k})}}.
\end{equation*}
\end{theorem}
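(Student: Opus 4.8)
The plan is to run the standard one-step SGD analysis, specialized to $\mathcal{W} = \mathbb{R}^d$ where the projection $\Pi_{\mathcal{W}}$ is the identity, and then to average over iterates since the output $\bm{w}_{\text{priv}}$ is a uniformly random iterate. Write $\bm{h}_i(\bm{w}) := \text{clip}(\nabla f_i(\bm{w}), \tau)$ and $\bar{\bm{h}}(\bm{w}) := \frac{1}{n}\sum_{i=1}^n \bm{h}_i(\bm{w})$, and note that $\mathbb{E}[\bm{g}_t \mid \bm{w}_t] = \bar{\bm{h}}(\bm{w}_t)$ because each sample enters the batch with probability $b/n$ and the Gaussian noise is mean-zero. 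Expanding $\|\bm{w}_{t+1} - \bm{w}^*\|^2$ and taking the conditional expectation gives
$$\mathbb{E}[\|\bm{w}_{t+1}-\bm{w}^*\|^2 \mid \bm{w}_t] = \|\bm{w}_t - \bm{w}^*\|^2 - 2\eta_t \langle \bar{\bm{h}}(\bm{w}_t), \bm{w}_t - \bm{w}^*\rangle + \eta_t^2\,\mathbb{E}[\|\bm{g}_t\|^2 \mid \bm{w}_t].$$
The proof then reduces to (i) lower bounding the inner product by the suboptimality gap up to a clipping bias, and (ii) upper bounding the second moment.

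For (i), the crucial step, I would bound the per-sample clipping bias. Convexity gives $\langle \nabla f_i(\bm{w}_t), \bm{w}_t-\bm{w}^*\rangle \ge f_i(\bm{w}_t)-f_i(\bm{w}^*)$, and the clipping error vector $\nabla f_i(\bm{w}_t) - \bm{h}_i(\bm{w}_t)$ has norm $(\|\nabla f_i(\bm{w}_t)\|-\tau)_+ \le (G_i-\tau)_+$, so averaging over $i$ yields
$$\langle \bar{\bm{h}}(\bm{w}_t),\bm{w}_t-\bm{w}^*\rangle \ge \big(f(\bm{w}_t)-f(\bm{w}^*)\big) - \Big(\tfrac{1}{n}\textstyle\sum_{i}(G_i-\tau)_+\Big)\|\bm{w}_t-\bm{w}^*\|.$$
Using $(G_i-\tau)_+ \le G_i\mathbbm{1}[G_i>\tau] \le G_i^k/\tau^{k-1}$ and Markov's inequality on the empirical moment $\frac{1}{n}\sum_i G_i^k$ (whose expectation is $\le G^k$ by \Cref{dec23-asmp1}), with probability at least $3/4$ over $\mathcal{Z}$ the bias coefficient satisfies $\frac{1}{n}\sum_i (G_i-\tau)_+ \le 4G^k/\tau^{k-1} = \mathcal{O}(G\varphi^{1-\frac1k})$ for the prescribed $\tau = \mathcal{O}(G\varphi^{-\frac1k})$; this is the source of the $3/4$ success probability. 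This is precisely where \Cref{asmp1-may4} enters: when $\|\bm{w}_t-\bm{w}^*\|>D$ the gap dominates the bias (so the bias is absorbed into half the gap), whereas when $\|\bm{w}_t-\bm{w}^*\|\le D$ the bias is at most $\mathcal{O}(GD)\varphi^{1-\frac1k}$. Combining the two cases gives the clean bound $\langle \bar{\bm{h}}(\bm{w}_t),\bm{w}_t-\bm{w}^*\rangle \ge \tfrac12\big(f(\bm{w}_t)-f(\bm{w}^*)\big) - \mathcal{O}(GD)\varphi^{1-\frac1k}$, which crucially avoids having to bound $\|\bm{w}_t-\bm{w}^*\|$ along the trajectory.

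For (ii), I would use $\|\bm{h}_i(\bm{w}_t)\|\le\tau$ together with the independence of the per-sample inclusion indicators to bound the batch term's second moment by $\mathcal{O}(\tau^2)$, and add the Gaussian variance $d\sigma_n^2$. Plugging in $\sigma_n^2$ from \Cref{thm-dp} and $T=\varphi^{-2}$ gives $d\sigma_n^2 = T\tau^2\varphi^2 = \tau^2$, so $\mathbb{E}[\|\bm{g}_t\|^2\mid\bm{w}_t] = \mathcal{O}(\tau^2)$. Substituting both bounds into the one-step inequality, telescoping over $t=0,\dots,T-1$ with constant $\eta_t=\eta$, dropping $\mathbb{E}\|\bm{w}_T-\bm{w}^*\|^2\ge 0$, and dividing by $T\eta$ bounds $\frac1T\sum_t \mathbb{E}[f(\bm{w}_t)-f(\bm{w}^*)] = \textup{OR}(T)$ by $\frac{\|\bm{w}_0-\bm{w}^*\|^2}{T\eta} + \mathcal{O}(GD)\varphi^{1-\frac1k} + \eta\,\mathcal{O}(\tau^2)$. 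Inserting $\eta=\mathcal{O}(CG^{-1}\varphi^{1+\frac1k})$ and $\tau=\mathcal{O}(G\varphi^{-\frac1k})$ makes each of the three terms scale as $\mathcal{O}(G(\cdot))\varphi^{1-\frac1k}$, which yields the claimed bound.

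The main obstacle is step (i): controlling the clipping bias in the inner product and, in particular, dispensing with an explicit bound on $\|\bm{w}_t-\bm{w}^*\|$. In \Cref{thm1-jan19} this distance had to be bounded directly (which is what degrades the rate to $\varphi^{1-\frac{2}{k+1}}$); \Cref{asmp1-may4} is engineered so that the bias term $\mathcal{O}(G\varphi^{1-\frac1k})\|\bm{w}_t-\bm{w}^*\|$ is dominated by the suboptimality gap once $\bm{w}_t$ is far from $\bm{w}^*$, thereby recovering the sharper $\varphi^{1-\frac1k}$ rate.
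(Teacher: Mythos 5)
Your proposal is correct and follows essentially the same route as the paper's proof in Appendix~\ref{sec-improve-may4}: the one-step SGD inequality, the clipping-bias bound $G_{\mathcal{Z}}^k/\tau^{k-1}$ (your pointwise bound $(G_i-\tau)_+\le G_i^k/\tau^{k-1}$ is just a per-sample version of \Cref{bias-heavy-tail}), Markov's inequality on $G_{\mathcal{Z}}^k$ to get the $3/4$ dataset probability, the case split on $\|\bm{w}_t-\bm{w}^*\|$ versus $D$ so that \Cref{asmp1-may4} absorbs the bias into half the suboptimality gap, the $\mathcal{O}(\tau^2)$ second-moment bound, and the telescoping step are all exactly the paper's argument. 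You also correctly identify the key structural point, namely that this case split is what removes the need to bound $\|\bm{w}_t-\bm{w}^*\|$ along the trajectory (the step that degrades the rate in \Cref{thm1-jan19}).
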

Thus, the risk bound under \Cref{asmp1-may4} improves to $\mathcal{O}(\varphi^{1 - \frac{1}{k}})$, which matches the bound in the constrained case. One caveat of the above result is that unlike \Cref{thm1-jan19}, the confidence estimate of the high-probability result cannot be controlled by us.
Next, we provide a matching lower bound showing the tightness of \Cref{thm1-may4} w.r.t. $\varphi$.

\begin{theorem}[\textbf{Lower Bound for Unconstrained Convex Case Under \Cref{asmp1-may4}}]
\label{lower_bound}
{Suppose $\delta < \exp(-\varepsilon^2)$.} There exists a convex loss function $\ell$, such that for every $(\varepsilon,\delta)$-DP algorithm $\mathcal{A}$ which tries to solve for $\bm{w}^{*} = \text{arg min}_{\bm{w} \in \mathbb{R}^d} f(\bm{w})$ where $f$ is the average loss for a dataset $\mathcal{Z}$ of $n$ samples drawn from the data distribution $\mathcal{D}$, there exists a choice of $\mathcal{D}$ such that:
\vspace{-0.15 cm}
\begin{itemize}
    \item $f$ satisfies Assumptions \ref{dec23-asmp1} and \ref{asmp1-may4} (the latter up to constant terms and w.h.p. w.r.t. $\mathcal{Z}$).
    \vspace{-0.15 cm}
    \item $\mathbb{E}_{\mathcal{Z} \sim \mathcal{D}^n, \mathcal{A}}\Big[f(\bm{w}_{\mathcal{Z}}^{(\mathcal{A})}) - f(\bm{w}^{*})\Big] \geq \Omega\big(\varphi^{1 - \frac{1}{k}}\big)$, where $\bm{w}_{\mathcal{Z}}^{(\mathcal{A})}$ is the output of algorithm $\mathcal{A}$ on the dataset $\mathcal{Z}$.
\end{itemize}
\end{theorem}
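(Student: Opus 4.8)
The plan is to exhibit a single heavy-tailed instance and reduce the optimization task to a differentially private \emph{estimation} problem for which a fingerprinting (tracing) lower bound is available. Concretely, I would let the secret be a parameter $\bm{v} \in \{-1,+1\}^d$ and construct a data distribution $\mathcal{D} = \mathcal{D}_{\bm{v}}$ that is heavy-tailed: each coordinate of a sample is zero with high probability $1-p$ and a large spike of magnitude $R$ (carrying the sign of the corresponding coordinate of $\bm{v}$) with probability $p$. The scale is chosen to saturate the moment budget, i.e. $R \asymp G\, p^{-1/k}$ so that $(\mathbb{E}[G(\bm{x},y)^k])^{1/k} = (p R^k)^{1/k} \leq G$ as required by \Cref{dec23-asmp1}, while keeping the per-sample gradients bounded by the sample-dependent, heavy-tailed quantity $G(\bm{x},y)$. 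The loss $\ell$ itself is taken to be a convex, piecewise-linear norm-type loss whose population minimizer $\bm{w}^{*}$ encodes $\bm{v}$ and whose value grows linearly in $\|\bm{w}-\bm{w}^{*}\|$; this linear growth is exactly what will let me verify \Cref{asmp1-may4}.

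First I would verify the three structural requirements of the instance: convexity of $f$ (immediate from convexity of $\ell$ and averaging); the moment bound \Cref{dec23-asmp1} with constant $G$ (the direct computation above); and \Cref{asmp1-may4}. For the last one I would show that the empirical objective inherits the linear-growth lower bound of the population objective with high probability over $\mathcal{Z}$ (a concentration argument, since the theorem only asks for \Cref{asmp1-may4} up to constants and w.h.p.), and then tune $p$ so that the resulting slope is of order $\varphi^{1-\frac{1}{k}} G$, matching the threshold $16\varphi^{1-\frac{1}{k}}G$ in \Cref{eq:asmp} up to constants. Balancing the spike probability $p$ against the DP noise floor $\varphi$ is what pins down the exponent $1-\tfrac{1}{k}$: the effective recoverable signal per coordinate is damped by a factor $\asymp \varphi^{-1/k}$ relative to the uniform-Lipschitz case, turning the baseline rate $\varphi$ into $\varphi^{1-\frac{1}{k}}$.

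Next comes the reduction from optimization to estimation, where \Cref{asmp1-may4} does double duty. Because any output with $\|\bm{w}-\bm{w}^{*}\| > D$ incurs suboptimality strictly larger than $16\varphi^{1-\frac{1}{k}}G\|\bm{w}-\bm{w}^{*}\|$, an algorithm with small optimization risk is \emph{forced} to return a point within $O(D) = O(1)$ of $\bm{w}^{*}$; hence the unconstrained problem is effectively confined to a bounded ball, which is precisely what lets the argument mirror the known $\Omega(\varphi^{1-\frac{1}{k}})$ lower bound for the \emph{constrained} heavy-tailed convex case. Combined with the linear growth of $f$, a suboptimality of $o(\varphi^{1-\frac{1}{k}})$ would translate into an estimate of $\bm{v}$ whose per-coordinate error is small enough to trace the secret, so beating the claimed bound yields a nontrivial reconstruction of $\bm{v}$ from the $(\varepsilon,\delta)$-DP output $\bm{w}_{\mathcal{Z}}^{(\mathcal{A})}$.

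Finally I would invoke a fingerprinting / tracing lower bound for $(\varepsilon,\delta)$-DP: in the regime $\delta < e^{-\varepsilon^2}$, no $(\varepsilon,\delta)$-DP mechanism can reconstruct $\bm{v}$ (equivalently, estimate the relevant coordinate means) to accuracy better than $\Omega(\sqrt{d\log(1/\delta)}/(n\varepsilon))$ via the standard tracing argument. This is exactly the regime assumption in the theorem and is what produces the $\sqrt{d\log(1/\delta)}$ numerator of $\varphi$. Chaining this bound through the reduction of Step~3 and the heavy-tail scaling of Step~2 gives $\mathbb{E}_{\mathcal{Z}\sim\mathcal{D}^n,\mathcal{A}}[f(\bm{w}_{\mathcal{Z}}^{(\mathcal{A})}) - f(\bm{w}^{*})] \geq \Omega(\varphi^{1-\frac{1}{k}})$. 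I expect the \textbf{main obstacle} to be engineering one instance that \emph{simultaneously} satisfies the moment bound, the sharpness assumption with its $\varphi$-dependent slope verified w.h.p. over $\mathcal{Z}$, and admits a clean tracing reduction whose constant-factor losses do not erode the exponent $1-\tfrac{1}{k}$; in particular, coupling the spike scale $R \asymp G p^{-1/k}$ to the noise floor $\varphi$ so that the sharpness slope and the target rate both land at $\Theta(\varphi^{1-\frac{1}{k}}G)$ is the delicate calibration on which the whole argument hinges.
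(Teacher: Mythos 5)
Your plan has essentially the same architecture as the paper's proof: a family of spike distributions indexed by a discrete secret, with spike magnitude $\asymp p^{-1/k}$ chosen to saturate the $k$-th moment budget of Assumption \ref{dec23-asmp1}; a convex piecewise-linear loss whose minimizer encodes the secret and grows linearly in $\|\bm{w}-\bm{w}^{*}\|$ (which is exactly how the paper gets Assumption \ref{asmp1-may4} to hold w.h.p., via a Chernoff bound on $\|\overline{\bm{x}}\|$); and a reduction from suboptimality to private estimation of the secret. The one genuinely different ingredient is the final information-theoretic step: you invoke a fingerprinting/tracing bound for $(\varepsilon,\delta)$-DP, whereas the paper converts $(\varepsilon,\delta)$-DP to $(\xi,\rho)$-zCDP (Lemma 22 of Bun--Steinke, which is what forces $\delta<e^{-\varepsilon^2}$) and then re-derives a zCDP Fano inequality in the style of Theorem 1.4 of Kamath et al., using $d_{TV}(Q_{\bm{v}},Q_{\bm{v}'})\le p$ over a packing of size $2^{\Omega(d)}$. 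A direct fingerprinting route would be attractive because it could in principle avoid the zCDP detour, but it is not an off-the-shelf step here: the standard tracing bounds with the $\sqrt{d\log(1/\delta)}$ numerator are stated for mean estimation of product distributions over $\{\pm1\}^d$ and for $\delta$ at most inverse-polynomial in $n$, not for the sparse spiked distributions you construct nor under the bare condition $\delta<e^{-\varepsilon^2}$ (which permits constant $\delta$ when $\varepsilon=O(1)$, a regime where fingerprinting degrades). You would have to adapt the fingerprinting correlation argument to the spiked family, which is precisely the kind of re-derivation the paper does on the Fano side.

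The second gap is the ERM-versus-SCO issue, which the paper explicitly flags as the hard part of this proof. Your reduction lower-bounds the suboptimality by (a multiple of) the estimation error of $\bm{v}$, but for the \emph{training} error the natural lower bound has the form $\|\overline{\bm{x}}\|\cdot\min\big(1,\tfrac14\|\bm{w}^{(\mathcal{A})}-\bm{w}^{*}\|^2\big)$, where the data-dependent factor $\|\overline{\bm{x}}\|$ is correlated with the algorithm's output. Taking the expectation of this product is not immediate; the paper handles it by conditioning on the constant-probability event $\|\overline{\bm{x}}\|\ge\tfrac12 p^{1-1/k}\|\bm{v}\|$ and arguing that the conditional and unconditional expectations of the $\min(\cdot)$ term differ only by constants. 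Your sketch establishes the empirical linear growth w.h.p. (which suffices for verifying Assumption \ref{asmp1-may4}) but does not address this correlation in the final expectation bound, so as written the chain from the tracing bound to $\mathbb{E}[f(\bm{w}^{(\mathcal{A})}_{\mathcal{Z}})-f(\bm{w}^{*})]\ge\Omega(\varphi^{1-1/k})$ has a missing step. Both gaps are fillable, but they are exactly the places where the paper's proof does nontrivial work beyond the Kamath et al.\ template.
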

{
Thus, \textbf{DP-SGD is optimal} in the unconstrained convex case under Assumptions \ref{dec23-asmp1} \& \ref{asmp1-may4} {(for $\delta < \exp(-\varepsilon^2)$)}. We also have the following corollary under uniform Lipschitzness.

\begin{corollary}[\textbf{Uniform Lipschitz}]
\label{cor-unif}
For $k \to \infty$, i.e., under uniform Lipschitzness, the risk in the unconstrained convex case regardless of \Cref{asmp1-may4} is $\mathcal{O}(\varphi)$ as per Thm. \ref{thm1-jan19} and \ref{thm1-may4}. Further, Thm. \ref{lower_bound} is a lower bound even w/o \Cref{asmp1-may4}, and it yields $\Omega(\varphi)$ for $k \to \infty$. Thus, \textbf{DP-SGD is optimal} in the unconstrained convex case under uniform Lipschitzness {(for $\delta < \exp(-\varepsilon^2)$)}.
\end{corollary}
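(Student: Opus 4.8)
The plan is to obtain \Cref{cor-unif} as a specialization of the heavy-tailed results to the limit $k \to \infty$, treating the upper and lower bounds separately and then combining them. First I would record the observation already made after \Cref{dec23-asmp1}: under uniform Lipschitzness the per-sample Lipschitz constants satisfy $G(\bm{x},y) \le G$ almost surely, so \Cref{dec23-asmp1} holds with $k = \infty$ and this finite $G$ (equal to the uniform Lipschitz constant). Everything then reduces to taking $k \to \infty$ in the exponents of the existing bounds and checking that the multiplicative constants stay bounded.

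For the upper bound I would invoke \Cref{thm1-jan19}, whose guarantee is $\textup{OR}(T) \le \mathcal{O}\big(\tfrac{G}{\gamma^{1/k}}(\tfrac{\|\bm{w}_0 - \bm{w}^{*}\|^2}{C} + C)\big)\varphi^{1 - \frac{2}{k+1}}$. As $k \to \infty$ the exponent $1 - \tfrac{2}{k+1} \to 1$ and the prefactor $\gamma^{-1/k} \to 1$, so the bound tends to $\mathcal{O}(\varphi)$. Crucially, since $G(\bm{x},y) \le G$ holds deterministically under uniform Lipschitzness, the ``nice dataset'' event underlying the high-probability statement of \Cref{thm1-jan19} holds with probability one; hence the $\mathcal{O}(\varphi)$ bound becomes unconditional, requiring neither a high-probability caveat nor \Cref{asmp1-may4}. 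Applying the same limit to \Cref{thm1-may4} yields the identical $\mathcal{O}(\varphi)$, but it is not needed for the conclusion.

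For the lower bound the key is monotonicity of the problem class in its assumptions. The hard instance built in \Cref{lower_bound} is constructed to satisfy both \Cref{dec23-asmp1} and \Cref{asmp1-may4}, so \emph{a fortiori} it lies in the larger class of instances satisfying \Cref{dec23-asmp1} alone; therefore every $(\varepsilon,\delta)$-DP algorithm must incur risk $\Omega(\varphi^{1 - \frac{1}{k}})$ on it even when \Cref{asmp1-may4} is dropped. I would then push the construction to $k \to \infty$, verifying that the Lipschitz constant of the hard loss stays bounded (the heavy-tailed feature distribution collapses to one with bounded support), so the instance becomes uniformly Lipschitz while $\varphi^{1 - \frac{1}{k}} \to \varphi$. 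The condition $\delta < e^{-\varepsilon^2}$ is inherited verbatim from \Cref{lower_bound}. Combining the matching $\mathcal{O}(\varphi)$ upper and $\Omega(\varphi)$ lower bounds gives optimality of DP-SGD.

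The step I expect to be most delicate is the $k \to \infty$ limit of the lower-bound construction: I must confirm that as the moment order grows the hard data distribution degenerates to one with bounded support, so the resulting loss is genuinely uniformly Lipschitz with a finite constant rather than the construction becoming vacuous, and that the hidden constant in $\Omega(\varphi^{1 - 1/k})$ does not vanish in the limit. On the upper-bound side the only care needed is discharging the high-probability qualifier of \Cref{thm1-jan19} using the deterministic boundedness of uniform Lipschitzness; the exponent and constant limits themselves are routine.
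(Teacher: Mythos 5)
Your proposal is correct and follows essentially the same route the paper intends: the corollary is obtained by taking $k \to \infty$ in the exponents of Theorems \ref{thm1-jan19}/\ref{thm1-may4} and \ref{lower_bound}, noting that under uniform Lipschitzness the event $G_{\mathcal{Z}} \leq G$ holds deterministically so the high-probability caveat disappears, and using the \emph{a fortiori} argument that the hard instance of \Cref{lower_bound} lies in the larger class without \Cref{asmp1-may4}. Your extra check that the hard distribution's support point $p^{-1/k}\bm{v}$ tends to $\bm{v}$ (so the instance remains genuinely uniformly Lipschitz with non-vanishing constant) is a sound verification of a detail the paper leaves implicit.
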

}

\textbf{Regarding the proof of \Cref{lower_bound}:} Even though we follow the proof outline of Theorem 6.4 of \citet{kamath2021improved}, our proof is more involved. First, since we are in the \textit{unconstrained} setting, we had to use a \textit{non-obvious} function (to obtain the lower bound on). Second, since we are in the ERM setting, we have to lower bound the expected training error which is harder than lower bounding the expected generalization error in the SCO setting of \citet{kamath2021improved}. {Finally, since we are dealing with $(\varepsilon,\delta)$-DP (whereas \citet{kamath2021improved} deal with $(0,\rho)$-zCDP), we had to 
re-derive an important tool in their analysis. That is also the reason we had to impose $\delta < \exp(-\varepsilon^2)$; removing this constraint is left for future work. Note that this is the first lower bound for $(\varepsilon,\delta)$-DP in the unconstrained case. See \Cref{app-l} for details.}

Finally, we present our result under \Cref{dec23-asmp1} in the \textit{unconstrained nonconvex} case. 
\begin{theorem}[\textbf{Unconstrained Nonconvex Case}]
\label{thm1-dec15}
Suppose \Cref{dec23-asmp1} holds, $f$ is $L$-smooth and $\mathcal{W} = \mathbb{R}^d$. Fix some $\gamma \in (0,1)$. 
In \Cref{alg:1}, set $T = \varphi^{-2}$, $\tau = \mathcal{O}\big(\gamma^{-\frac{2}{2k-1}} \varphi^{-\frac{1}{2k-1}}\big)$ and $\eta_t = \mathcal{O}\big(\gamma^{\frac{2}{2k-1}} \varphi^{1+\frac{1}{2k-1}}\big)$ for all $t < T$.
Then with a probability of at least $(1 - \gamma)$ which is w.r.t. the random dataset $\mathcal{Z}$ that we obtain, DP-SGD (Alg. \ref{alg:1}) has the following guarantee:
\vspace{-0.1 cm}
\begin{equation*}
    \textup{OR}(T) \leq \mathcal{O}\Big({\gamma^{\frac{-2}{2k-1}}}\varphi^{(1-\frac{1}{2k-1})}\Big).
\end{equation*}
\end{theorem}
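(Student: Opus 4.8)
The plan is to run the standard descent-lemma argument for smooth nonconvex optimization, but to control the clipping bias through the heavy-tailed moment assumption rather than uniform boundedness. First I would invoke $L$-smoothness for the update $\bm{w}_{t+1} = \bm{w}_t - \eta_t \bm{g}_t$ to get
$$f(\bm{w}_{t+1}) \leq f(\bm{w}_t) - \eta_t\langle\nabla f(\bm{w}_t),\bm{g}_t\rangle + \tfrac{L\eta_t^2}{2}\|\bm{g}_t\|^2,$$
and take the conditional expectation given $\bm{w}_t$. Since $\bm{\zeta}_t$ is zero-mean and the Poisson sampling is unbiased, the conditional mean of $\bm{g}_t$ is $\bar{\bm{g}}_t := \tfrac1n\sum_{i=1}^n\text{clip}(\nabla f_i(\bm{w}_t),\tau)$, so defining the clipping bias $\bm{b}_t := \bar{\bm{g}}_t - \nabla f(\bm{w}_t)$ yields $\langle\nabla f(\bm{w}_t),\bar{\bm{g}}_t\rangle = \|\nabla f(\bm{w}_t)\|^2 + \langle\nabla f(\bm{w}_t),\bm{b}_t\rangle$.

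The crux is bounding $\|\bm{b}_t\|$ under heavy tails. The per-sample clipping error is nonzero only when $\|\nabla f_i(\bm{w}_t)\| > \tau$, and there $\|\text{clip}(\nabla f_i(\bm{w}_t),\tau) - \nabla f_i(\bm{w}_t)\| = \|\nabla f_i(\bm{w}_t)\| - \tau \leq \|\nabla f_i(\bm{w}_t)\|\,\mathbbm{1}[\|\nabla f_i(\bm{w}_t)\|>\tau] \leq \|\nabla f_i(\bm{w}_t)\|^k/\tau^{k-1} \leq G_i^k/\tau^{k-1}$, using \Cref{asmp-lip-gen}. Averaging gives $\|\bm{b}_t\| \leq \tau^{-(k-1)}\cdot\tfrac1n\sum_{i=1}^n G_i^k$ uniformly in $t$. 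This is where the high-probability-over-$\mathcal{Z}$ guarantee enters: since $\mathbb{E}_{\mathcal{Z}}[\tfrac1n\sum_i G_i^k] = \mathbb{E}_{\mathcal{D}}[G(\bm{x},y)^k] \leq G^k$ by \Cref{dec23-asmp1}, Markov's inequality gives $\tfrac1n\sum_i G_i^k \leq G^k/\gamma$ with probability at least $1-\gamma$, hence $\|\bm{b}_t\| \leq G^k/(\gamma\tau^{k-1})$ for all $t$ simultaneously on that event.

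Next I would bound the second moment $\mathbb{E}[\|\bm{g}_t\|^2 \mid \bm{w}_t]$. Splitting off the independent Gaussian, $\mathbb{E}[\|\bm{\zeta}_t\|^2]=d\sigma_n^2$, and by \Cref{thm-dp} with $T=\varphi^{-2}$ this equals exactly $\tau^2$; the clipped sampling term contributes $\|\bar{\bm{g}}_t\|^2 + \mathcal{O}(\tau^2/b) = \mathcal{O}(\tau^2)$ since each clipped summand has norm $\leq\tau$. Thus $\mathbb{E}[\|\bm{g}_t\|^2\mid\bm{w}_t] = \mathcal{O}(\tau^2)$. Using Young's inequality $-\langle\nabla f(\bm{w}_t),\bm{b}_t\rangle \leq \tfrac12\|\nabla f(\bm{w}_t)\|^2 + \tfrac12\|\bm{b}_t\|^2$ and substituting back gives
$$\mathbb{E}[f(\bm{w}_{t+1})\mid\bm{w}_t] \leq f(\bm{w}_t) - \tfrac{\eta_t}{2}\|\nabla f(\bm{w}_t)\|^2 + \tfrac{\eta_t}{2}\|\bm{b}_t\|^2 + \mathcal{O}(L\eta_t^2\tau^2).$$
Summing over $t=0,\dots,T-1$ with $\eta_t\equiv\eta$, telescoping, and using that $f$ is bounded below (so $f(\bm{w}_0)-\mathbb{E}[f(\bm{w}_T)] = \mathcal{O}(1)$), then dividing by $\eta T/2$ and recalling $\textup{OR}(T)=\tfrac1T\sum_t\mathbb{E}\|\nabla f(\bm{w}_t)\|^2$, yields
$$\textup{OR}(T) \leq \mathcal{O}\Big(\tfrac{1}{\eta T} + \tfrac{G^{2k}}{\gamma^2\tau^{2(k-1)}} + L\eta\tau^2\Big).$$

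Finally I would substitute the prescribed $\tau = \mathcal{O}(\gamma^{-2/(2k-1)}\varphi^{-1/(2k-1)})$ and $\eta = \mathcal{O}(\gamma^{2/(2k-1)}\varphi^{1+1/(2k-1)})$ and verify that all three terms collapse to the common value $\mathcal{O}(\gamma^{-2/(2k-1)}\varphi^{1-1/(2k-1)})$; these exponents are exactly the solution of balancing the optimization ($1/\eta T$), bias, and noise terms, which is a short exponent computation (e.g.\ for the bias term, $\gamma^{-2}\tau^{-2(k-1)}$ contributes $\gamma^{-2+4(k-1)/(2k-1)}=\gamma^{-2/(2k-1)}$ and $\varphi^{2(k-1)/(2k-1)}=\varphi^{1-1/(2k-1)}$). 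I expect the main obstacle to be the bias analysis under heavy tails combined with the passage to a high-probability guarantee over the random dataset $\mathcal{Z}$: the moment-truncation bound $G_i^k/\tau^{k-1}$ and the Markov step are what replace the uniform-Lipschitz boundedness used in prior work, and making them hold simultaneously across all iterations while keeping the clean $\mathcal{O}(\tau^2)$ second-moment bound is the delicate part; the remainder is bookkeeping to match exponents.
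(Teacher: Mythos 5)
Your proposal is correct and follows essentially the same route as the paper's proof in Appendix~\ref{full-2}: the descent lemma, the clipping-bias bound $G_{\mathcal{Z}}^k/\tau^{k-1}$ (the paper's Lemma~\ref{bias-heavy-tail}), Markov's inequality over $\mathcal{Z}$ to get $G_{\mathcal{Z}}^k \leq G^k/\gamma$ with probability $1-\gamma$, the $\mathcal{O}(\tau^2)$ second-moment bound with $d\sigma_n^2 = T\varphi^2\tau^2$, and the same three-way balancing of exponents. The only cosmetic difference is that you split the cross term via Young's inequality where the paper uses the polarization identity $\langle \bm{a},\bm{b}\rangle = \tfrac12(\|\bm{a}\|^2+\|\bm{b}\|^2-\|\bm{a}-\bm{b}\|^2)$; these yield the identical intermediate inequality.
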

\begin{remark}[\textbf{Comparison with Prior Results}]
\label{rem-2}
As per Thm. \ref{thm1-dec15}, the risk is $\mathcal{O}(\varphi^{1 - {\frac{1}{2k-1}}})$ in the bounded $k^{\text{th}}$ moment {nonconvex} case. In comparison, \citet{wang2018differentially,wang2019efficient} get a bound of $\mathcal{O}(\varphi)$ in the uniform \textbf{Lipschitz} nonconvex case (equivalent to $k = \infty$) with DP-SGD like algorithms. 
\end{remark}

{
\section{Limitations}
We now discuss some limitations of our work which render interesting directions of future work. We do not have a lower bound for the nonconvex case in \Cref{non-lip}. Our clip norm selection strategy in \Cref{sec:1} is only for the convex case. Our iteration complexity ($T$) is worse than some related works because we consider the vanilla DP-SGD algorithm of \citet{abadi2016deep} \textit{without any acceleration}, whereas other works consider accelerated variants of DP-SGD. In our case, we had to set $T = \mathcal{O}(\nicefrac{1}{\varphi^2})$ in order to control the clipping bias. But please note that the goal of this work is to attain smaller optimization risk by properly choosing the clip norm, and \textit{not} accelerate convergence. 

\section{Conclusion}
\label{sec:conc}
In this paper, we generalized uniform Lipschitzness by assuming that the per-sample gradients have {sample-dependent} upper bounds called per-sample Lipschitz constants. Under our generalized Lipschitzness assumption, we provided a theoretically-motivated clip norm {tuning} recommendation for DP-SGD in the convex case. We showed the effectiveness of our recommendation via extensive experiments. Additionally, we derived novel convergence results for unconstrained DP-SGD when the per-sample Lipschitz constants are heavy-tailed, i.e., they have bounded moments. 
}

\bibliography{refs}
\bibliographystyle{icml2023}

\onecolumn
\newpage
\appendix

\begin{center}
    \textbf{\Large Appendix}\vspace{5mm}
\end{center}

\section*{Contents}
\vspace{0.1 cm}
\begin{itemize}
    \item \Cref{extra-expts}: Empirical Results on EMNIST and Fashion-MNIST
    \item \Cref{expts-table}: Tables of Results for Section~\ref{expts} and Appendix~\ref{extra-expts}
    \item \Cref{sec:noncvx-exp}: Some Empirical Results in the Non-Convex Case
    \item \Cref{log-reg-asmp}: Logistic Regression Satisfies Assumption~\ref{asmp-lip-gen}
    \item \Cref{new-asmp-eg}: Example for Assumption~\ref{asmp1-may4}
    \item \Cref{app-f}: Some Useful Lemmas
    \item \Cref{pf-dp}: Proof of Theorem~\ref{thm-dp}
    \item \Cref{small-clip-sec}: Full Version and Proof of Theorem~\ref{thm-small-clip}
    \item \Cref{non-lip-cvx}: Result for the Constrained Convex Case under Assumption~\ref{dec23-asmp1}
    \item \Cref{full-1}: Full Version and Proof of Theorem~\ref{thm1-jan19}
    \item \Cref{sec-improve-may4}: Full Version and Proof of Theorem~\ref{thm1-may4}
    \item \Cref{app-l}: Full Version and Proof of Theorem~\ref{lower_bound}
    \item \Cref{lb_cons_pf}: Proof of Theorem~\ref{lower_bound_constrained}
    \item \Cref{full-2}: Full Version and Proof of Theorem~\ref{thm1-dec15}
\end{itemize}

\clearpage
\section{Empirical Results on EMNIST and Fashion-MNIST}
\label{extra-expts}
Here, we show results for private multinomial logistic
regression on two more datasets, viz., (balanced) EMNIST with 47 classes and Fashion-MNIST with 10 classes. For both these datasets, we just use the flattened images as features for logistic
regression. Other experimental details are the same as in \Cref{expts}. 

In \Cref{fig:supp-extra}, we plot the {best test accuracy} obtained for different values of the clip norm $\tau$ (by tuning over the learning rate $\eta$) averaged over the last 5 epochs and across 3 independent runs (just like \Cref{fig:results}). The figure caption discusses the results. {The exact accuracy values for these two datasets are tabulated in Table \ref{tab3-nlp} in \Cref{expts-table}.} 

\begin{figure*}[!htb]
\centering 
\subfloat[EMNIST]{
	\includegraphics[width=0.75\textwidth]{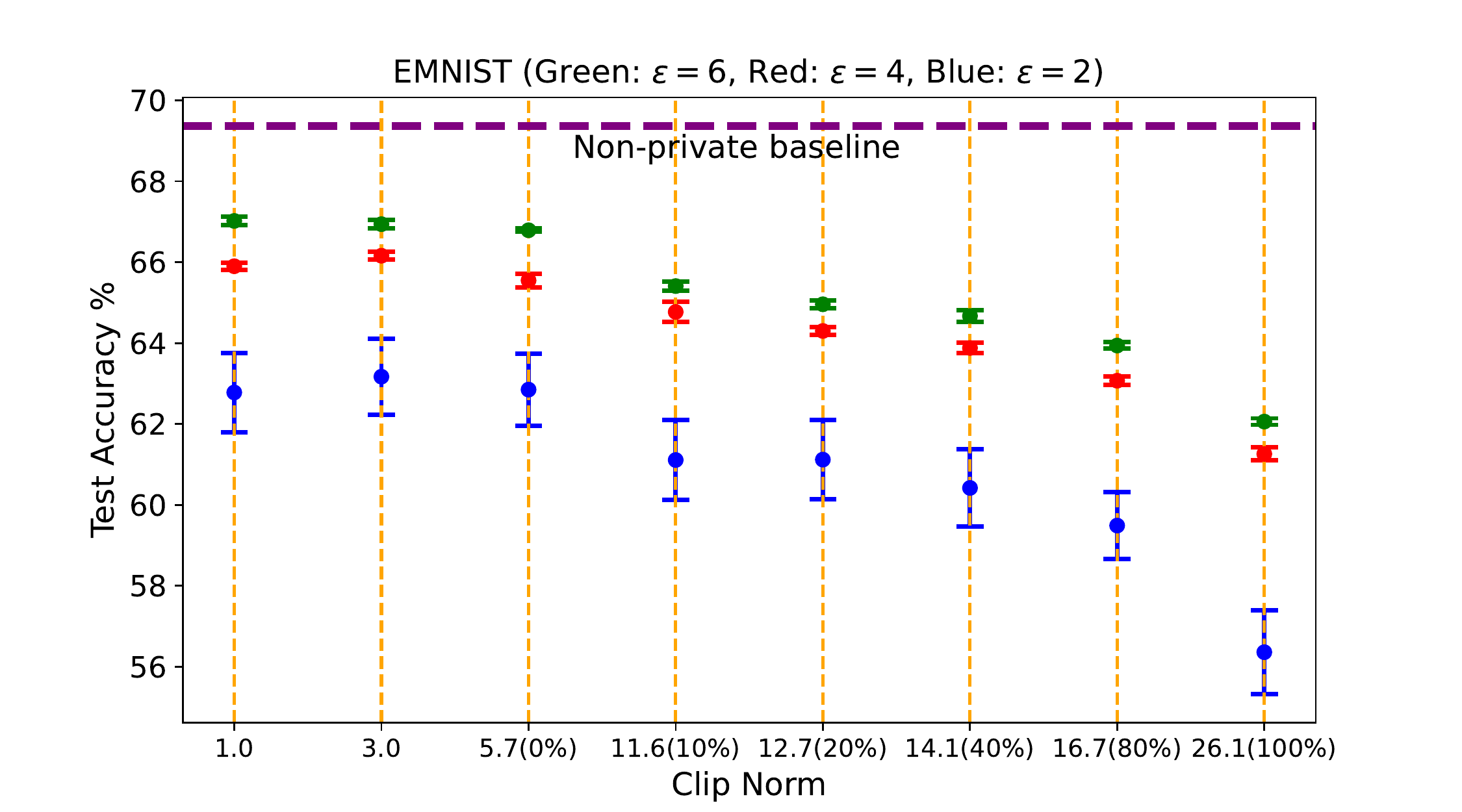}
}
\\
\subfloat[Fashion-MNIST]{
	\includegraphics[width=0.75\textwidth]{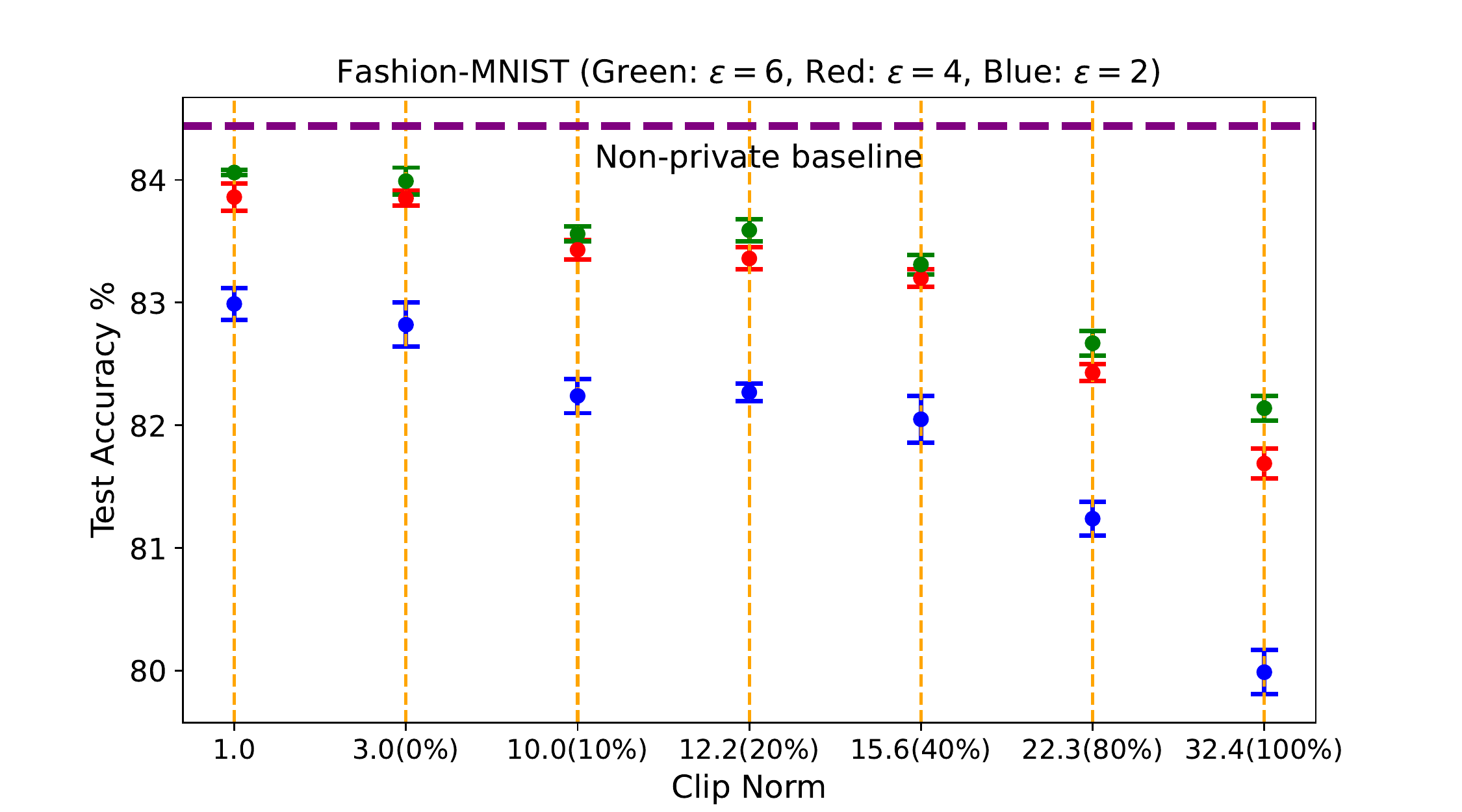}
	} 
\caption{Average test accuracy (depicted by the blobs) $\pm$ 1 standard deviation (depicted by the bars around the blobs) in the last 5 epochs for different values of clip norm $\tau$. \enquote{\%} in the x-axis stands for percentile. Just as in \Cref{fig:results}, \textit{clip norms $\leq G_1$ ($0^{\text{th}}$ percentile of per-sample Lipschitz constants) perform better than clip norms $> G_1$, and the performance with $\tau = G_1$ is much better than that with $\tau = G_n$ ($100^{\text{th}}$ percentile)}. These observations validate our theory and recommendation in \Cref{clip-norm-choice}. }
\label{fig:supp-extra}
\end{figure*}
\clearpage
\section{Tables of Results for Section~\ref{expts} and Appendix~\ref{extra-expts}}
\label{expts-table}
\begin{table}[!htb]
\caption{\textbf{Caltech-256, Food-101 and CIFAR-100:} Average test accuracy $\pm$ 1 standard deviation in the last 5 epochs for different values of clip norm $\tau$ in the experiments of \Cref{expts}. Note that \enquote{pctl.} stands for percentile. \enquote{Non-private baseline} is the accuracy of vanilla non-private SGD in the same setting.}
\vspace{0.3 cm}
\begin{subtable}[c]{\textwidth}
\centering
\begin{tabular}{|l|c|c|c|}
\hline
\textbf{Caltech-256} & \textbf{(a)} $(2,10^{-5})$-DP & \textbf{(b)} $(4,10^{-5})$-DP & \textbf{(c)} $(6,10^{-5})$-DP
\\
\hline
$\tau = 0.1$ & 73.70 $\pm$ 0.28 \% & 79.00 $\pm$ 0.03 \% & 80.50 $\pm$ 0.04 \%
\\
\hline
$\tau = 1.0$ & 74.03 $\pm$ 0.47 \% & 79.00 $\pm$ 0.08 \% & 80.39 $\pm$ 0.07 \%
\\
\hline
$\tau = 5.0$ & 73.80 $\pm$ 0.40 \% & 79.00 $\pm$ 0.04 \% & 80.00 $\pm$ 0.04 \%
\\
\hline
$\tau = 10.0$ & 73.72 $\pm$ 0.28 \% & 78.79 $\pm$ 0.08 \% & 79.38 $\pm$ 0.05 \%
\\
\hline
$\tau = (0^{\text{th}} \text{ pctl.})$ & 73.54 $\pm$ 0.32 \% & 78.42 $\pm$ 0.05 \% & 79.41 $\pm$ 0.06 \%
\\
\hline
$\tau = (10^{\text{th}} \text{ pctl.})$ & 69.38 $\pm$ 0.45 \% & 75.32 $\pm$ 0.08 \% & 77.63 $\pm$ 0.05 \%
\\
\hline
$\tau = (20^{\text{th}} \text{ pctl.})$ & 68.73 $\pm$ 0.30 \% & 74.81 $\pm$ 0.02 \% & 77.37 $\pm$ 0.08 \%
\\
\hline
$\tau = (40^{\text{th}} \text{ pctl.})$ & 67.36 $\pm$ 0.29 \% & 73.43 $\pm$ 0.06 \% & 76.82 $\pm$ 0.09 \%
\\
\hline
$\tau = (80^{\text{th}} \text{ pctl.})$ & 65.25 $\pm$ 0.30 \% & 71.35 $\pm$ 0.05 \% & 75.38 $\pm$ 0.08 \%
\\
\hline
$\tau = (100^{\text{th}} \text{ pctl.})$ & 50.34 $\pm$ 0.57 \% & 63.98 $\pm$ 0.07 \% & 70.04 $\pm$ 0.10 \%
\\
\hline
Non-private baseline & \multicolumn{3}{|c|}{\texttt{83.97 $\pm$ 0.02} \%}
\\
\hline
\end{tabular}
\end{subtable}
\\
\\
\\
\begin{subtable}[c]{\textwidth}
\centering
\begin{tabular}{|l|c|c|c|}
\hline
\textbf{Food-101} & \textbf{(a)} $(2,10^{-5})$-DP & \textbf{(b)} $(4,10^{-5})$-DP & \textbf{(c)} $(6,10^{-5})$-DP
\\
\hline
$\tau = 0.1$ & 48.36 $\pm$ 1.08 \% & 54.30 $\pm$ 0.16 \% & 56.40 $\pm$ 0.12 \%
\\
\hline
$\tau = 1.0$ & 48.57 $\pm$ 1.22 \% & 54.28 $\pm$ 0.21 \% & 56.62 $\pm$ 0.09 \%
\\
\hline
$\tau = 5.0$ & 48.92 $\pm$ 1.14 \% & 54.85 $\pm$ 0.14 \% & 56.67 $\pm$ 0.10 \%
\\
\hline
$\tau = 10.0$ & 49.26 $\pm$ 1.00 \% & 54.73 $\pm$ 0.15 \% & 56.49 $\pm$ 0.05 \%
\\
\hline
$\tau = (0^{\text{th}} \text{ pctl.})$ & 48.84 $\pm$ 0.89 \% & 54.31 $\pm$ 0.11 \% & 56.13 $\pm$ 0.08 \%
\\
\hline
$\tau = (10^{\text{th}} \text{ pctl.})$ & 45.54 $\pm$ 0.88 \% & 52.22 $\pm$ 0.24 \% & 54.29 $\pm$ 0.04 \%
\\
\hline
$\tau = (20^{\text{th}} \text{ pctl.})$ & 44.95 $\pm$ 0.94 \% & 51.91 $\pm$ 0.20 \% & 54.06 $\pm$ 0.09 \%
\\
\hline
$\tau = (40^{\text{th}} \text{ pctl.})$ & 43.69 $\pm$ 0.87 \% & 51.43 $\pm$ 0.22 \% & 53.88 $\pm$ 0.02 \%
\\
\hline
$\tau = (80^{\text{th}} \text{ pctl.})$ & 42.52 $\pm$ 0.96 \% & 50.60 $\pm$ 0.20 \% & 52.92 $\pm$ 0.09 \%
\\
\hline
$\tau = (100^{\text{th}} \text{ pctl.})$ & 37.36 $\pm$ 1.26 \% & 46.71 $\pm$ 0.18 \% & 49.73 $\pm$ 0.11 \%
\\
\hline
Non-private baseline & \multicolumn{3}{|c|}{\texttt{63.20 $\pm$ 0.06} \%}
\\
\hline
\end{tabular}
\end{subtable}
\\
\\
\\
\begin{subtable}[c]{\textwidth}
\centering
\begin{tabular}{|l|c|c|c|}
\hline
\textbf{CIFAR-100} & \textbf{(a)} $(2,10^{-5})$-DP & \textbf{(b)} $(4,10^{-5})$-DP & \textbf{(c)} $(6,10^{-5})$-DP
\\
\hline
$\tau = 0.1$ & 54.32 $\pm$ 0.64 \% & 60.35 $\pm$ 0.13 \% & 61.09 $\pm$ 0.07 \%
\\
\hline
$\tau = 1.0$ & 54.49 $\pm$ 0.63 \% & 60.26 $\pm$ 0.10 \% & 61.15 $\pm$ 0.11 \%
\\
\hline
$\tau = 5.0$ & 54.33 $\pm$ 0.42 \% & 60.17 $\pm$ 0.08 \% & 61.16 $\pm$ 0.11 \%
\\
\hline
$\tau = 10.0$ & 55.10 $\pm$ 0.26 \% & 59.44 $\pm$ 0.12 \% & 60.36 $\pm$ 0.12 \%
\\
\hline
$\tau = 17.7(0^{\text{th}} \text{ pctl.})$ & 54.40 $\pm$ 0.36 \% & 58.93 $\pm$ 0.10 \% &  59.96 $\pm$ 0.09 \%
\\
\hline
$\tau = 29.2(10^{\text{th}} \text{ pctl.})$ & 52.16 $\pm$ 0.40 \% & 57.40 $\pm$ 0.17 \% & 59.89 $\pm$ 0.09 \%
\\
\hline
$\tau = 31.6(20^{\text{th}} \text{ pctl.})$ & 51.61 $\pm$ 0.47 \% & 57.57 $\pm$ 0.10 \% & 58.72 $\pm$ 0.08 \%
\\
\hline
$\tau = 34.7(40^{\text{th}} \text{ pctl.})$ & 49.98 $\pm$ 0.52 \% & 56.67 $\pm$ 0.16 \% & 58.15 $\pm$ 0.11 \%
\\
\hline
$\tau = 40.1(80^{\text{th}} \text{ pctl.})$ & 47.23 $\pm$ 0.48 \% & 55.15 $\pm$ 0.09 \% & 56.87 $\pm$ 0.08 \%
\\
\hline
$\tau = 55.7(100^{\text{th}} \text{ pctl.})$ & 45.40 $\pm$ 0.96 \% & 51.77 $\pm$ 0.09 \% & 54.46 $\pm$ 0.07 \%
\\
\hline
Non-private baseline & \multicolumn{3}{|c|}{\texttt{66.91 $\pm$ 0.05} \%}
\\
\hline
\end{tabular}
\end{subtable}
\label{tab3}
\end{table}

\begin{table}[!htb]
\caption{\textbf{CIFAR-10, TweetEval Emoji and Emotion:} Average test accuracy $\pm$ 1 standard deviation in the last 5 epochs for different values of clip norm $\tau$ in the experiments of \Cref{expts}. Note that \enquote{pctl.} stands for percentile. \enquote{Non-private baseline} is the accuracy of vanilla non-private SGD in the same setting.}
\vspace{0.3 cm}
\begin{subtable}[c]{\textwidth}
\centering
\begin{tabular}{|l|c|c|c|}
\hline
\textbf{CIFAR-10} & \textbf{(a)} $(2,10^{-5})$-DP & \textbf{(b)} $(4,10^{-5})$-DP & \textbf{(c)} $(6,10^{-5})$-DP 
\\
\hline
$\tau = 0.1$ & 83.61 $\pm$ 0.21 \% & 84.84 $\pm$ 0.04 \% & 85.41 $\pm$ 0.05 \%
\\
\hline
$\tau = 1.0$ & 83.87 $\pm$ 0.15 \% & 84.83 $\pm$ 0.04 \% & 85.58 $\pm$ 0.06 \%
\\
\hline
$\tau = 5.0$ & 83.90 $\pm$ 0.20 \% & 84.87 $\pm$ 0.10 \% & 85.45 $\pm$ 0.02 \%
\\
\hline
$\tau = 10.0$ & 83.97 $\pm$ 0.15 \% & 85.15 $\pm$ 0.07 \% & 85.43 $\pm$ 0.06 \%
\\
\hline
$\tau = 19.8 (0^{\text{th}} \text{ pctl.})$ & 83.79 $\pm$ 0.20 \% & 84.75 $\pm$ 0.06 \% & 85.31 $\pm$ 0.08 \%
\\
\hline
$\tau = 31.7 (10^{\text{th}} \text{ pctl.})$ & 83.14 $\pm$ 0.18 \% & 84.67 $\pm$ 0.15 \% & 84.98 $\pm$ 0.10 \%
\\
\hline
$\tau = 33.6 (20^{\text{th}} \text{ pctl.})$ & 82.75 $\pm$ 0.15 \% & 84.50 $\pm$ 0.09 \% & 85.05 $\pm$ 0.03 \%
\\
\hline
$\tau = 36.1 (40^{\text{th}} \text{ pctl.})$ & 82.94 $\pm$ 0.23 \% & 84.51 $\pm$ 0.11 \% & 84.90 $\pm$ 0.07 \%
\\
\hline
$\tau = 40.8 (80^{\text{th}} \text{ pctl.})$ & 82.56 $\pm$ 0.22 \% & 83.98 $\pm$ 0.10 \% & 84.73 $\pm$ 0.08 \%
\\
\hline
$\tau = 55.4 (100^{\text{th}} \text{ pctl.})$ & 81.78 $\pm$ 0.16 \% & 83.21 $\pm$ 0.11 \% & 84.25 $\pm$ 0.06 \%
\\
\hline
Non-private baseline & \multicolumn{3}{|c|}{\texttt{86.78 $\pm$ 0.05} \%}
\\
\hline
\end{tabular}
\end{subtable}
\\
\\
\\
\begin{subtable}[c]{\textwidth}
\centering
\begin{tabular}{|l|c|c|c|}
\hline
\textbf{TweetEval Emoji} & \textbf{(a)} $(2,10^{-5})$-DP & \textbf{(b)} $(4,10^{-5})$-DP & \textbf{(c)} $(6,10^{-5})$-DP
\\
\hline
$\tau = 0.1$ & 26.15 $\pm$ 0.26 \% & 28.13 $\pm$ 0.12 \% & 28.59 $\pm$ 0.21 \%
\\
\hline
$\tau = 1.0$ & 26.21 $\pm$ 0.30 \% & 28.21 $\pm$ 0.10 \% & 28.68 $\pm$ 0.17 \%
\\
\hline
$\tau = 5.0$ & 26.28 $\pm$ 0.33 \% &  28.14 $\pm$ 0.10 \% & 28.77 $\pm$ 0.23 \%
\\
\hline
$\tau = 10.0$ & 26.34 $\pm$ 0.37 \% & 28.00 $\pm$ 0.15 \% & 28.57 $\pm$ 0.18 \%
\\
\hline
$\tau = 13.8(0^{\text{th}} \text{ pctl.})$ & 26.04 $\pm$ 0.29 \% & 27.86 $\pm$ 0.06 \% & 28.40 $\pm$ 0.13 \%
\\
\hline
$\tau = 15.9(10^{\text{th}} \text{ pctl.})$ & 25.92 $\pm$ 0.29 \% & 27.67 $\pm$ 0.10 \% & 28.06 $\pm$ 0.06 \%
\\
\hline
$\tau = 16.2(20^{\text{th}} \text{ pctl.})$ & 25.78 $\pm$ 0.32 \% & 27.63 $\pm$ 0.08 \% & 28.06 $\pm$ 0.05 \%
\\
\hline
$\tau = 16.7(40^{\text{th}} \text{ pctl.})$ & 25.83 $\pm$ 0.30 \% & 27.50 $\pm$ 0.07 \% & 28.01 $\pm$ 0.07 \%
\\
\hline
$\tau = 17.5(80^{\text{th}} \text{ pctl.})$ & 25.70 $\pm$ 0.31 \% & 27.46 $\pm$ 0.07 \% & 27.93 $\pm$ 0.07 \%
\\
\hline
$\tau = 20.8(100^{\text{th}} \text{ pctl.})$ & 25.55 $\pm$ 0.26 \% & 27.06 $\pm$ 0.11 \% & 27.73 $\pm$ 0.05 \%
\\
\hline
Non-private baseline & \multicolumn{3}{|c|}{\texttt{30.10 $\pm$ 0.03} \%}
\\
\hline
\end{tabular}
\end{subtable}
\\
\\
\\
\begin{subtable}[c]{\textwidth}
\centering
\begin{tabular}{|l|c|c|c|}
\hline
\textbf{Emotion} & \textbf{(a)} $(2,10^{-5})$-DP & \textbf{(b)} $(4,10^{-5})$-DP & \textbf{(c)} $(6,10^{-5})$-DP
\\
\hline
$\tau = 0.1$ & 51.95 $\pm$ 0.42 \% & 54.88 $\pm$ 0.30 \% & 56.45 $\pm$ 0.25 \%
\\
\hline
$\tau = 1.0$ & 52.39 $\pm$ 0.61 \% & 55.05 $\pm$ 0.26 \% & 56.10 $\pm$ 0.16 \%
\\
\hline
$\tau = 5.0$ & 52.40 $\pm$ 0.57 \% & 55.25 $\pm$ 0.13 \% & 56.35 $\pm$ 0.20 \%
\\
\hline
$\tau = 10.0$ & 52.24 $\pm$ 0.64 \% & 54.95 $\pm$ 0.22 \% & 55.82 $\pm$ 0.33 \%
\\
\hline
$\tau = 14.6 (0^{\text{th}} \text{ pctl.})$ & 51.86 $\pm$ 0.70 \% & 54.52 $\pm$ 0.29 \% & 55.83 $\pm$ 0.16 \%
\\
\hline
$\tau = 16.4 (10^{\text{th}} \text{ pctl.})$ & 51.59 $\pm$ 0.46 \% & 54.25 $\pm$ 0.29 \% & 55.03 $\pm$ 0.20 \%
\\
\hline
$\tau = 16.8 (20^{\text{th}} \text{ pctl.})$ & 51.03 $\pm$ 0.42 \% & 54.16 $\pm$ 0.34 \% & 55.09 $\pm$ 0.19 \%
\\
\hline
$\tau = 17.2 (40^{\text{th}} \text{ pctl.})$ & 50.84 $\pm$ 0.48 \% & 54.21 $\pm$ 0.35 \% & 55.04 $\pm$ 0.22 \%
\\
\hline
$\tau = 17.8 (80^{\text{th}} \text{ pctl.})$ & 50.94 $\pm$ 0.50 \% & 53.78 $\pm$ 0.34 \% & 54.97 $\pm$ 0.09 \%
\\
\hline
$\tau = 20.2 (100^{\text{th}} \text{ pctl.})$ & 50.34 $\pm$ 0.46 \% & 53.64 $\pm$ 0.22 \% & 54.21 $\pm$ 0.10 \%
\\
\hline
Non-private baseline & \multicolumn{3}{|c|}{\texttt{60.43 $\pm$ 0.22} \%}
\\
\hline
\end{tabular}
\end{subtable}
\label{tab3-nlp-0}
\end{table}

\begin{table}[!htb]
\caption{\textbf{EMNIST and Fashion-MNIST:} Average test accuracy $\pm$ 1 standard deviation in the last 5 epochs for different values of clip norm $\tau$ in the experiments of \Cref{extra-expts}. Note that \enquote{pctl.} stands for percentile. \enquote{Non-private baseline} is the accuracy of vanilla non-private SGD in the same setting.}
\vspace{0.3 cm}
\begin{subtable}[c]{\textwidth}
\centering
\begin{tabular}{|l|c|c|c|}
\hline
\textbf{EMNIST} & \textbf{(a)} $(2,10^{-5})$-DP & \textbf{(b)} $(4,10^{-5})$-DP & \textbf{(c)} $(6,10^{-5})$-DP
\\
\hline
$\tau = 1.0$ & 62.78 $\pm$ 0.98 \% & 65.90 $\pm$ 0.09 \% & 67.02 $\pm$ 0.11 \%
\\
\hline
$\tau = 3.0$ & 63.17 $\pm$ 0.94 \% & 66.16 $\pm$ 0.10 \% & 66.94 $\pm$ 0.10 \%
\\
\hline
$\tau = 5.7 (0^{\text{th}} \text{ pctl.})$ & 62.85 $\pm$ 0.89 \% & 65.55 $\pm$ 0.17 \% & 66.79 $\pm$ 0.04 \%
\\
\hline
$\tau = 11.6 (10^{\text{th}} \text{ pctl.})$ & 61.11 $\pm$ 0.99 \% & 64.77 $\pm$ 0.25 \% & 65.41 $\pm$ 0.11 \%
\\
\hline
$\tau = 12.7 (20^{\text{th}} \text{ pctl.})$ & 61.12 $\pm$ 0.98 \% & 64.30 $\pm$ 0.09 \% & 64.96 $\pm$ 0.10 \%
\\
\hline
$\tau = 14.1 (40^{\text{th}} \text{ pctl.})$ & 60.42 $\pm$ 0.96 \% & 63.88 $\pm$ 0.13 \% & 64.67 $\pm$ 0.14 \%
\\
\hline
$\tau = 16.7 (80^{\text{th}} \text{ pctl.})$ & 59.49 $\pm$ 0.83 \% & 63.07 $\pm$ 0.10 \% & 63.94 $\pm$ 0.08 \%
\\
\hline
$\tau = 26.1 (100^{\text{th}} \text{ pctl.})$ & 56.36 $\pm$ 1.04 \% & 61.26 $\pm$ 0.16 \% & 62.06 $\pm$ 0.08 \%
\\
\hline
Non-private baseline & \multicolumn{3}{|c|}{\texttt{69.37 $\pm$ 0.04} \%}
\\
\hline
\end{tabular}
\end{subtable}
\\
\\
\\
\begin{subtable}[c]{\textwidth}
\centering
\begin{tabular}{|l|c|c|c|}
\hline
\textbf{Fashion-MNIST} & \textbf{(a)} $(2,10^{-5})$-DP & \textbf{(b)} $(4,10^{-5})$-DP & \textbf{(c)} $(6,10^{-5})$-DP
\\
\hline
$\tau = 1.0$ & 82.99 $\pm$ 0.13 \% & 83.86 $\pm$ 0.11 \% & 84.06 $\pm$ 0.02 \%
\\
\hline
$\tau = 3.0 (0^{\text{th}} \text{ pctl.})$ & 82.82 $\pm$ 0.18 \% & 83.85 $\pm$ 0.06 \% & 83.99 $\pm$ 0.11 \%
\\
\hline
$\tau = 10.0 (10^{\text{th}} \text{ pctl.})$ & 82.24 $\pm$ 0.14 \% & 83.43 $\pm$ 0.08 \% & 83.56 $\pm$ 0.06 \%
\\
\hline
$\tau = 12.2 (20^{\text{th}} \text{ pctl.})$ & 82.27 $\pm$ 0.07 \% & 83.36 $\pm$ 0.09 \% & 83.59 $\pm$ 0.09 \%
\\
\hline
$\tau = 15.6 (40^{\text{th}} \text{ pctl.})$ & 82.05 $\pm$ 0.19 \% & 83.20 $\pm$ 0.07 \% & 83.31 $\pm$ 0.08 \%
\\
\hline
$\tau = 22.3 (80^{\text{th}} \text{ pctl.})$ & 81.24 $\pm$ 0.14 \% & 82.43 $\pm$ 0.07 \% & 82.67 $\pm$ 0.10 \%
\\
\hline
$\tau = 32.4 (100^{\text{th}} \text{ pctl.})$ & 79.99 $\pm$ 0.18 \% & 81.69 $\pm$ 0.12 \% & 82.14 $\pm$ 0.10 \%
\\
\hline
Non-private baseline & \multicolumn{3}{|c|}{\texttt{84.44 $\pm$ 0.05} \%}
\\
\hline
\end{tabular}
\end{subtable}
\label{tab3-nlp}
\end{table}

\clearpage

\section{Some Empirical Results in the Non-Convex Case}
\label{sec:noncvx-exp}
Here we show some empirical results on a nonconvex neural network (NN) problem. Specifically, we consider a two-layer feedforward NN having one hidden layer with tanh activation. We use tanh instead of ReLU activation due to two reasons: (i) \citet{papernot2020tempered} show that tanh performs better than ReLU in private training of NNs (which we also observed), and (ii) we expect Lipschitz constants to be smaller with tanh than ReLU. 
We run our experiments on CIFAR-100 and EMNIST; we use pre-trained features for the former, while for the latter, we use the raw images. Specifically, for CIFAR-100, we use 512-dimensional features obtained
from the pre-softmax layer of a pre-trained ResNet-18 model on ImageNet (same as in \Cref{expts}). For EMNIST, we use the flattened images as features (just as mentioned in \Cref{extra-expts}).
For both datasets, we set the dimension of the hidden layer of the NN to be 256. 
Computing the per-sample Lipschitz constants is much harder here so we just test several values of the clip norm $\tau$, viz., $\{1,3,6,12,18,24,30,36\}$, and show the performance trend as a function of $\tau$. All other experimental details are the same as in \Cref{expts}. 

In \Cref{fig:supp-noncvx}, we plot the {best test accuracy} obtained for different values of $\tau$ (by tuning over $\eta$) averaged over the last 5 epochs and across 3 independent runs. The figure caption discusses the results. The exact values are tabulated in \Cref{tab-noncvx}.

So empirically, smaller clip norms perform better in two-layer nonconvex NNs, similar to the convex case.

A single NVIDIA TITAN Xp GPU was used for all the experiments in this paper.

\begin{figure*}[!htb]
\centering 
\subfloat[EMNIST]{
	\includegraphics[width=0.75\textwidth]{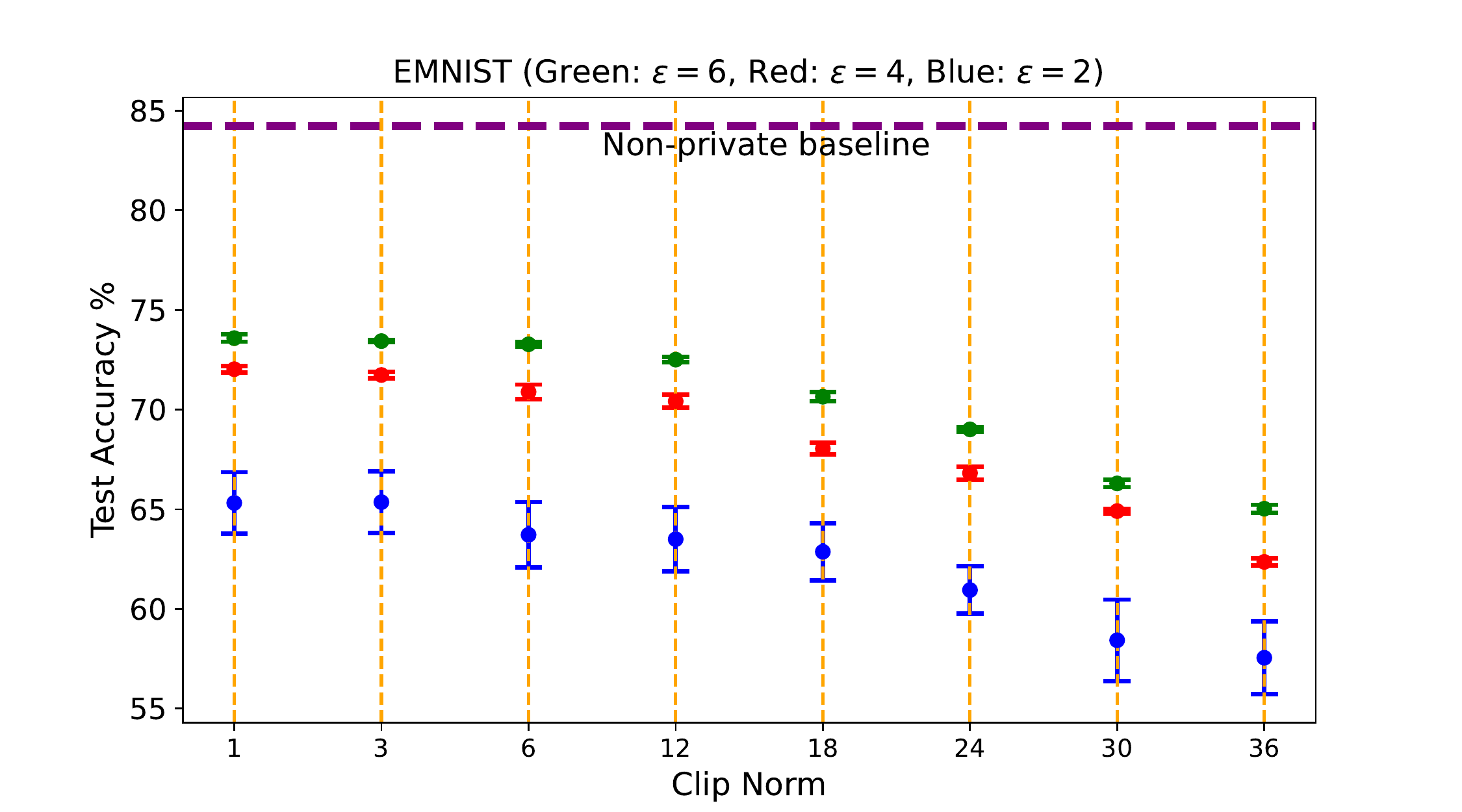}
}
\\
\subfloat[CIFAR-100]{
	\includegraphics[width=0.75\textwidth]{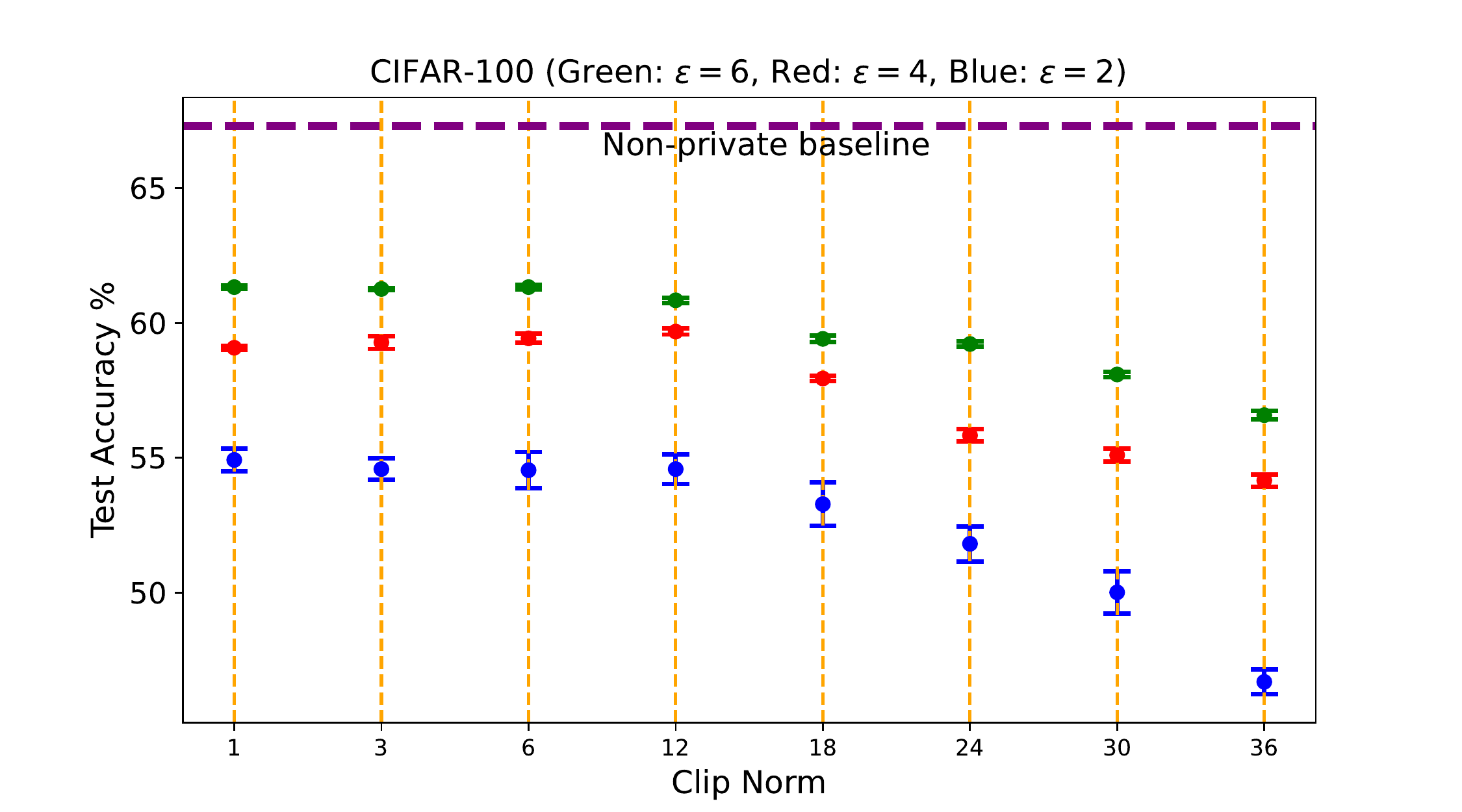}
	} 
\caption{{\textbf{EMNIST and CIFAR-100 with two-layer NN:} Average test accuracy (depicted by the blobs) $\pm$ 1 standard deviation (depicted by the bars above and below the blobs) in the last 5 epochs for different values of clip norm $\tau$.
The general trend above is that the accuracy drops as the clip norm increases; this is similar to what we saw in the experiments on convex problems, and consistent with the main message of \Cref{thm-small-clip} (even though it is for the convex case), viz., smaller clip norms should perform better as they attain a lower risk bound.
}}
\label{fig:supp-noncvx}
\end{figure*}

\begin{table}[!htb]
\caption{\textbf{EMNIST and CIFAR-100 with two-layer NN:} Average test accuracy $\pm$ 1 standard deviation in the last 5 epochs for different values of clip norm $\tau$ in the experiments above. \enquote{Non-private baseline} is the accuracy of vanilla non-private SGD in the same setting.}
\vspace{0.3 cm}
\begin{subtable}[c]{\textwidth}
\centering
\begin{tabular}{|l|c|c|c|}
\hline
\textbf{EMNIST} & \textbf{(a)} $(2,10^{-5})$-DP & \textbf{(b)} $(4,10^{-5})$-DP & \textbf{(c)} $(6,10^{-5})$-DP
\\
\hline
$\tau = 1$ & 65.32 $\pm$ 1.54 \% & 72.03 $\pm$ 0.17 \% & 73.59 $\pm$ 0.18 \%
\\
\hline
$\tau = 3$ & 65.36 $\pm$ 1.54 \% & 71.74 $\pm$ 0.16 \% & 73.44 $\pm$ 0.06 \%
\\
\hline
$\tau = 6$ & 63.72 $\pm$ 1.64 \% & 70.89 $\pm$ 0.37 \% & 73.28 $\pm$ 0.12 \%
\\
\hline
$\tau = 12$ & 63.50 $\pm$ 1.60 \% & 70.42 $\pm$ 0.33 \% & 72.51 $\pm$ 0.14 \%
\\
\hline
$\tau = 18$ & 62.87 $\pm$ 1.44 \% & 68.04 $\pm$ 0.29 \% & 70.65 $\pm$ 0.23 \%
\\
\hline
$\tau = 24$ & 60.95 $\pm$ 1.19 \% & 66.82 $\pm$ 0.33 \% & 69.01 $\pm$ 0.11 \%
\\
\hline
$\tau = 30$ & 58.43 $\pm$ 2.04 \% & 64.91 $\pm$ 0.12 \% & 66.30 $\pm$ 0.19 \%
\\
\hline
$\tau = 36$ & 57.55 $\pm$ 1.83 \% & 62.36 $\pm$ 0.19 \% & 65.03 $\pm$ 0.20 \%
\\
\hline
Non-private baseline & \multicolumn{3}{|c|}{\texttt{84.24 $\pm$ 0.05} \%}
\\
\hline
\end{tabular}
\end{subtable}
\\
\\
\\
\begin{subtable}[c]{\textwidth}
\centering
\begin{tabular}{|l|c|c|c|}
\hline
\textbf{CIFAR-100} & \textbf{(a)} $(2,10^{-5})$-DP & \textbf{(b)} $(4,10^{-5})$-DP & \textbf{(c)} $(6,10^{-5})$-DP
\\
\hline
$\tau = 1$ & 54.92 $\pm$ 0.42 \% & 59.08 $\pm$ 0.08 \% & 61.33 $\pm$ 0.06 \%
\\
\hline
$\tau = 3$ & 54.58 $\pm$ 0.40 \% & 59.28 $\pm$ 0.24 \% & 61.26 $\pm$ 0.04 \% 
\\
\hline
$\tau = 6$ & 54.54 $\pm$ 0.67 \% & 59.43 $\pm$ 0.17 \% & 61.33 $\pm$ 0.08 \%
\\
\hline
$\tau = 12$ & 54.58 $\pm$ 0.55 \% & 59.48 $\pm$ 0.11 \% & 60.84 $\pm$ 0.10 \%
\\
\hline
$\tau = 18$ & 53.28 $\pm$ 0.81 \% & 57.94 $\pm$ 0.09 \% & 59.41 $\pm$ 0.12 \%
\\
\hline
$\tau = 24$ & 51.81 $\pm$ 0.65 \% & 55.83 $\pm$ 0.23 \% & 59.22 $\pm$ 0.09 \%
\\
\hline
$\tau = 30$ & 50.01 $\pm$ 0.78 \% & 55.10 $\pm$ 0.24 \% & 58.09 $\pm$ 0.10 \%
\\
\hline
$\tau = 36$ & 46.69 $\pm$ 0.46 \% & 54.16 $\pm$ 0.23 \% & 56.58 $\pm$ 0.16 \%
\\
\hline
Non-private baseline & \multicolumn{3}{|c|}{\texttt{67.31 $\pm$ 0.04} \%}
\\
\hline
\end{tabular}
\end{subtable}
\label{tab-noncvx}
\end{table}

\clearpage

\section{Logistic Regression Satisfies Assumption~\ref{asmp-lip-gen}}
\label{log-reg-asmp}
Consider doing logistic regression for multi-class classification with the cross-entropy loss, where $m$ is the number of classes. Suppose $\bm{x} \sim \mathcal{F}$ (with a `1' appended to account for the bias term) is the feature vector and $y \in [m]$ is the corresponding class number.
Let the model parameter $\bm{w}$ be split as $\bm{w} = [\bm{w}_1,\ldots,\bm{w}_m]$, where each $\{\bm{w}_j\}_{j=1}^m \in \mathbb{R}^d$, $d$ being the dimension of $\bm{x}$; so, $\bm{w}_j$ denotes the parameter vector corresponding to class $j$. Then, our predicted probability of $\bm{x}$ belonging to class $j$ with the softmax predictor is:
\begin{equation*}
    p_j = \frac{\exp(\bm{w}_j^T \bm{x})}{\sum_{k=1}^m \exp(\bm{w}_k^T \bm{x})}.
\end{equation*}
We use the standard cross-entropy loss for logistic regression which gives us:
\begin{equation}
    \ell(\bm{w}, \bm{x}, y) = -\log({p}_{y}).
\end{equation}
Now, with some differentiation, it can be checked that:
\begin{equation}
    \|\nabla \ell_{\bm{w}}(\bm{w}, \bm{x}, y)\| = \Bigg(\sqrt{\sum_{j \neq y} p_j^2 + (1 - p_y)^2}\Bigg) \|\bm{x}\| \leq \sqrt{2} \|\bm{x}\|.
\end{equation}
Thus, logistic regression satisfies \Cref{asmp-lip-gen} with $G(\bm{x} , y) = \sqrt{2} \|\bm{x}\|$ in any parameter domain $\mathcal{W}$.

\section{Example for Assumption~\ref{asmp1-may4}}
\label{new-asmp-eg}
Suppose $f_i(\bm{w}) = \|\bm{w} - \bm{w}_i^{\ast}\|$. Then, $f(\bm{w}) = \frac{1}{n}\sum_{i=1}^n \|\bm{w} - \bm{w}_i^{\ast}\|$ and $\bm{w}^{\ast}=\text{arg min}_{\bm{w} \in \mathbb{R}^d} f(\bm{w})$. 

Let $\overline{\bm{w}}^{\ast}=\frac{1}{n} \sum_{i=1}^n \bm{w}_i^{\ast}$. For $\|\bm{w} - \bm{w}^{\ast}\| \geq D := \max(2\|\overline{\bm{w}}^{\ast} - \bm{w}^{\ast}\|, \frac{4}{n}\sum_{i=1}^n \|\overline{\bm{w}}^{\ast} - \bm{w}_i^{\ast}\|)$, it can be shown that $f(\bm{w}) - f(\bm{w}^{\ast}) \geq \frac{1}{4}\|\bm{w} - \bm{w}^{\ast}\|$. 
Noting that $G = \mathcal{O}(1)$ here (as the sub-gradient of $f_i$ is bounded by 1 in magnitude), \Cref{asmp1-may4} easily holds here for small $\varphi$.
\begin{proof}
Using the triangle inequality, we have for any $\bm{w}$ satisfying $\|\bm{w} - \bm{w}^{\ast}\| \geq D$:
\begin{flalign}
   f(\bm{w}) & = \frac{1}{n}\sum_{i=1}^n \|\bm{w} - \bm{w}_i^{\ast}\| 
   \\
   & \geq \Big\|\bm{w} -  \frac{1}{n}\sum_{i=1}^n \bm{w}_i^{\ast}\Big\| 
   \\
   & = \|\bm{w} -  \overline{\bm{w}}^{\ast}\| 
   \\
   & = \|\bm{w} - \bm{w}^{\ast} - (\overline{\bm{w}}^{\ast} - \bm{w}^{\ast})\| 
   \\
   & \geq \Big|\|\bm{w} - \bm{w}^{\ast}\| - \|\overline{\bm{w}}^{\ast} - \bm{w}^{\ast}\|\Big|
   \\
   & = \Bigg|\frac{\|\bm{w} - \bm{w}^{\ast}\|}{2} + \frac{\|\bm{w} - \bm{w}^{\ast}\|}{2} - \|\overline{\bm{w}}^{\ast} - \bm{w}^{\ast}\|\Bigg|
   \\
   \label{eq:15-asmp-eg}
   & \geq \frac{\|\bm{w} - \bm{w}^{\ast}\|}{2}.
\end{flalign}
\Cref{eq:15-asmp-eg} follows because $\frac{\|\bm{w} - \bm{w}^{\ast}\|}{2} \geq \frac{D}{2} \geq \|\overline{\bm{w}}^{\ast} - \bm{w}^{\ast}\|$ (from the definition of $D$). Next, since $\bm{w}^{\ast}$ is the minimizer of $f$ and using the definition of $D$, we have:
\begin{equation}
\label{eq:16-asmp-eg}
f(\bm{w}^{\ast}) \leq f(\overline{\bm{w}}^{\ast}) = \frac{1}{n}\sum_{i=1}^n \|\overline{\bm{w}}^{\ast} - \bm{w}_i^{\ast}\| \leq \frac{D}{4} \leq \frac{\|\bm{w} - \bm{w}^{\ast}\|}{4}.
\end{equation}

Finally, subtracting \cref{eq:16-asmp-eg} from \cref{eq:15-asmp-eg} gives us the desired result.
\end{proof}

\section{Some Useful Lemmas}
\label{app-f}
\begin{lemma}[\textbf{Clipping Bias}]
\label{thm1-oct25}
Suppose $\bm{v}(\bm{\zeta})$ (where $\bm{\zeta}$ denotes the source of randomness) is an unbiased estimator of $\bm{v}$, i.e. $\mathbb{E}_{\bm{\zeta}}[\bm{v}(\bm{\zeta})] = \bm{v}$.
Let $b(\tau)$ denote the clipping bias of $\text{clip}(\bm{v}(\bm{\zeta}), \tau)$, i.e.
\begin{equation*}
    b(\tau) = \Big\|\bm{{v}} - \mathbb{E}_{\bm{\zeta}}\Big[\textup{clip}(\bm{v}(\bm{\zeta}), \tau)\Big]\Big\|.
\end{equation*}
Then for any $p > 1$,
\begin{flalign*}
    b(\tau)
    \leq \Big(\mathbb{E}[\|\bm{v}(\bm{\zeta})\|^p]\Big)^{\frac{1}{p}} \Big(\mathbb{P}(\|\bm{v}(\bm{\zeta})\| \geq \tau)\Big)^{1 - \frac{1}{p}} - \tau \mathbb{P}(\|\bm{v}(\bm{\zeta})\| \geq \tau).
\end{flalign*}
\end{lemma}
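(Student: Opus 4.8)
The plan is to reduce the vector-valued bias to an expectation of a scalar quantity, and then bound that quantity by a combination of Jensen's and Hölder's inequalities. First I would invoke unbiasedness to write $\bm{v} = \mathbb{E}_{\bm{\zeta}}[\bm{v}(\bm{\zeta})]$, so that $b(\tau) = \big\|\mathbb{E}_{\bm{\zeta}}\big[\bm{v}(\bm{\zeta}) - \text{clip}(\bm{v}(\bm{\zeta}),\tau)\big]\big\|$. Applying the triangle inequality for expectations (equivalently, Jensen's inequality for the convex function $\|\cdot\|$) moves the norm inside the expectation, giving $b(\tau) \leq \mathbb{E}_{\bm{\zeta}}\big[\|\bm{v}(\bm{\zeta}) - \text{clip}(\bm{v}(\bm{\zeta}),\tau)\|\big]$.

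Next I would evaluate the integrand pointwise using the definition of $\text{clip}$ in \cref{eq:jan8-1}. When $\|\bm{v}(\bm{\zeta})\| \leq \tau$ no clipping occurs and the difference vanishes; when $\|\bm{v}(\bm{\zeta})\| > \tau$ we have $\text{clip}(\bm{v}(\bm{\zeta}),\tau) = \tau\,\bm{v}(\bm{\zeta})/\|\bm{v}(\bm{\zeta})\|$, so the difference has norm $\|\bm{v}(\bm{\zeta})\| - \tau$. Hence the integrand equals $(\|\bm{v}(\bm{\zeta})\| - \tau)\,\mathbf{1}\{\|\bm{v}(\bm{\zeta})\| \geq \tau\}$, and splitting the expectation linearly yields $b(\tau) \leq \mathbb{E}\big[\|\bm{v}(\bm{\zeta})\|\,\mathbf{1}\{\|\bm{v}(\bm{\zeta})\| \geq \tau\}\big] - \tau\,\mathbb{P}(\|\bm{v}(\bm{\zeta})\| \geq \tau)$.

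Finally I would bound the surviving first term by Hölder's inequality with conjugate exponents $p$ and $q = p/(p-1)$. Writing $A = \{\|\bm{v}(\bm{\zeta})\| \geq \tau\}$, Hölder gives $\mathbb{E}[\|\bm{v}(\bm{\zeta})\|\,\mathbf{1}_A] \leq \big(\mathbb{E}[\|\bm{v}(\bm{\zeta})\|^p]\big)^{1/p}\big(\mathbb{E}[\mathbf{1}_A^q]\big)^{1/q}$; since $\mathbf{1}_A^q = \mathbf{1}_A$ and $1/q = 1 - 1/p$, the second factor is exactly $\big(\mathbb{P}(\|\bm{v}(\bm{\zeta})\| \geq \tau)\big)^{1-1/p}$, and substituting recovers the claimed bound. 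There is no genuine obstacle here — the proof is a short chain of standard inequalities — and the only point meriting care is choosing the conjugate exponent correctly so that the tail probability enters with the exponent $1 - 1/p$. That exponent is precisely what makes the bound useful under \Cref{dec23-asmp1}, since there $\big(\mathbb{E}[\|\bm{v}(\bm{\zeta})\|^p]\big)^{1/p}$ is controlled by a bounded moment while $\mathbb{P}(\|\bm{v}(\bm{\zeta})\| \geq \tau)$ decays polynomially in $\tau$ by Markov's inequality, letting the whole expression be traded off against the clip norm.
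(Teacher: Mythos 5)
Your proposal is correct and follows essentially the same route as the paper's proof: unbiasedness, Jensen's inequality for the norm, a pointwise evaluation of the clipping residual as $(\|\bm{v}(\bm{\zeta})\|-\tau)\mathbbm{1}(\|\bm{v}(\bm{\zeta})\|\geq\tau)$, and H\"{o}lder with conjugate exponents $p$ and $q=p/(p-1)$. The only cosmetic difference is that you push the norm inside the expectation before simplifying the integrand, whereas the paper simplifies inside the norm first; the resulting bound and intermediate quantities are identical.
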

\begin{proof}
We shall omit the subscript $\bm{\zeta}$ in expectations henceforth, and it should be inferred from context.
We can bound the clipping bias $b(\tau)$ with a clip norm $\tau$ as:
\begin{flalign}
    b(\tau) &= 
    \Big\|\bm{{v}} - \mathbb{E}\Big[\text{clip}(\bm{v}(\bm{\zeta}), \tau)\Big]\Big\| 
    \\
    & =
    \Big\|\bm{{v}} - \mathbb{E}\Big[\bm{v}(\bm{\zeta}) \min \Big(1, \frac{\tau}{\|\bm{v}(\bm{\zeta})\|} \Big)\Big]\Big\| 
    \\
    \label{eq:jan18-1}
    &= \Big\|\mathbb{E}\Big[\bm{v}(\bm{\zeta}) \Big(1 - \frac{\tau}{\|\bm{v}(\bm{\zeta})\|}\Big) \mathbbm{1}(\|\bm{v}(\bm{\zeta})\| \geq \tau)\Big]\Big\|
    \\
    \label{eq:oct25-1}
    & \leq
    \mathbb{E}\Big[\|\bm{v}(\bm{\zeta})\| {\Big(1 - \frac{\tau}{\|\bm{v}(\bm{\zeta})\|}\Big)} \mathbbm{1}(\|\bm{v}(\bm{\zeta})\| \geq \tau)\Big]
    \\
    \label{eq:oct25-2}
    & = \mathbb{E}\Big[\|\bm{v}(\bm{\zeta})\| \mathbbm{1}(\|\bm{v}(\bm{\zeta})\| \geq \tau)\Big] - \tau \mathbb{E}[\mathbbm{1}(\|\bm{v}(\bm{\zeta})\| \geq \tau)]
    \\
    \label{eq:oct25-3}
    & \leq \Big(\mathbb{E}[\|\bm{v}(\bm{\zeta})\|^p]\Big)^{\frac{1}{p}} \Big(\mathbb{E}\Big[\Big(\mathbbm{1}(\|\bm{v}(\bm{\zeta})\| \geq \tau)\Big)^q\Big]\Big)^{\frac{1}{q}} - \tau \mathbb{P}(\|\bm{v}(\bm{\zeta})\| \geq \tau),
\end{flalign}
for $p,q \in (1,\infty)$ such that $\frac{1}{p} + \frac{1}{q} = 1$; this follows from H\"{o}lder's inequality. Now
\begin{flalign}
    \label{eq:jan18-2}
    \mathbb{E}\Big[\Big(\mathbbm{1}(\|\bm{v}(\bm{\zeta})\| \geq \tau)\Big)^q\Big] = \mathbb{E}[\mathbbm{1}(\|\bm{v}(\bm{\zeta})\| \geq \tau)] = \mathbb{P}(\|\bm{v}(\bm{\zeta})\| \geq \tau).
\end{flalign}
Plugging this back in \cref{eq:oct25-3} and substituting $\frac{1}{q} = 1 - \frac{1}{p}$, we get the desired result for $b(\tau)$.
\end{proof}

\begin{corollary}[\textbf{Clipping Bias}]
\label{cor1-dec22}
In the setting of \Cref{thm1-oct25}, we have the following simpler upper bound for any $p > 1$:
\begin{equation*}
        b(\tau) \leq \frac{\mathbb{E}[\|\bm{v}(\bm{\zeta})\|^p]}{\tau^{p-1}}.
    \end{equation*}
\end{corollary}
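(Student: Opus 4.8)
The plan is to derive this simpler bound directly from Lemma~\ref{thm1-oct25} by controlling the tail probability via Markov's inequality. Writing $M := \mathbb{E}[\|\bm{v}(\bm{\zeta})\|^p]$ and $P := \mathbb{P}(\|\bm{v}(\bm{\zeta})\| \geq \tau)$, the lemma already gives the two-term bound
\[
    b(\tau) \leq M^{1/p} P^{\,1-\frac{1}{p}} - \tau P.
\]
My first move would be to simply discard the subtracted term: since $\tau P \geq 0$, dropping it only weakens the right-hand side, leaving the single-term estimate $b(\tau) \leq M^{1/p} P^{\,1-1/p}$. This trades a small amount of tightness for a much cleaner expression, which is exactly what the corollary wants.

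Next I would bound the tail probability $P$ by Markov's inequality applied to the nonnegative random variable $\|\bm{v}(\bm{\zeta})\|^p$. Because the events $\{\|\bm{v}(\bm{\zeta})\| \geq \tau\}$ and $\{\|\bm{v}(\bm{\zeta})\|^p \geq \tau^p\}$ coincide, Markov yields $P \leq \mathbb{E}[\|\bm{v}(\bm{\zeta})\|^p]/\tau^p = M/\tau^p$. Substituting this into $M^{1/p} P^{\,1-1/p}$ and simplifying the exponents gives
\[
    M^{1/p}\Bigl(\tfrac{M}{\tau^{p}}\Bigr)^{1-\frac{1}{p}}
    = M^{\frac{1}{p} + \left(1 - \frac{1}{p}\right)}\,\tau^{-p\left(1-\frac{1}{p}\right)}
    = \frac{M}{\tau^{\,p-1}},
\]
which is precisely the claimed bound $b(\tau) \leq \mathbb{E}[\|\bm{v}(\bm{\zeta})\|^p]/\tau^{p-1}$.

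There is essentially no obstacle here; the corollary is a one-line consequence of the lemma. The only two ideas required are (i) throwing away the harmless negative term and (ii) invoking Markov's inequality on the $p$-th moment, and the remaining step is purely routine exponent bookkeeping. The hypothesis $p > 1$ is exactly what makes $p-1 > 0$, so the resulting denominator $\tau^{p-1}$ is an increasing power of $\tau$ and the bound is meaningful (and vanishes as $\tau \to \infty$, consistent with clipping inducing no bias in the limit).
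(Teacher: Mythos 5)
Your proposal is correct and follows exactly the same route as the paper's own proof: drop the nonnegative subtracted term $\tau\,\mathbb{P}(\|\bm{v}(\bm{\zeta})\| \geq \tau)$ from \Cref{thm1-oct25}, bound the tail probability by Markov's inequality applied to $\|\bm{v}(\bm{\zeta})\|^p$, and substitute (validly, since $1-\tfrac{1}{p}>0$ makes the bound monotone in the tail probability). Nothing to add.
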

\begin{proof}
From \Cref{thm1-oct25}, we have that:
\begin{flalign}
    \label{dec22-eq1}
    b(\tau) \leq \Big(\mathbb{E}[\|\bm{v}(\bm{\zeta})\|^p]\Big)^{\frac{1}{p}} \Big(\mathbb{P}(\|\bm{v}(\bm{\zeta})\| \geq \tau)\Big)^{1 - \frac{1}{p}},
\end{flalign}
for any $p > 1$. Using Markov's inequality, we have:
\begin{equation}
    \mathbb{P}(\|\bm{v}(\bm{\zeta})\| \geq \tau) \leq \frac{\mathbb{E}[\|\bm{v}(\bm{\zeta})\|^p]}{\tau^p}.
\end{equation}
Plugging this in \cref{dec22-eq1}, we get the desired result.
\end{proof}
It is worth mentioning here that a result similar to \Cref{cor1-dec22} has been established in Lemma 10 of \citet{zhang2020adaptive}.

\section{Proof of Theorem~\ref{thm-dp}}
\label{pf-dp}
Using the result of \citet{abadi2016deep}, we know that any $\bm{w}_t$, where $t \in \{0,\ldots,T-1\}$, will be $(\varepsilon,\delta)$-DP if we set $\sigma_n^2 = \nu \frac{T \log(\frac{1}{\delta})}{n^2 \varepsilon^2} \tau^2$ for some absolute constant $\nu$. Thus, $\bm{w}_{\widehat{t}}$ (where $\widehat{t}$ is chosen uniformly at random from $\{0,\ldots,T-1\}$ as defined in \Cref{alg:1}) will also be $(\varepsilon, \delta)$-DP.

\section{Full Version and Proof of Theorem~\ref{thm-small-clip}}
\label{small-clip-sec}
\begin{theorem}[\textbf{Convex Case}]
\label{thm-small-clip-norm-full}
Suppose each $f_i$ is convex and $\mathcal{W}$ is a convex set (which can be $\mathbb{R}^d$). In \Cref{alg:1}, for all $t < T$, set $\eta_t = \eta = \frac{C}{T \tau} \Big({\frac{1}{T} + \varphi^2}\Big)^{-1/2}$ for clip norm $\tau$, where $C > 0$ is a parameter of our choice. Recall $\bm{w}^{*} \in \textup{argmin}_{\bm{w} \in \mathcal{W}} f(\bm{w})$ and $\widehat{t} \sim \textup{Unif}[0, T-1]$. Then, DP-SGD (\Cref{alg:1}) has the following convergence guarantee:
\begin{equation*}
    \frac{1}{n}\sum_{i \in [n]} \mathbb{E}\Big[\min\Big(1,\frac{\tau}{\|\nabla f_i(\bm{w}_{\widehat{t}})\|}\Big) (f_i(\bm{w}_{\widehat{t}}) - f_i(\bm{w}^{*}))\Big] \leq \bigg(\frac{\|\bm{w}_0 - \bm{w}^{*}\|^2}{2 C} + C\bigg) \tau \sqrt{\frac{1}{T} + \varphi^2}.
\end{equation*}
Now suppose Assumption \ref{asmp-lip} holds. Then, DP-SGD has the following upper bound on the optimization risk as a function of the clip norm $\tau \in (0, G_n]$:
\begin{equation*}
    \textup{OR}(T) \leq \frac{1}{\alpha(\tau)}\Bigg(\bigg(\frac{\|\bm{w}_0 - \bm{w}^{*}\|^2}{2 C} + C\bigg) G_n \sqrt{\frac{1}{T} + \varphi^2}\Bigg) + \Bigg(\frac{G_n}{\tau \alpha(\tau)} - 1\Bigg) {\Delta(\bm{w}^{*})},
\end{equation*}
where $\Delta(\bm{w}^{*}) \geq 0$ and $\alpha(\tau) \geq 1$ are as defined in Definitions \ref{het-def} and \ref{may2-asmp-1}, respectively.
\end{theorem}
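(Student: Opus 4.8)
The plan is to first establish the generic weighted-suboptimality bound (the first displayed inequality of \Cref{thm-small-clip-norm-full}), which holds for any clip norm and does not use \Cref{asmp-lip}, and then convert it into the optimization-risk bound using the ordering $G_1<\dots<G_n$ together with \Cref{may2-asmp-1}.

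For the generic bound I would run the standard projected-SGD potential argument on $\|\bm{w}_t-\bm{w}^{*}\|^2$. By non-expansiveness of the projection $\Pi_{\mathcal{W}}$ we have $\|\bm{w}_{t+1}-\bm{w}^{*}\|^2\le\|\bm{w}_t-\eta\bm{g}_t-\bm{w}^{*}\|^2$; expanding and taking expectation conditioned on $\bm{w}_t$, the zero-mean Gaussian noise drops out of the cross term, and Poisson subsampling gives $\mathbb{E}[\bm{g}_t\mid\bm{w}_t]=\bm{\mu}_t:=\frac1n\sum_i\text{clip}(\nabla f_i(\bm{w}_t),\tau)$. The key inequality is that, since $\text{clip}(\nabla f_i(\bm{w}_t),\tau)=\min(1,\tau/\|\nabla f_i(\bm{w}_t)\|)\,\nabla f_i(\bm{w}_t)$ with a nonnegative scalar factor, convexity of each $f_i$ yields $\langle\bm{\mu}_t,\bm{w}_t-\bm{w}^{*}\rangle\ge P_t:=\frac1n\sum_i\min(1,\tau/\|\nabla f_i(\bm{w}_t)\|)\,(f_i(\bm{w}_t)-f_i(\bm{w}^{*}))$. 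For the second moment I would use $\|\text{clip}(\cdot,\tau)\|\le\tau$ together with the subsampling-variance computation to get $\mathbb{E}\big[\|\frac1b\sum_{i\in\mathcal{S}_t}\text{clip}(\nabla f_i(\bm{w}_t),\tau)\|^2\big]\le(1+1/b)\tau^2$, and then add the independent noise term $\mathbb{E}\|\bm{\zeta}_t\|^2=d\sigma_n^2$. Crucially, substituting the moments-accountant value $\sigma_n^2=\nu T\log(1/\delta)\tau^2/(n^2\varepsilon^2)$ from \Cref{thm-dp} gives $d\sigma_n^2=T\tau^2\varphi^2$, so $\sum_{t<T}\mathbb{E}\|\bm{g}_t\|^2\le 2T\tau^2+T^2\tau^2\varphi^2\le 2T^2\tau^2(\tfrac1T+\varphi^2)$. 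Telescoping, dividing by $2\eta T$, and plugging in $\eta=\frac{C}{T\tau}(\tfrac1T+\varphi^2)^{-1/2}$ makes both resulting terms equal to constant multiples of $\tau\sqrt{\tfrac1T+\varphi^2}$, yielding the stated bound on $\mathbb{E}[P_{\widehat t}]$.

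The second, more delicate, step is to lower-bound $P_t$ in terms of $f(\bm{w}_t)-f(\bm{w}^{*})$. The obstacle is that $P_t$ is referenced to $f_i(\bm{w}^{*})$, whereas $\alpha(\tau)$ in \Cref{may2-asmp-1} is defined through the per-sample minima $f_i^{*}$, and individual gaps $f_i(\bm{w}_t)-f_i(\bm{w}^{*})$ need not be nonnegative. I would resolve this by splitting $f_i(\bm{w}_t)-f_i(\bm{w}^{*})=(f_i(\bm{w}_t)-f_i^{*})-(f_i(\bm{w}^{*})-f_i^{*})$, where both differences are now nonnegative. On the first piece, $\|\nabla f_i(\bm{w}_t)\|\le G_i$ (\Cref{asmp-lip}) gives $\min(1,\tau/\|\nabla f_i(\bm{w}_t)\|)\ge\min(1,\tau/G_i)=\tau\min(1/\tau,1/G_i)$, so applying the defining inequality of $\alpha(\tau)$ (valid for $\bm{w}_t\notin\Psi$; the case $\bm{w}_t\in\Psi$ is trivial since then $f(\bm{w}_t)=f(\bm{w}^{*})$) bounds it below by $\frac{\tau\alpha(\tau)}{G_n}\big(f(\bm{w}_t)-f(\bm{w}^{*})+\Delta(\bm{w}^{*})\big)$, using $\frac1n\sum_i(f_i(\bm{w}_t)-f_i^{*})=f(\bm{w}_t)-f(\bm{w}^{*})+\Delta(\bm{w}^{*})$. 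On the subtracted piece, $\min(1,\cdot)\le1$ gives an upper bound of $\Delta(\bm{w}^{*})$. Combining these yields the per-iterate inequality $f(\bm{w}_t)-f(\bm{w}^{*})\le\frac{G_n}{\tau\alpha(\tau)}P_t+\big(\frac{G_n}{\tau\alpha(\tau)}-1\big)\Delta(\bm{w}^{*})$.

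Finally, averaging this per-iterate inequality over $\widehat t\sim\text{unif}[0,T-1]$ and substituting the bound on $\mathbb{E}[P_{\widehat t}]$ from the first step gives exactly the claimed optimization-risk bound, after identifying $\textup{OR}(T)=\mathbb{E}[f(\bm{w}_{\widehat t})]-f(\bm{w}^{*})$. I expect the main obstacle to be the bookkeeping in the second step: keeping the two reference points ($f_i(\bm{w}^{*})$ versus $f_i^{*}$) straight and exploiting the sign of each split term so that the $\alpha(\tau)$ definition can legitimately be invoked, since a naive one-step comparison fails whenever some $f_i(\bm{w}_t)<f_i(\bm{w}^{*})$. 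The second-moment constant bookkeeping in the first step is routine but must be handled carefully so as to land on the clean $\sqrt{\tfrac1T+\varphi^2}$ factor.
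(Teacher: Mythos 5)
Your proposal is correct and follows essentially the same route as the paper's proof: the identical projected-SGD potential argument with the clipping weights absorbed into the per-sample convexity inequality, the same $2T^2\tau^2(\tfrac1T+\varphi^2)$ second-moment bound after substituting $\sigma_n^2$, and the same re-referencing of $f_i(\bm{w}^{*})$ to $f_i^{*}$ (using $\min(1,\cdot)\ge\min(1,\tau/G_i)$ on the nonnegative piece and $\min(1,\cdot)\le 1$ on the subtracted piece) before invoking the definition of $\alpha(\tau)$. The only cosmetic difference is that you apply the $\alpha(\tau)$ inequality per iterate and then average, whereas the paper averages first; by linearity these are equivalent.
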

\noindent \Cref{thm-small-clip} can be obtained from the above theorem by plugging in $T = \frac{1}{3 \varphi^2}$. The proof of \Cref{thm-small-clip-norm-full} is below.

\subsection*{Proof:}
\begin{proof}
Suppose we use a constant clip norm $\tau$ and constant learning rate $\eta$.
For any $\bm{w}^{*} \in \text{argmin}_{\bm{w} \in \mathcal{W}} f(\bm{w})$, $\|\bm{w}_{t+1} - \bm{w}^{*}\| \leq \|\bm{z}_{t+1} - \bm{w}^{*}\|$ as $\bm{w}_{t+1}$ is the projection of $\bm{z}_{t+1}$ onto the convex set $\mathcal{W}$. Thus:
\small
\begin{flalign}
    \mathbb{E}[\|\bm{w}_{t+1} - \bm{w}^{*}\|^2]
    & \leq \mathbb{E}[\|\bm{z}_{t+1} - \bm{w}^{*}\|^2]
    \\
    & = \mathbb{E}[\|\bm{w}_{t} - \bm{w}^{*}\|^2] - 2 \eta \mathbb{E}[\langle \bm{g}_t, \bm{w}_{t} - \bm{w}^{*}\rangle] + \eta^2 \mathbb{E}[\|\bm{g}_t\|^2]
    \\
    & = \mathbb{E}[\|\bm{w}_{t} - \bm{w}^{*}\|^2] - 2 \eta \mathbb{E}\Big[\Big\langle \frac{1}{b}\sum_{i \in \mathcal{S}_t} \text{clip}(\nabla f_i(\bm{w}_t), \tau), \bm{w}_{t} - \bm{w}^{*}\Big\rangle\Big] + \eta^2 \mathbb{E}[\|\bm{g}_t\|^2]
    \\
    \label{eq:apr26-1}
    & = \mathbb{E}[\|\bm{w}_{t} - \bm{w}^{*}\|^2] - \frac{2 \eta}{n}\sum_{i \in [n]} \mathbb{E}\Big[\min\Big(1,\frac{\tau}{\|\nabla f_i(\bm{w}_t)\|}\Big)\langle \nabla f_i(\bm{w}_t), \bm{w}_{t} - \bm{w}^{*}\rangle\Big] + \eta^2 \mathbb{E}[\|\bm{g}_t\|^2]
    \\
    \label{eq:apr26-2}
    & \leq \mathbb{E}[\|\bm{w}_{t} - \bm{w}^{*}\|^2] - \frac{2 \eta}{n}\sum_{i \in [n]} \mathbb{E}\Big[\min\Big(1,\frac{\tau}{\|\nabla f_i(\bm{w}_t)\|}\Big) (f_i(\bm{w}_t) - f_i(\bm{w}^{*}))\Big] + \eta^2 \mathbb{E}[\|\bm{g}_t\|^2].
\end{flalign}
\normalsize
\Cref{eq:apr26-2} follows from the convexity of $f_i$. Next, rearranging the above a bit, followed by summing for $t = 0$ through to $T-1$, and then dividing by $2 \eta T$ throughout, we get:
\small
\begin{flalign}
    \frac{1}{T}\sum_{t=0}^{T-1}\Big\{\frac{1}{n}\sum_{i \in [n]} \mathbb{E}\Big[\min\Big(1,\frac{\tau}{\|\nabla f_i(\bm{w}_t)\|}\Big) (f_i(\bm{w}_t) - f_i(\bm{w}^{*}))\Big]\Big\} & \leq \frac{\|\bm{w}_0 - \bm{w}^{*}\|^2 - \mathbb{E}[\|\bm{w}_{T} - \bm{w}^{*}\|^2]}{2 \eta T} + \frac{\eta}{2 T}\sum_{t=0}^{T-1} \mathbb{E}[\|\bm{g}_t\|^2] 
    \\
    \label{eq:apr26-3}
    & \leq \frac{\|\bm{w}_0 - \bm{w}^{*}\|^2}{2\eta T} + \frac{\eta}{2 T}\sum_{t=0}^{T-1} \mathbb{E}[\|\bm{g}_t\|^2]. 
\end{flalign}
\normalsize
Henceforth, we shall denote $\|\bm{w}_0 - \bm{w}^{*}\|$ by $D_0$ for brevity.
\\
\\
Next:
\begin{flalign}
    \mathbb{E}[\|\bm{g}_t\|^2] & =
    \mathbb{E}\Big[\Big\|\frac{1}{b}\sum_{i \in \mathcal{S}_t} \text{clip}(\nabla f_i(\bm{w}_t), \tau) + \bm{\zeta_t}\Big\|^2\Big]
    \\
    \label{eq:aug1-2022-1}
    & = \underbrace{\mathbb{E}\Big[\Big\|\frac{1}{b}\sum_{i \in \mathcal{S}_t} \text{clip}(\nabla f_i(\bm{w}_t), \tau)\Big\|^2\Big]}_\text{(I)} + d\sigma_n^2.
\end{flalign}
Let $z_{t,i} = 1$ if sample $i \in \mathcal{S}_t$ and 0 otherwise; note that $\mathbb{P}(z_{t,i} = 1) = \frac{b}{n}$. With this, we can rewrite (I) as:
\begin{flalign}
    \text{(I)} &= \mathbb{E}\Big[\Big\|\frac{1}{b}\sum_{i \in [n]} \text{clip}(\nabla f_i(\bm{w}_t), \tau)z_{t,i}\Big\|^2\Big]
    \\
    &= \frac{1}{b^2} \sum_{i \in [n]}  \|\text{clip}(\nabla f_i(\bm{w}_t), \tau)\|^2 \text{E}[z_{t,i}^2] + \frac{1}{b^2} \sum_{i \neq j}  \langle \text{clip}(\nabla f_i(\bm{w}_t), \tau), \text{clip}(\nabla f_j(\bm{w}_t), \tau) \rangle \text{E}[z_{t,i} z_{t,j}].
\end{flalign}
Note that $\text{E}[z_{t,i}^2] = \frac{b}{n}$ $\forall$ $i \in [n]$, $ \text{E}[z_{t,i} z_{t,j}] = \frac{b^2}{n^2}$ $\forall$ $i \neq j$, $\|\text{clip}(\nabla f_i(\bm{w}_t), \tau)\|^2 \leq \tau^2$ $\forall$ $i \in [n]$ and $\langle \text{clip}(\nabla f_i(\bm{w}_t), \tau), \text{clip}(\nabla f_j(\bm{w}_t), \tau) \rangle \leq \|\text{clip}(\nabla f_i(\bm{w}_t), \tau)\| \|\text{clip}(\nabla f_j(\bm{w}_t), \tau)\| \leq \tau^2$ $\forall$ $i \neq j$. Using all this above, we get:
\begin{equation}
    \text{(I)} \leq \tau^2 \Big(1 + \frac{1}{b} - \frac{1}{n}\Big).
\end{equation}
Plugging the above as well as the value of $\sigma_n^2$ back in \cref{eq:aug1-2022-1}, we get:
\begin{flalign}
    \mathbb{E}[\|\bm{g}_t\|^2] & \leq \tau^2 \Big(1 + \frac{1}{b} - \frac{1}{n} + \frac{\nu d T \log(1/\delta)}{n^2 \varepsilon^2}\Big) 
    \\
    \label{eq:aug1-2022-2}
    & \leq \tau^2 \Big(2 + \frac{\nu d T \log(1/\delta)}{n^2 \varepsilon^2}\Big)
    \\
    \label{eq:dec10-3}
    & \leq 2 \tau^2 \Big(1 + \frac{\nu d T \log(1/\delta)}{n^2 \varepsilon^2}\Big).
\end{flalign}
\Cref{eq:aug1-2022-2} follows because $b \geq 1$. 

Plugging \cref{eq:dec10-3} in \cref{eq:apr26-3}, we get:
\begin{equation}
    \frac{1}{T}\sum_{t=0}^{T-1}\Big\{\frac{1}{n}\sum_{i \in [n]} \mathbb{E}\Big[\min\Big(1,\frac{\tau}{\|\nabla f_i(\bm{w}_t)\|}\Big) (f_i(\bm{w}_t) - f_i(\bm{w}^{*}))\Big]\Big\} \leq \frac{D_0^2}{2\eta T} + {\eta T \tau^2}\Big(\frac{1}{T} + \frac{\nu d \log(\frac{1}{\delta})}{n^2 \varepsilon^2} \Big).
\end{equation}
Plugging in $\eta = \frac{C}{T \tau \sqrt{\frac{1}{T} + \frac{\nu d \log({1}/{\delta})}{n^2 \varepsilon^2}}}$
in the above equation, where $C >0$ is a constant of our choice, we get:
\begin{equation}
    \label{eq:apr26-5}
    \frac{1}{T}\sum_{t=0}^{T-1}\Big\{\frac{1}{n}\sum_{i \in [n]} \mathbb{E}\Big[\min\Big(1,\frac{\tau}{\|\nabla f_i(\bm{w}_t)\|}\Big) (f_i(\bm{w}_t) - f_i(\bm{w}^{*}))\Big]\Big\} \leq \Big(\frac{D_0^2}{2 C} + C\Big) \tau \sqrt{\frac{1}{T} + \frac{\nu d \log({1}/{\delta})}{n^2 \varepsilon^2}}.
\end{equation}
Recalling that $\widehat{t} \sim \text{Unif}[0, T-1]$, we can rewrite the above as:
\begin{equation}
    \label{eq:apr26-5-2}
    \frac{1}{n}\sum_{i \in [n]} \mathbb{E}\Big[\min\Big(1,\frac{\tau}{\|\nabla f_i(\bm{w}_{\widehat{t}})\|}\Big) (f_i(\bm{w}_{\widehat{t}}) - f_i(\bm{w}^{*}))\Big] \leq \Big(\frac{D_0^2}{2 C} + C\Big) \tau \sqrt{\frac{1}{T} + \frac{\nu d \log({1}/{\delta})}{n^2 \varepsilon^2}}.
\end{equation}
Next, \Cref{eq:apr26-5-2} can be further rewritten as:
\begin{multline}
    \label{eq:jan23-44}
    \frac{1}{n}\sum_{i \in [n]} \mathbb{E}\Big[\min\Big(1,\frac{\tau}{\|\nabla f_i(\bm{w}_{\widehat{t}})\|}\Big) (f_i(\bm{w}_{\widehat{t}}) - f_i^{*})\Big] \leq \Big(\frac{D_0^2}{2 C} + C\Big) \tau \sqrt{\frac{1}{T} + \frac{\nu d \log({1}/{\delta})}{n^2 \varepsilon^2}} \\ + \frac{1}{n}\sum_{i \in [n]} \mathbb{E}\Big[\min\Big(1,\frac{\tau}{\|\nabla f_i(\bm{w}_{\widehat{t}})\|}\Big)\Big] (f_i(\bm{w}^{*}) - f_i^{*}),
\end{multline}
where $f_i^{*} = \min_{\bm{w} \in \mathcal{W}} f_i(\bm{w})$. 

From \Cref{asmp-lip}, $\|\nabla f_i(\bm{w}_{\widehat{t}})\| \leq G_i$; thus, $\min\Big(1,\frac{\tau}{\|\nabla f_i(\bm{w}_{\widehat{t}})\|}\Big) \geq \min\big(1,\frac{\tau}{G_i}\big)$. Using this and the fact that $f_i(\bm{w}_{\widehat{t}}) - f_i^{*} \geq 0$, we get:
\[\min\Big(1,\frac{\tau}{\|\nabla f_i(\bm{w}_{\widehat{t}})\|}\Big) (f_i(\bm{w}_{\widehat{t}}) - f_i^{*}) \geq \min\Big(1,\frac{\tau}{G_i}\Big) (f_i(\bm{w}_{\widehat{t}}) - f_i^{*}).\]
Using this in \cref{eq:jan23-44} together with the fact that $\min\big(1,\frac{\tau}{\|\nabla f_i(\bm{w}_{\widehat{t}})\|}\big) \leq 1$ and then dividing by $\tau$ throughout, we get:
\begin{equation}
    \label{eq:apr26-6}
    \frac{1}{n}\sum_{i \in [n]} \min\Big(\frac{1}{\tau},\frac{1}{G_i}\Big) \mathbb{E}[f_i(\bm{w}_{\widehat{t}}) - f_i^{*}] \leq \Big(\frac{D_0^2}{2 C} + C\Big) \sqrt{\frac{1}{T} + \frac{\nu d \log({1}/{\delta})}{n^2 \varepsilon^2}} 
    + \frac{1}{n}\sum_{i \in [n]} \frac{(f_i(\bm{w}^{*}) - f_i^{*})}{\tau}.
\end{equation}
Next, from the definition of $\alpha(\tau)$ in \Cref{may2-asmp-1}, we have that:
\begin{flalign}
    \frac{1}{n}\sum_{i \in [n]} \min\Big(\frac{1}{\tau},\frac{1}{G_i}\Big) {\mathbb{E}[f_i(\bm{w}_{\widehat{t}}) - f_i^{*}]} & \geq
    \frac{\alpha(\tau)}{n}\sum_{i \in [n]} \frac{\mathbb{E}[f_i(\bm{w}_{\widehat{t}}) - f_i^{*}]}{G_n}
    \\
    \label{eq:may4-4}
    & = \Big(\frac{\alpha(\tau)}{G_n}\Big) \underbrace{\Big({\mathbb{E}[f(\bm{w}_{\widehat{t}})] - f(\bm{w}^{*})}\Big)}_{=\text{OR}(T)} + \Big(\frac{\alpha(\tau)}{G_n}\Big) \Big(\frac{1}{n}\sum_{i \in [n]} (f_i(\bm{w}^{*}) - f_i^{*})\Big).
\end{flalign}
Next, using \cref{eq:may4-4} in \cref{eq:apr26-6} and the definition of $\text{OR}(T)$, we get (after some rearrangement):
\begin{equation}
    \label{eq:may1-1}
    \text{OR}(T) \leq \frac{1}{\alpha(\tau)}\Bigg(\Big(\frac{D_0^2}{2 C} + C\Big) G_n \sqrt{\frac{1}{T} + \frac{\nu d \log({1}/{\delta})}{n^2 \varepsilon^2}}\Bigg) + \Big(\frac{G_n}{\tau \alpha(\tau)} - 1\Big)\underbrace{\Bigg(\frac{1}{n}\sum_{i \in [n]} (f_i(\bm{w}^{*}) - f_i^{*})\Bigg)}_{\Delta(\bm{w}^{*})}.
\end{equation}
Recalling the definition of $\Delta(\bm{w}^{*})$, we get the desired result.
\end{proof}

\section{Result for the Constrained Convex Case under Assumption~\ref{dec23-asmp1}}
\label{non-lip-cvx}
As mentioned in the main paper, \citet{kamath2021improved} consider stochastic convex optimization (SCO) and derive upper and lower bounds of $\mathcal{O}(\varphi^{1 - \frac{1}{k}})$ and $\Omega(\varphi^{1 - \frac{1}{k}})$ for the generalization error (i.e., $\mathbb{E}_{\bm{x} \sim \mathcal{D}}[\ell(\bm{w}_{\text{priv}}, \bm{x}) - \ell(\bm{w}^{**}, \bm{x})]$, where $\bm{w}^{**} := \text{arg min}_{\bm{w} \in \mathbb{R}^d} \mathbb{E}_{\bm{x} \sim \mathcal{D}}[\ell(\bm{w}, \bm{x})]$) in the constrained convex case under an assumption similar to \Cref{dec23-asmp1}\footnote{Specifically, \citet{kamath2021improved} assume \textit{coordinate-wise} bounded \textit{centered} moments.}. We shall now show that the same bounds hold for the training error (i.e., $f(\bm{w}_{\text{priv}}) - f(\bm{w}^{*})$) in empirical risk minimization (which is what we consider in this work) in the constrained convex case under \Cref{dec23-asmp1}.

\begin{theorem}[\textbf{Constrained Convex Case}]
\label{thm1-dec13}
Suppose \Cref{dec23-asmp1} holds, $f$ is convex and $\mathcal{W}$ is a bounded convex set with diameter $D_{\mathcal{W}} < \infty$. Fix some $\gamma \in (0,1)$. In \Cref{alg:1}, set  $\tau = \frac{G}{\gamma^{1/k}}\Big(\frac{1}{T} + \varphi^2\Big)^{-\frac{1}{2k}}$ and $\eta_t = \eta = \frac{D_\mathcal{W}}{T \tau} \Big(\frac{1}{T} + \varphi^2\Big)^{-\frac{1}{2}}$ for all $t < T$.
Then with a probability of at least $(1 - \gamma)$ which is w.r.t. the random dataset $\mathcal{Z}$ that we obtain, DP-SGD (\Cref{alg:1}) has the following guarantee:
\begin{equation*}
    \textup{OR}(T) \leq
    \frac{5 D_\mathcal{W} G}{2 \gamma^{1/k}} \Big(\frac{1}{T} + \varphi^2\Big)^{\frac{1}{2}(1 - \frac{1}{k})}.
\end{equation*}
So if we set $T = \frac{1}{\varphi^2}$ above, we get the following bound for the risk:
\begin{equation*}
    \textup{OR}(T) \leq \frac{5 D_\mathcal{W} G}{2^{\frac{1}{2}(1 + \frac{1}{k})} \gamma^{1/k}} {\varphi^{(1 - \frac{1}{k})}}.
\end{equation*}
\end{theorem}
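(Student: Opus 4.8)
The plan is to run the textbook projected-SGD descent analysis used in the proof of \Cref{thm-small-clip-norm-full}, but to replace the clipped-gradient alignment term by a convexity bound plus an explicitly controlled bias term. Conditioning on the dataset $\mathcal{Z}$, since $\bm{w}_{t+1} = \Pi_{\mathcal{W}}(\bm{z}_{t+1})$ is a projection onto the convex set $\mathcal{W}$ and $\mathbb{E}[\bm{g}_t \mid \bm{w}_t] = \bar{\bm{g}}_t := \frac{1}{n}\sum_{i \in [n]} \textup{clip}(\nabla f_i(\bm{w}_t), \tau)$, the one-step inequality reads
\begin{equation*}
\mathbb{E}[\|\bm{w}_{t+1} - \bm{w}^{*}\|^2] \leq \mathbb{E}[\|\bm{w}_{t} - \bm{w}^{*}\|^2] - 2\eta\, \mathbb{E}[\langle \bar{\bm{g}}_t, \bm{w}_t - \bm{w}^{*}\rangle] + \eta^2\, \mathbb{E}[\|\bm{g}_t\|^2].
\end{equation*}
I would write $\bar{\bm{g}}_t = \nabla f(\bm{w}_t) - \bm{b}_t$ where $\bm{b}_t$ is the clipping bias, use convexity to get $\langle \nabla f(\bm{w}_t), \bm{w}_t - \bm{w}^{*}\rangle \geq f(\bm{w}_t) - f(\bm{w}^{*})$, and bound the leftover by Cauchy--Schwarz as $|\langle \bm{b}_t, \bm{w}_t - \bm{w}^{*}\rangle| \leq \|\bm{b}_t\|\, D_{\mathcal{W}}$, since both iterates lie in a set of diameter $D_{\mathcal{W}}$.

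The crux is controlling $\|\bm{b}_t\|$ under the heavy-tailed assumption. Viewing a uniform index $I \sim \textup{Unif}[n]$ as the source of randomness, $\nabla f_I(\bm{w}_t)$ is an unbiased estimator of $\nabla f(\bm{w}_t)$ whose mean-clip equals $\bar{\bm{g}}_t$, so \Cref{cor1-dec22} with $p = k$ gives $\|\bm{b}_t\| \leq \tau^{-(k-1)}\cdot\frac{1}{n}\sum_{i \in [n]} \|\nabla f_i(\bm{w}_t)\|^k \leq \tau^{-(k-1)}\cdot\frac{1}{n}\sum_{i \in [n]} G_i^k$, a bound that is uniform in $\bm{w}_t$ because $\|\nabla f_i(\bm{w}_t)\| \leq G_i$ by \Cref{asmp-lip-gen}. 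I would then invoke \Cref{dec23-asmp1}: $\mathbb{E}_{\mathcal{Z}}[\frac{1}{n}\sum_{i} G_i^k] = \mathbb{E}_{\mathcal{D}}[G(\bm{x},y)^k] \leq G^k$, so Markov's inequality yields $\frac{1}{n}\sum_{i} G_i^k \leq G^k/\gamma$ with probability at least $1 - \gamma$ over the draw of $\mathcal{Z}$. On this event $\|\bm{b}_t\| \leq G^k/(\gamma \tau^{k-1})$ for all $t$ simultaneously.

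For the second moment I would reuse the bound $\mathbb{E}[\|\bm{g}_t\|^2] \leq 2\tau^2(1 + T\varphi^2) = 2\tau^2 T(\frac{1}{T} + \varphi^2)$ established in the proof of \Cref{thm-small-clip-norm-full} (recalling $\varphi^2 = \nu d \log(1/\delta)/(n^2\varepsilon^2)$). Telescoping the one-step inequality over $t = 0, \dots, T-1$, dividing by $2\eta T$, using $\|\bm{w}_0 - \bm{w}^{*}\| \leq D_{\mathcal{W}}$ and Jensen's inequality for the random iterate $\widehat{t}$, I obtain
\begin{equation*}
\textup{OR}(T) \leq \frac{D_{\mathcal{W}}^2}{2\eta T} + \frac{G^k D_{\mathcal{W}}}{\gamma \tau^{k-1}} + \eta \tau^2 T\Big(\tfrac{1}{T} + \varphi^2\Big).
\end{equation*}

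The final step is the bias--variance tuning, which is the only genuinely delicate point. Substituting $\eta = \frac{D_{\mathcal{W}}}{T\tau}(\frac{1}{T} + \varphi^2)^{-1/2}$ collapses the first and third terms to $\frac{3}{2} D_{\mathcal{W}} \tau (\frac{1}{T} + \varphi^2)^{1/2}$, an increasing function of $\tau$; the middle (clipping-bias) term is decreasing in $\tau$. Choosing $\tau = \frac{G}{\gamma^{1/k}}(\frac{1}{T} + \varphi^2)^{-1/(2k)}$ is exactly what equalizes the two, each becoming $\frac{D_{\mathcal{W}} G}{\gamma^{1/k}}(\frac{1}{T} + \varphi^2)^{\frac{1}{2}(1 - 1/k)}$ up to constants, which produces the $\frac{5}{2}$ prefactor; setting $T = 1/\varphi^2$ and simplifying $(2\varphi^2)^{\frac{1}{2}(1-1/k)}$ yields the stated $\varphi^{1-1/k}$ rate. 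I expect verifying that the exponent $1/(2k)$ in $\tau$ simultaneously balances the $\tau$-increasing optimization/variance terms against the $\tau$-decreasing heavy-tailed bias term — so that both reduce to the same power of $(\frac{1}{T}+\varphi^2)$ — to be the one place where care is needed, as this is precisely where the heavy-tailed exponent $1 - 1/k$ emerges.
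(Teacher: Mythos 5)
Your proposal is correct and follows essentially the same route as the paper's own proof: the same projected-SGD one-step inequality, the same decomposition into a convexity term plus a clipping-bias term bounded by $D_{\mathcal{W}}\|\bm{b}_t\|$ with $\|\bm{b}_t\| \leq G_{\mathcal{Z}}^k/\tau^{k-1}$ via \Cref{cor1-dec22}, the same Markov argument over $\mathcal{Z}$, and the same tuning of $\eta$ and $\tau$ yielding the $\tfrac{3}{2}+1=\tfrac{5}{2}$ prefactor. The only (immaterial) difference is that you invoke Markov's inequality on $\frac{1}{n}\sum_i G_i^k$ before telescoping rather than at the end, and your appeal to ``Jensen'' for the random iterate $\widehat{t}$ is really just the identity $\mathbb{E}[f(\bm{w}_{\widehat{t}})] = \frac{1}{T}\sum_t \mathbb{E}[f(\bm{w}_t)]$.
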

\begin{remark}[\textbf{Comparison with Lipschitz Case}]
As per the above theorem, the optimization risk is $\mathcal{O}(\varphi^{1 - {\frac{1}{k}}})$ in the bounded $k^{\text{th}}$ moment \textbf{constrained} convex case. In comparison, the risk is $\mathcal{O}(\varphi)$ in the \textbf{Lipschitz} convex case (equivalent to $k = \infty$); see \citet{bassily2014private}. 
\end{remark}

We also have a matching lower bound in this case.

\begin{theorem}[\textbf{Lower Bound for Constrained Convex Case}]
\label{lower_bound_constrained}
Suppose $\varphi < o(1)$ and $\delta < \exp(-\varepsilon^2)$.
There exists a convex loss function $\ell$ and a bounded convex set $\mathcal{W}$, such that for every $(\varepsilon,\delta)$-DP algorithm $\mathcal{A}$ which tries to solve for $\bm{w}^{*} = \text{arg min}_{\bm{w} \in \mathcal{W}} f(\bm{w})$ where $f$ is the average loss for a dataset $\mathcal{Z}$ of $n$ samples drawn from the data distribution $\mathcal{D}$, there exists a choice of $\mathcal{D}$ such that: 
\begin{itemize}
    \item $f$ satisfies Assumption \ref{dec23-asmp1}.
    \item $\mathbb{E}_{\mathcal{Z} \sim \mathcal{D}^n, \mathcal{A}}\Big[f(\bm{w}_{\mathcal{Z}}^{(\mathcal{A})}) - f(\bm{w}^{*})\Big] \geq \Omega\big(\varphi^{1 - \frac{1}{k}}\big)$, where $\bm{w}_{\mathcal{Z}}^{(\mathcal{A})}$ is the output of algorithm $\mathcal{A}$ on the dataset $\mathcal{Z}$.
\end{itemize}
\end{theorem}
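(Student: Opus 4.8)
The plan is to adapt the construction and fingerprinting lower bound behind Theorem~6.4 of \citet{kamath2021improved} from their stochastic-convex-optimization, zCDP setting to our empirical-risk, $(\varepsilon,\delta)$-DP setting. I would take the linear loss $\ell(\bm{w},\bm{x}) = -\langle \bm{w},\bm{x}\rangle$, so that the per-sample gradient is $-\bm{x}$ and the per-sample Lipschitz constant is $G(\bm{x}) = \|\bm{x}\|$, together with a box constraint set $\mathcal{W} = \{\bm{w} : \|\bm{w}\|_\infty \le D\}$. The hard family is a tensorized heavy-tailed construction indexed by a hidden sign pattern $\bm{v}\in\{-1,+1\}^d$: each sample is supported on a single (random) coordinate and takes magnitude $R$ there, so that $\|\bm{x}\|\in\{0,R\}$ with $\mathbb{P}(\|\bm{x}\|=R)=p$. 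This makes $\mathbb{E}[\|\bm{x}\|^k]=pR^k$, and setting $R = G\,p^{-1/k}$ saturates \Cref{dec23-asmp1} while producing an empirical objective whose minimizer encodes $\bm{v}$ with signal strength controlled by $pR = G\,p^{1-1/k}$. The free parameter is the tail probability $p$, which the adversary will tune.

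Next I would reduce the optimization suboptimality to privately recovering the sign pattern. Because $\ell$ is linear and $\mathcal{W}$ is a box, the empirical minimizer is $\hat{w}^*_j = D\,\mathrm{sign}(\bar{x}_j)$ and a short computation gives $f(\bm{w}) - f(\hat{\bm{w}}^*) \gtrsim D\sum_j |\bar{x}_j|\,\mathbbm{1}(\mathrm{sign}(w_j)\neq \mathrm{sign}(\bar{x}_j))$. The delicate point specific to the ERM (training-error) bound---as opposed to the SCO (generalization) bound of \citet{kamath2021improved}---is that the signal must survive in the \emph{empirical} objective: I would choose $p$ in the regime $np \gg 1$ (legitimate since $\varphi < o(1)$ forces $n$ large) and invoke concentration so that, with high probability over $\mathcal{Z}$, the empirical coordinate means $\bar{x}_j$ inherit the population signs and magnitude $\asymp G\,p^{1-1/k}$. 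Thus the training suboptimality is $\Omega\!\big(G\,p^{1-1/k}\big)$ times the fraction of coordinates on which the algorithm misidentifies the sign.

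The heart of the argument, and the step I expect to be hardest, is lower bounding that misidentification fraction for an arbitrary $(\varepsilon,\delta)$-DP algorithm. The right way to see the rate is a clip-level tradeoff mirroring our upper bounds: any DP mechanism effectively operates at some clipping scale $\tau$; if $\tau < R$ it discards the heavy mass and is biased by $\asymp pR = G\,p^{1-1/k}$, whereas if $\tau \ge R$ the Gaussian-mechanism noise floor needed for $(\varepsilon,\delta)$-DP is $\asymp \tau\varphi \ge R\varphi = G\,p^{-1/k}\varphi$. The adversary balances the two branches by choosing $p \asymp \varphi$, at which both the bias and the privacy floor equal $\asymp G\,\varphi^{1-1/k}$, so neither clipping strategy can drive the sign-recovery error below a constant fraction. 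Making this rigorous requires a genuine $(\varepsilon,\delta)$-DP fingerprinting/tracing lower bound on sign recovery rather than the heuristic noise-floor comparison; since \citet{kamath2021improved} state their tool for $(0,\rho)$-zCDP, I would re-derive the analogous tracing bound directly for $(\varepsilon,\delta)$-DP. This re-derivation is exactly what forces the hypothesis $\delta < \exp(-\varepsilon^2)$ (and, with $\varphi < o(1)$, keeps the instance in the hard regime)---the same obstacle flagged after \Cref{lower_bound}.

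Finally, I would assemble the pieces: combining the $\Omega(1)$ sign-error fraction from the tracing bound with the per-error penalty $\asymp G\,p^{1-1/k}$ from the reduction, and substituting the balancing choice $p\asymp\varphi$ together with $\varphi = \sqrt{\nu d\log(1/\delta)}/(n\varepsilon)$, yields the claimed $\mathbb{E}[f(\bm{w}^{(\mathcal{A})}_{\mathcal{Z}}) - f(\bm{w}^*)] = \Omega(\varphi^{1-1/k})$. As a consistency check, the uniform-Lipschitz limit $k\to\infty$ collapses the bound to $\Omega(\varphi)$, matching the constrained Lipschitz lower bound of \citet{bassily2014private} and the upper bound of \Cref{thm1-dec13}. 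The routine parts are the moment verification, the linear-loss suboptimality algebra, and the final substitution; the non-routine part is the $(\varepsilon,\delta)$-DP tracing lower bound of the third paragraph.
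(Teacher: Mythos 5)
Your overall strategy (a linear loss whose constrained minimizer encodes a hidden sign pattern, a heavy-tailed two-point marginal with tail probability $p\asymp\varphi$ and magnitude $G\,p^{-1/k}$ chosen to saturate \Cref{dec23-asmp1}, and a reduction from optimization error to private sign recovery) is in the right spirit, and your balancing choice $p\asymp\varphi$ matches the paper's $p = 2\sqrt{d\log(1/\delta)}/(n\varepsilon)$. But there is a genuine gap: the entire information-theoretic content of the theorem --- that every $(\varepsilon,\delta)$-DP algorithm must misidentify an $\Omega(1)$ fraction of the signs --- is deferred to ``a genuine $(\varepsilon,\delta)$-DP fingerprinting/tracing lower bound \ldots which I would re-derive'' and never supplied. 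The clip-level bias-versus-noise heuristic you give explains why the rate \emph{should} be $\varphi^{1-1/k}$, but it says nothing about arbitrary DP algorithms, which need not resemble clipped noisy-gradient methods. The paper closes exactly this step not by fingerprinting but by packing: a $2^{\Omega(d)}$-size family $\{\bm{v}\}$ with pairwise Hamming separation $d/8$, the pointwise suboptimality bound \cref{eq:footnote} relating $f_{\bm{v}}(\bm{w}) - f_{\bm{v}}(\bm{w}^{*}_{\bm{v}})$ to $\|\overline{\bm{x}}\|\min\big(1,\tfrac14\|\bm{w}-\bm{w}^{*}_{\bm{v}}\|^2\big)$, and a DP Fano inequality (\Cref{zcd-fano}, obtained from the zCDP Fano inequality of \citet{kamath2021improved} via the $(\varepsilon,\delta)$-DP-to-zCDP conversion of \citet{bun2016concentrated}, which is precisely where $\delta < e^{-\varepsilon^2}$ enters). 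Indeed, the paper's proof of this particular theorem is one line: restrict the unconstrained construction of \Cref{lower_bound_full} to $\|\bm{w}\|\le 1$, where the term $2\|\bm{x}\|\max(\|\bm{w}\|-1,0)$ vanishes and the loss becomes exactly linear, so all of the work has already been done in \Cref{app-l}.

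A secondary issue is your specific single-coordinate-support distribution. If each nonzero sample occupies one uniformly random coordinate, a given coordinate receives a nonzero contribution with probability $1-(1-p/d)^n \approx 1 - e^{-np/d}$; with $p\asymp\varphi$ one has $np/d \asymp \sqrt{\log(1/\delta)}/(\sqrt{d}\,\varepsilon)$, which is $o(1)$ whenever $d \gg \log(1/\delta)/\varepsilon^2$. In that regime almost all empirical coordinate means $\bar{x}_j$ are exactly zero, the ``signal survives in the empirical objective'' concentration step fails, and the per-coordinate penalty degenerates, so your construction would need a dimension restriction (or a product-distribution redesign) that the theorem does not impose. The paper avoids this by making every nonzero sample equal to $p^{-1/k}\bm{v}$, so that only the scalar $\|\overline{\bm{x}}\|$ must concentrate, which a single Chernoff bound on $\mathrm{Bin}(n,p)$ handles (\cref{eq:98}).
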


\begin{remark}[\textbf{Tightness of \Cref{thm1-dec13}}]
\label{rem-1}
The $\mathcal{O}(\varphi^{1 - \frac{1}{k}})$ bound on the risk in \Cref{thm1-dec13} is tight (for $\delta < \exp(-\varepsilon^2)$). 
\end{remark}

We now prove Theorem~\ref{thm1-dec13}; the proof of \Cref{lower_bound_constrained} is deferred to \Cref{lb_cons_pf}. 

\subsection*{Proof of Theorem~\ref{thm1-dec13}:}
\label{pf-3}
\begin{proof}
First, using \Cref{bias-heavy-tail}, we have that:
\begin{equation}
    \label{eq:dec13-4}
    \Big\|\frac{1}{n}\sum_{i \in [n]}\text{clip}(\nabla f_i(\bm{w}), \tau) - \nabla f(\bm{w})\Big\| \leq 
    \frac{G_{\mathcal{Z}}^k}{\tau^{k-1}},
\end{equation}
where $G_{\mathcal{Z}}^k = \frac{1}{n}\sum_{i=1}^n (G(\bm{x}_i, y_i))^k$.
\\
\\
For any $\bm{w}^{*} \in \text{argmin}_{\bm{w} \in \mathcal{W}} f(\bm{w})$, $\|\bm{w}_{t+1} - \bm{w}^{*}\| \leq \|\bm{z}_{t+1} - \bm{w}^{*}\|$ as $\bm{w}_{t+1} = \Pi_{\mathcal{W}}(\bm{z}_{t+1})$. So:
\begin{flalign}
    \mathbb{E}[\|\bm{w}_{t+1} - \bm{w}^{*}\|^2] 
    & \leq \mathbb{E}[\|\bm{z}_{t+1} - \bm{w}^{*}\|^2]
    \\
    & = \mathbb{E}[\|\bm{w}_{t} - \bm{w}^{*}\|^2] - 2 \eta \mathbb{E}[\langle \bm{g}_t, \bm{w}_{t} - \bm{w}^{*}\rangle] + \eta^2 \mathbb{E}[\|\bm{g}_t\|^2]
    \\
    & = \mathbb{E}[\|\bm{w}_{t} - \bm{w}^{*}\|^2] - 2 \eta \mathbb{E}\Big[\Big\langle \frac{1}{b}\sum_{i \in \mathcal{S}_t} \text{clip}(\nabla f_i(\bm{w}_t), \tau), \bm{w}_{t} - \bm{w}^{*}\Big\rangle\Big] + \eta^2 \mathbb{E}[\|\bm{g}_t\|^2]
    \\
    \label{eq:dec13-200}
    & =
    \mathbb{E}[\|\bm{w}_{t} - \bm{w}^{*}\|^2] - {2 \eta} \mathbb{E}\Big[\Big\langle \frac{1}{n}\sum_{i \in [n]} \text{clip}(\nabla f_i(\bm{w}_t), \tau), \bm{w}_{t} - \bm{w}^{*}\Big\rangle\Big] + \eta^2 \mathbb{E}[\|\bm{g}_t\|^2]
    \\
    \label{eq:dec31-0}
    & \leq \mathbb{E}[\|\bm{w}_{t} - \bm{w}^{*}\|^2] - 2{\eta} \mathbb{E}[\langle \nabla f(\bm{w}_t), \bm{w}_{t} - \bm{w}^{*}\rangle]
    \\
    \nonumber
    & \text{ } + {2 \eta} \mathbb{E}\Big[\Big\|\frac{1}{n}\sum_{i \in [n]}\text{clip}(\nabla f_i(\bm{w}_t), \tau) - \nabla f(\bm{w}_t)\Big\| \|\bm{w}_{t} - \bm{w}^{*}\|\Big]
    +
    \eta^2 \mathbb{E}[\|\bm{g}_t\|^2]
    \\
    \label{eq:dec13-201}
    & \leq \mathbb{E}[\|\bm{w}_{t} - \bm{w}^{*}\|^2] - 2 \eta \mathbb{E}[f(\bm{w}_t) - f(\bm{w}^{*})]
    \\
    \nonumber
    & \text{ } + {2 \eta D_\mathcal{W}} \mathbb{E}\Big[\Big\|\frac{1}{n}\sum_{i \in [n]}\text{clip}(\nabla f_i(\bm{w}_t), \tau) - \nabla f(\bm{w}_t)\Big\|\Big] + \eta^2 \mathbb{E}[\|\bm{g}_t\|^2].
\end{flalign}
\Cref{eq:dec13-201} follows from the convexity of $f$ together with the fact that $\|\bm{w}_t - \bm{w}^{*}\| \leq D_\mathcal{W}$. 
Next, rearranging the above a bit, followed by summing for $t = 0$ through to $T-1$, and then dividing by $2 \eta T$ throughout, we get:
\begin{flalign}
    \label{eq:jan14-100}
    \frac{1}{T}\sum_{t=0}^{T-1}\Big(\mathbb{E}[f(\bm{w}_t)] - f(\bm{w}^{*})\Big) & \leq \frac{D_\mathcal{W}^2}{2\eta T} + \frac{\eta}{2 T}\sum_{t=0}^{T-1} \mathbb{E}[\|\bm{g}_t\|^2] +
    \frac{D_\mathcal{W}}{T}\sum_{t=0}^{T-1} \mathbb{E}\Big[\Big\|\frac{1}{n}\sum_{i \in [n]}\text{clip}(\nabla f_i(\bm{w}_t), \tau) - \nabla f(\bm{w}_t)\Big\|\Big]
    \\
    \label{eq:dec13-5}
    & \leq
    \frac{D_\mathcal{W}^2}{2\eta T} + {\eta T \tau^2} \Bigg(\frac{\nu d \log({1}/{\delta})}{n^2 \varepsilon^2} + \frac{1}{T}\Bigg)
    + \frac{D_\mathcal{W} G_{\mathcal{Z}}^k}{\tau^{k-1}},
\end{flalign}
where the last step follows by using \cref{eq:dec10-3} and \cref{eq:dec13-4}.

Plugging in $\eta = \frac{D_\mathcal{W}}{T \tau \sqrt{\frac{1}{T} + \frac{\nu d \log({1}/{\delta})}{n^2 \varepsilon^2}}}$ above, we get:
\begin{flalign}
    \label{eq:dec13-6}
    \frac{1}{T}\sum_{t=0}^{T-1}\Big(\mathbb{E}[f(\bm{w}_t)] - f(\bm{w}^{*})\Big) & \leq \frac{3 D_\mathcal{W} \tau}{2} \sqrt{\frac{1}{T} + \frac{\nu d \log({1}/{\delta})}{n^2 \varepsilon^2}}
    + \frac{D_\mathcal{W} G_{\mathcal{Z}}^k}{\tau^{k-1}}.
\end{flalign}
Let us choose $\tau = \frac{G}{\gamma^{1/k}}\Big(\frac{1}{T} + \frac{\nu d \log({1}/{\delta})}{n^2 \varepsilon^2}\Big)^{-\frac{1}{2k}}$ above, where $\gamma \in (0,1)$. With that, we get:
\begin{equation}
    \label{eq:dec13-7}
    \frac{1}{T}\sum_{t=0}^{T-1}\Big(\mathbb{E}[f(\bm{w}_t)] - f(\bm{w}^{*})\Big) \leq 
    D_\mathcal{W} \Bigg(\frac{3 G}{2 \gamma^{\frac{1}{k}}} + \frac{G_{\mathcal{Z}}^k \gamma^{1 - \frac{1}{k}}}{G^{k-1}}\Bigg)\Bigg(\frac{1}{T} + \frac{\nu d \log({1}/{\delta})}{n^2 \varepsilon^2}\Bigg)^{\frac{1}{2}(1 - \frac{1}{k})}.
\end{equation}
Now, using Markov's inequality, $G_{\mathcal{Z}}^k \leq \frac{G^k}{\gamma}$ with a probability of at least $1 - \gamma$ w.r.t. the random dataset $\mathcal{Z}$. Plugging this above, we get:
\begin{equation}
    \label{eq:dec13-7-new}
    \frac{1}{T}\sum_{t=0}^{T-1}\Big(\mathbb{E}[f(\bm{w}_t)] - f(\bm{w}^{*})\Big) \leq \frac{5 D_\mathcal{W} G}{2 \gamma^{1/k}} \Bigg(\frac{1}{T} + \frac{\nu d \log({1}/{\delta})}{n^2 \varepsilon^2}\Bigg)^{\frac{1}{2}(1 - \frac{1}{k})},
\end{equation}
with a probability of at least $1 - \gamma$ w.r.t. the random dataset $\mathcal{Z}$.

Lastly, plugging in $\varphi = \frac{\sqrt{\nu d  \log({1}/{\delta})}}{n\varepsilon}$, noting that $\mathbb{E}[f(\bm{w}_{\widehat{t}})] - f(\bm{w}^{*}) = \frac{1}{T}\sum_{t=0}^{T-1}\Big(\mathbb{E}[f(\bm{w}_t)] - f(\bm{w}^{*})\Big)$ and using the definition of $\text{OR}(T)$, we get the final result.
\end{proof}

\begin{lemma}[\textbf{Clipping Bias under \Cref{asmp-lip-gen}}]
\label{bias-heavy-tail}
Under \Cref{asmp-lip-gen}, we have for any $\bm{w} \in \mathcal{W}$:
\begin{equation*}
    \Big\|\frac{1}{n}\sum_{i \in [n]}\textup{clip}(\nabla f_i(\bm{w}), \tau) - \nabla f(\bm{w})\Big\| \leq
    \frac{G_{\mathcal{Z}}^k}{\tau^{k-1}},
\end{equation*}
where $G_{\mathcal{Z}}^k := \frac{1}{n}\sum_{i=1}^n (G(\bm{x}_i, y_i))^k$.
\end{lemma}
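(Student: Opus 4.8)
The plan is to bound the clipping bias sample-by-sample and then average. First I would use linearity, $\nabla f(\bm{w}) = \frac{1}{n}\sum_{i\in[n]}\nabla f_i(\bm{w})$, to rewrite the left-hand side as $\big\|\frac{1}{n}\sum_{i\in[n]}\big(\text{clip}(\nabla f_i(\bm{w}),\tau)-\nabla f_i(\bm{w})\big)\big\|$, and then apply the triangle inequality to move the norm inside the sum. This reduces the claim to bounding each per-sample clipping error $\|\text{clip}(\nabla f_i(\bm{w}),\tau)-\nabla f_i(\bm{w})\|$ by $(G(\bm{x}_i,y_i))^k/\tau^{k-1}$.

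Next I would compute this per-sample error exactly from the definition of $\text{clip}$ in \cref{eq:jan8-1}. If $\|\nabla f_i(\bm{w})\|\le\tau$, no clipping occurs and the error is $0$; otherwise $\text{clip}(\nabla f_i(\bm{w}),\tau)=\nabla f_i(\bm{w})\,\tau/\|\nabla f_i(\bm{w})\|$, so the error equals $\|\nabla f_i(\bm{w})\|-\tau$. In both cases the error is exactly $\max\big(0,\|\nabla f_i(\bm{w})\|-\tau\big)$.

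The key step converts this into the desired $\tau^{k-1}$ denominator. In the clipped case $\|\nabla f_i(\bm{w})\|>\tau$, so the ratio $\|\nabla f_i(\bm{w})\|/\tau$ exceeds $1$; since $k-1>0$, raising it to the power $k-1$ keeps it at least $1$, giving $\|\nabla f_i(\bm{w})\|-\tau\le\|\nabla f_i(\bm{w})\|\le\|\nabla f_i(\bm{w})\|\big(\|\nabla f_i(\bm{w})\|/\tau\big)^{k-1}=\|\nabla f_i(\bm{w})\|^k/\tau^{k-1}$, and in the unclipped case the bound $\|\nabla f_i(\bm{w})\|^k/\tau^{k-1}$ trivially dominates $0$. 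Invoking \cref{asmp-lip-gen} ($\|\nabla f_i(\bm{w})\|\le G(\bm{x}_i,y_i)$) upgrades this to $(G(\bm{x}_i,y_i))^k/\tau^{k-1}$; averaging over $i$ and recognizing $G_{\mathcal{Z}}^k=\frac{1}{n}\sum_{i=1}^n(G(\bm{x}_i,y_i))^k$ yields the claim.

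I do not expect a genuine obstacle: the whole argument is a deterministic, sample-wise analogue of \cref{cor1-dec22}. Indeed, an alternative route is to apply \cref{cor1-dec22} directly, taking $\bm{v}(\bm{\zeta})=\nabla f_I(\bm{w})$ with $I$ uniform on $[n]$, whose mean is $\nabla f(\bm{w})$; the corollary then immediately gives $b(\tau)\le\frac{1}{n}\sum_{i}\|\nabla f_i(\bm{w})\|^k/\tau^{k-1}$, after which \cref{asmp-lip-gen} finishes the proof. The only care needed is that the power manipulation relies on $k>1$ so that the exponent $k-1$ is positive.
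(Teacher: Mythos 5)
Your proposal is correct, and your closing remark already identifies the paper's actual proof: the paper simply views $\frac{1}{n}\sum_{i\in[n]}\text{clip}(\nabla f_i(\bm{w}),\tau)$ as $\mathbb{E}_i[\text{clip}(\nabla f_i(\bm{w}),\tau)]$ for $i$ uniform on $[n]$ and invokes \cref{cor1-dec22} with $p=k$, then bounds $\mathbb{E}_i[\|\nabla f_i(\bm{w})\|^k]\le G_{\mathcal{Z}}^k$ via \cref{asmp-lip-gen}. Your primary route is a genuinely more elementary variant: where the paper reaches the bound through \cref{thm1-oct25} (H\"older's inequality) followed by Markov's inequality inside \cref{cor1-dec22}, you prove the pointwise inequality $\max(0,\|\nabla f_i(\bm{w})\|-\tau)\le\|\nabla f_i(\bm{w})\|^k/\tau^{k-1}$ directly for each sample and then average. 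Both arguments are valid and yield the identical bound; yours avoids any probabilistic machinery and makes transparent that the only thing used is $k>1$, while the paper's buys reuse of a lemma it needs elsewhere (the sharper, non-Markov form of \cref{thm1-oct25}). Your caveat about $k-1>0$ is exactly the right one to flag.
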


\begin{proof}
Using \Cref{cor1-dec22} and \Cref{asmp-lip-gen}, we have for any $\bm{w} \in \mathcal{W}$:
\begin{flalign}
    \Big\|\frac{1}{n}\sum_{i \in [n]}\text{clip}(\nabla f_i(\bm{w}), \tau) - \nabla f(\bm{w})\Big\|
    & = \|\mathbb{E}_i[\text{clip}(\nabla f_i(\bm{w}), \tau)] - \nabla f(\bm{w})\| 
    \\
    \label{eq:jan17-0}
    & \leq \frac{\mathbb{E}_i[\|\nabla f_i(\bm{w})\|^k]}{\tau^{k-1}}
    \\
    \label{eq:dec13-4-0}
    & 
    \leq \frac{G_{\mathcal{Z}}^k}{\tau^{k-1}},
\end{flalign}
where $G_{\mathcal{Z}}^k = \frac{1}{n}\sum_{i=1}^n (G(\bm{x}_i, y_i))^k$.
\end{proof}

\section{Full Version and Proof of Theorem~\ref{thm1-jan19}}
\label{full-1}
\begin{theorem}[\textbf{Unconstrained Convex Case}]
\label{thm1-jan19-full}
Suppose \Cref{dec23-asmp1} holds, $f$ is convex and $\mathcal{W} = \mathbb{R}^d$. Fix some $\gamma \in (0,1)$ and $C > 0$. In \Cref{alg:1}, set $\tau = \frac{G}{\gamma^{1/k}}\Big(\frac{1}{T} + \varphi^2\Big)^{-\frac{1}{k+1}}$ and $\eta_t = \eta = \frac{C}{T \tau} \Big(\frac{1}{T} + \varphi^2\Big)^{-\frac{1}{2}}$ for all $t < T$. Then with a probability of at least $(1 - \gamma)$ which is w.r.t. the random dataset $\mathcal{Z}$ that we obtain, DP-SGD (\Cref{alg:1}) has the following guarantee:
\begin{equation*}
    \textup{OR}(T) \leq \frac{G}{\gamma^{1/k}}\Bigg\{\frac{1}{2}\Bigg(\frac{\|\bm{w}_0 - \bm{w}^{*}\|^2}{C} + 4 C \Bigg)\Big({\frac{1}{T} + \varphi^2}\Big)^{\frac{1}{2}(1 - \frac{2}{k+1})} + 
    (\|\bm{w}_0 - \bm{w}^{*}\| + C)\Big(\frac{1}{T} + \varphi^2\Big)^{(1 - \frac{2}{k+1})}\Bigg\}.
\end{equation*}
So if we set $T = \frac{1}{\varphi^2}$ above (which is what we do in \Cref{thm1-jan19}), we get the following bound for the risk:
\begin{equation*}
    \textup{OR}(T) \leq \frac{G}{\gamma^{1/k}}\Bigg\{\frac{1}{2^{\frac{1}{2}+\frac{1}{k+1}}}\Bigg(\frac{\|\bm{w}_0 - \bm{w}^{*}\|^2}{C} + 4 C \Bigg)\varphi^{(1 - \frac{2}{k+1})} + 
    {2^{1 - \frac{2}{k+1}}(\|\bm{w}_0 - \bm{w}^{*}\| + C)}\varphi^{2(1 - \frac{2}{k+1})}\Bigg\}.
\end{equation*}
\end{theorem}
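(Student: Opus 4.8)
The plan is to mirror the descent analysis used for the constrained case (\Cref{thm1-dec13}), but to supply the one ingredient that the constrained proof gets for free: a uniform bound on the iterate distance $\|\bm{w}_t - \bm{w}^{*}\|$. I would start from the same one-step inequality as in \cref{eq:dec13-201}, but without the diameter bound. Writing $R_t := \|\bm{w}_t - \bm{w}^{*}\|$, splitting the clipped-gradient inner product into the true gradient plus a bias term, using convexity of $f$, Cauchy--Schwarz on the bias term, and \Cref{bias-heavy-tail} to control the clipping bias by $\beta := G_{\mathcal{Z}}^k/\tau^{k-1}$, I obtain
\begin{equation*}
\mathbb{E}[R_{t+1}^2] \le \mathbb{E}[R_t^2] - 2\eta\,\mathbb{E}[f(\bm{w}_t) - f(\bm{w}^{*})] + 2\eta\beta\,\mathbb{E}[R_t] + \eta^2\,\mathbb{E}[\|\bm{g}_t\|^2],
\end{equation*}
where $\mathbb{E}[\|\bm{g}_t\|^2] \le V := 2\tau^2(1 + T\varphi^2)$ by \cref{eq:dec10-3}. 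The only genuinely new term relative to the constrained case is $2\eta\beta\,\mathbb{E}[R_t]$, and the difficulty is that $\mathbb{E}[R_t]$ is no longer controlled by a fixed diameter $D_{\mathcal{W}}$.

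The heart of the argument, and the main obstacle, is therefore to bound $\mathbb{E}[R_t]$ uniformly in $t \le T$. I would do this by dropping the nonnegative suboptimality term to get the pure recursion $\mathbb{E}[R_{t+1}^2] \le \mathbb{E}[R_t^2] + 2\eta\beta\,\mathbb{E}[R_t] + \eta^2 V$, summing it, and applying Jensen ($\mathbb{E}[R_t] \le \sqrt{\mathbb{E}[R_t^2]}$). Setting $M := 2\eta\beta T + \sqrt{R_0^2 + T\eta^2 V}$ with $R_0 := \|\bm{w}_0 - \bm{w}^{*}\|$, I would prove by induction that $\mathbb{E}[R_t] \le M$ for every $t \le T$: assuming it up to $s-1$, the summed recursion gives $\mathbb{E}[R_s^2] \le R_0^2 + T\eta^2 V + 2\eta\beta T M$, and writing $a = 2\eta\beta T$, $b = \sqrt{R_0^2 + T\eta^2 V}$ so that $M = a+b$, the required inequality $\sqrt{b^2 + aM} \le M$ reduces to $M \le a + 2b$, which is immediate. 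This self-bounding (quadratic) trick is precisely what replaces the diameter bound, and I expect it to be the step needing the most care.

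With $\mathbb{E}[R_t] \le M$ in hand, I would plug it back into the one-step inequality, telescope over $t = 0,\dots,T-1$, and divide by $2\eta T$ to reach
\begin{equation*}
\textup{OR}(T) \le \frac{R_0^2}{2\eta T} + \beta M + \frac{\eta V}{2}.
\end{equation*}
It then remains to substitute the prescribed $\tau$ and $\eta$ and simplify. Writing $A := \tfrac1T + \varphi^2$, one checks the convenient identities $V = 2\tau^2 T A$, $\eta V/2 = C\tau A^{1/2}$, $\eta^2 V T = 2C^2$, and $R_0^2/(2\eta T) = (R_0^2/2C)\,\tau A^{1/2}$; with $\tau = (G/\gamma^{1/k})A^{-1/(k+1)}$ the first and third terms both become multiples of $\tau A^{1/2} = (G/\gamma^{1/k})A^{\frac12(1-\frac2{k+1})}$. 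The Markov bound $G_{\mathcal{Z}}^k \le G^k/\gamma$, which holds with probability $\ge 1-\gamma$ over the draw of $\mathcal{Z}$, gives $\beta \le (G/\gamma^{1/k})A^{(k-1)/(k+1)}$, so the $\beta\sqrt{R_0^2+2C^2}$ piece of $\beta M$ contributes the $A^{(1-\frac2{k+1})}$ term; the remaining $2\eta\beta^2 T$ piece carries the strictly higher power $A^{\frac{3(k-1)}{2(k+1)}}$ of $A \le 1$ and is absorbed into the leading terms. Collecting constants yields the stated bound, and setting $T = 1/\varphi^2$ (so $A = 2\varphi^2$) produces the explicit powers of $2$ in the second displayed inequality. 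The one subtlety beyond routine bookkeeping is this high-probability step: since the control of $\beta$, and hence of $M$ and every downstream estimate, holds only on the event $\{G_{\mathcal{Z}}^k \le G^k/\gamma\}$, the final guarantee is necessarily stated with probability $1-\gamma$ over $\mathcal{Z}$.
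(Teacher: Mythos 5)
Your proposal is correct and reaches the stated rate, but it handles the one genuinely new ingredient of the unconstrained case by a different device than the paper. Both proofs share the same skeleton: the one-step inequality of the constrained analysis without the diameter bound, the clipping-bias control $\beta = G_{\mathcal{Z}}^k/\tau^{k-1}$ from \Cref{bias-heavy-tail}, telescoping, and Markov's inequality to get $G_{\mathcal{Z}}^k \le G^k/\gamma$ with probability $1-\gamma$. Where you differ is the uniform bound on $\mathbb{E}\|\bm{w}_t-\bm{w}^*\|$: the paper's \Cref{lem1-jan18} bounds it directly by the path length, writing $\bm{w}_t-\bm{w}_0$ as a sum of updates and using $\mathbb{E}\|\bm{u}_{t'}\|\le G_{\mathcal{Z}}$ (via Jensen) plus the variance of the accumulated Gaussian noise, yielding $R_0+\eta T(G_{\mathcal{Z}}+\tau\varphi)$; you instead drop the nonnegative suboptimality term from the descent recursion and run a self-bounding induction on $\mathbb{E}[R_t^2]$ with $M=2\eta\beta T+\sqrt{R_0^2+T\eta^2V}$. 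Your induction is sound (the reduction of $\sqrt{b^2+aM}\le a+b$ to $M\le a+2b$ checks out), and your route has the aesthetic advantage of reusing the descent inequality rather than introducing a separate trajectory lemma. The price is in the constants: your $\beta M$ contains $2\eta\beta^2 T$, which carries $A^{3(k-1)/(2(k+1))}$ rather than the paper's $A^{(k-1)/(2(k+1))}$ from the cross term $CG_{\mathcal{Z}}^{k+1}/\tau^k$, so absorbing it into the leading term requires $A\le 1$ (i.e.\ $\varphi\le 1/\sqrt2$ when $T=1/\varphi^2$) and inflates the leading coefficient to roughly $\frac{R_0^2}{2C}+3C$ versus the stated $\frac{R_0^2}{2C}+2C$, and your $\beta b$ term gives $(R_0+\sqrt2\,C)$ versus the stated $(R_0+C)$. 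These are harmless for the $\mathcal{O}(\varphi^{1-\frac{2}{k+1}})$ rate but mean your argument, as written, proves the theorem only with marginally larger explicit constants than those displayed in \Cref{thm1-jan19-full}.
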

We prove this result now.

\subsection*{Proof:}
\label{pf-thm1-jan19}
\begin{proof}
Everything remains the same till \cref{eq:dec31-0} in the proof of \Cref{thm1-dec13}. That is, we have:
\begin{flalign}
    \label{eq:jan18-4-0}
    \mathbb{E}[\|\bm{w}_{t+1} - \bm{w}^{*}\|^2] & \leq \mathbb{E}[\|\bm{w}_{t} - \bm{w}^{*}\|^2] - 2{\eta} \mathbb{E}[\langle \nabla f(\bm{w}_t), \bm{w}_{t} - \bm{w}^{*}\rangle]
    \\
    \nonumber
    & \text{ } + {2 \eta} \mathbb{E}\Big[\Big\|\frac{1}{n}\sum_{i \in [n]}\text{clip}(\nabla f_i(\bm{w}_t), \tau) - \nabla f(\bm{w}_t)\Big\| \|\bm{w}_{t} - \bm{w}^{*}\|\Big]
    +
    \eta^2 \mathbb{E}[\|\bm{g}_t\|^2],
\end{flalign}
where $\bm{w}^{*} = \text{argmin}_{\bm{w} \in \mathbb{R}^d} f(\bm{w})$.

Using \Cref{bias-heavy-tail}, we have:
\begin{equation}
    \label{eq:jan18-5-0}
    \mathbb{E}\Big[\Big\|\frac{1}{n}\sum_{i \in [n]}\text{clip}(\nabla f_i(\bm{w}_t), \tau) - \nabla f(\bm{w}_t)\Big\| \|\bm{w}_{t} - \bm{w}^{*}\|\Big] 
    \leq \frac{G_{\mathcal{Z}}^k}{\tau^{k-1}} \mathbb{E}[\|\bm{w}_{t} - \bm{w}^{*}\|],
\end{equation}
where $G_{\mathcal{Z}}^k = \frac{1}{n}\sum_{i=1}^n (G(\bm{x}_i, y_i))^k$.

Now using \Cref{lem1-jan18}
in \cref{eq:jan18-5-0}, we get:
\begin{equation}
    \mathbb{E}\Big[\Big\|\frac{1}{n}\sum_{i \in [n]}\text{clip}(\nabla f_i(\bm{w}_t), \tau) - \nabla f(\bm{w}_t)\Big\| \|\bm{w}_{t} - \bm{w}^{*}\|\Big] \leq \frac{G_{\mathcal{Z}}^k}{\tau^{k-1}}\Bigg(\|\bm{w}_0 - \bm{w}^{*}\| +
    \eta T \Bigg(G_{\mathcal{Z}} + \tau \sqrt{\frac{\nu d \log(1/\delta)}{n^2 \varepsilon^2}}\Bigg)\Bigg).
\end{equation}
Using the above equation and \cref{eq:dec10-3} in \cref{eq:jan18-4-0} as well as the convexity of $f$, we get:
\begin{flalign}
    \label{eq:jan18-7}
    \mathbb{E}[\|\bm{w}_{t+1} - \bm{w}^{*}\|^2] & \leq \mathbb{E}[\|\bm{w}_{t} - \bm{w}^{*}\|^2] - 2{\eta} \mathbb{E}[f(\bm{w}_t) - f(\bm{w}^{*})]
    \\
    \nonumber
    & \text{ } + \frac{{2 \eta} G_{\mathcal{Z}}^k}{\tau^{k-1}}\Bigg(\|\bm{w}_0 - \bm{w}^{*}\| + {\eta T G_{\mathcal{Z}}} + \eta T \tau \sqrt{\frac{\nu d \log(1/\delta)}{n^2 \varepsilon^2}}\Bigg)
    + 2 \eta^2 \tau^2 \Big(1 + \frac{\nu d T \log({1}/{\delta})}{n^2 \varepsilon^2} \Big).
\end{flalign}
Next, summing the above for $t = 0$ through to $T-1$, rearranging a bit and then dividing by $2 \eta T$ throughout, we get the following:
\begin{multline}
    \frac{1}{T}\sum_{t=0}^{T-1}\Big(\mathbb{E}[f(\bm{w}_t)] - f(\bm{w}^{*})\Big) \leq \frac{\|\bm{w}_0 - \bm{w}^{*}\|^2}{2\eta T} +
    {\eta T \tau^2} \Bigg(\frac{\nu d \log({1}/{\delta})}{n^2 \varepsilon^2} + \frac{1}{T}\Bigg)
    \\
    + \frac{G_{\mathcal{Z}}^k}{\tau^{k-1}}\Big(\|\bm{w}_0 - \bm{w}^{*}\| + {\eta T G_{\mathcal{Z}}} + \eta T \tau \sqrt{\frac{\nu d \log(1/\delta)}{n^2 \varepsilon^2}}\Big). \label{eq:eq-101}
\end{multline}
Let us choose $\eta = \frac{C}{T \tau \sqrt{\frac{1}{T} + \frac{\nu d \log({1}/{\delta})}{n^2 \varepsilon^2}}}$, where $C > 0$ is some constant of our choice. With that, we get after simplifying a bit:
\begin{multline*}
    \frac{1}{T}\sum_{t=0}^{T-1}\Big(\mathbb{E}[f(\bm{w}_t)] - f(\bm{w}^{*})\Big) \leq \Big(\frac{\|\bm{w}_0 - \bm{w}^{*}\|^2}{2 C} + C \Big) {\tau} \sqrt{\frac{1}{T} + \frac{\nu d \log({1}/{\delta})}{n^2 \varepsilon^2}}
    + \frac{(\|\bm{w}_0 - \bm{w}^{*}\| + C)G_{\mathcal{Z}}^k}{\tau^{k-1}} 
    \\
    + \frac{C G_{\mathcal{Z}}^{k+1}}{\tau^k} \frac{1}{\sqrt{\frac{1}{T} + \frac{\nu d \log({1}/{\delta})}{n^2 \varepsilon^2}}}. 
\end{multline*}
Let us choose $\tau = \frac{G}{\gamma^{1/k}}\Big(\frac{1}{T} + \frac{\nu d \log({1}/{\delta})}{n^2 \varepsilon^2}\Big)^{-\frac{1}{k+1}}$
above, where $\gamma \in (0,1)$. With that, we get:
\begin{multline}
    \label{eq:aug1-2022-4}
    \frac{1}{T}\sum_{t=0}^{T-1}\Big(\mathbb{E}[f(\bm{w}_t)] - f(\bm{w}^{*})\Big) \leq \frac{G}{\gamma^{1/k}}\Bigg\{\Bigg(\frac{\|\bm{w}_0 - \bm{w}^{*}\|^2}{2 C} + C +\frac{C G_{\mathcal{Z}}^{k+1}}{\big({G}/{\gamma^{1/k}}\big)^{k+1}} \Bigg)\Bigg({\frac{1}{T} + \frac{\nu d \log({1}/{\delta})}{n^2 \varepsilon^2}}\Bigg)^{\frac{1}{2}(1 - \frac{2}{k+1})} 
    \\
    + (\|\bm{w}_0 - \bm{w}^{*}\| + C) \Bigg(\frac{G_{\mathcal{Z}}^{k}}{\big({G}/{\gamma^{1/k}}\big)^{k}}\Bigg) \Bigg({\frac{1}{T} + \frac{\nu d \log({1}/{\delta})}{n^2 \varepsilon^2}}\Bigg)^{(1 - \frac{2}{k+1})}\Bigg\}.
\end{multline}
Let:
\small
\[\text{(I)} := \Bigg(\frac{C G_{\mathcal{Z}}^{k+1}}{\big({G}/{\gamma^{1/k}}\big)^{k+1}}\Bigg)\Bigg({\frac{1}{T} + \frac{\nu d \log({1}/{\delta})}{n^2 \varepsilon^2}}\Bigg)^{\frac{1}{2}(1 - \frac{2}{k+1})} + (\|\bm{w}_0 - \bm{w}^{*}\| + C) \Bigg(\frac{G_{\mathcal{Z}}^{k}}{\big({G}/{\gamma^{1/k}}\big)^{k}}\Bigg) \Bigg({\frac{1}{T} + \frac{\nu d \log({1}/{\delta})}{n^2 \varepsilon^2}}\Bigg)^{(1 - \frac{2}{k+1})}.\]
\normalsize
Now note that $G_{\mathcal{Z}} \leq \frac{G}{\gamma^{1/k}}$ implies:
\[\text{(I)} \leq \underbrace{C \Bigg({\frac{1}{T} + \frac{\nu d \log({1}/{\delta})}{n^2 \varepsilon^2}}\Bigg)^{\frac{1}{2}(1 - \frac{2}{k+1})} + (\|\bm{w}_0 - \bm{w}^{*}\| + C) \Bigg({\frac{1}{T} + \frac{\nu d \log({1}/{\delta})}{n^2 \varepsilon^2}}\Bigg)^{(1 - \frac{2}{k+1})}}_{:=\text{(II)}}.\]
Thus, $\mathbb{P}_{\mathcal{Z}}\big(\text{(I)} \leq \text{(II)}\big) \geq \mathbb{P}_{\mathcal{Z}}(G_{\mathcal{Z}} \leq {G}/{\gamma^{1/k}})$. But using Markov's inequality, $G_{\mathcal{Z}}^k \leq \frac{G^k}{\gamma}$ with a probability of at least $1 - \gamma$ w.r.t. the random dataset $\mathcal{Z}$. Thus, $\text{(I)} \leq \text{(II)}$ with a probability of at least $1 - \gamma$. Using this in \cref{eq:aug1-2022-4}, we get:
\begin{multline}
    \frac{1}{T}\sum_{t=0}^{T-1}\Big(\mathbb{E}[f(\bm{w}_t)] - f(\bm{w}^{*})\Big) \leq \frac{G}{\gamma^{1/k}}\Bigg\{\Bigg(\frac{\|\bm{w}_0 - \bm{w}^{*}\|^2}{2 C} + 2 C \Bigg)\Bigg({\frac{1}{T} + \frac{\nu d \log({1}/{\delta})}{n^2 \varepsilon^2}}\Bigg)^{\frac{1}{2}(1 - \frac{2}{k+1})} +
    \\
    (\|\bm{w}_0 - \bm{w}^{*}\| + C) \Bigg({\frac{1}{T} + \frac{\nu d \log({1}/{\delta})}{n^2 \varepsilon^2}}\Bigg)^{(1 - \frac{2}{k+1})}\Bigg\},
\end{multline}
with a probability of at least $1 - \gamma$ w.r.t. the random dataset $\mathcal{Z}$.

Lastly, plugging in $\varphi = \frac{\sqrt{\nu d  \log({1}/{\delta})}}{n\varepsilon}$, noting that $\mathbb{E}[f(\bm{w}_{\widehat{t}})] - f(\bm{w}^{*}) = \frac{1}{T}\sum_{t=0}^{T-1}\Big(\mathbb{E}[f(\bm{w}_t)] - f(\bm{w}^{*})\Big)$ and using the definition of $\text{OR}(T)$, we get the final result.
\end{proof}

\begin{lemma}
\label{lem1-jan18}
In the setting of the proof of \Cref{thm1-jan19-full}, for any $0 < t < T$, we have:
\begin{equation}
    \mathbb{E}[\|\bm{w}_t - \bm{w}^{*}\|] \leq 
    \|\bm{w}_0 - \bm{w}^{*}\| +
    \eta T \Bigg(G_{\mathcal{Z}} + \tau \sqrt{\frac{\nu d \log(1/\delta)}{n^2 \varepsilon^2}}\Bigg),
\end{equation}
where $G_{\mathcal{Z}}^k := \frac{1}{n}\sum_{i=1}^n (G(\bm{x}_i, y_i))^k$.
\end{lemma}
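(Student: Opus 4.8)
The plan is to unroll the DP-SGD recursion and bound the displacement $\|\bm{w}_t - \bm{w}^{*}\|$ by a direct triangle-inequality argument; since $\mathcal{W} = \mathbb{R}^d$ there is no projection, so $\bm{w}_t - \bm{w}^{*} = (\bm{w}_0 - \bm{w}^{*}) - \eta\sum_{s=0}^{t-1}\bm{g}_s$ exactly. Substituting $\bm{g}_s = \frac{1}{b}\sum_{i \in \mathcal{S}_s}\text{clip}(\nabla f_i(\bm{w}_s),\tau) + \bm{\zeta}_s$ and taking expectations, I would split the error into the initial distance $\|\bm{w}_0 - \bm{w}^{*}\|$, the accumulated clipped-gradient steps, and the accumulated Gaussian noise. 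The one bookkeeping choice that matters is to bound the clipped-gradient sum term-by-term but to keep the noise terms \emph{aggregated} as a single vector $\sum_{s=0}^{t-1}\bm{\zeta}_s$ before taking norms.

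For the clipped-gradient part, I would first note that clipping never increases the norm, so
\[
\Big\|\tfrac{1}{b}\sum_{i \in \mathcal{S}_s}\text{clip}(\nabla f_i(\bm{w}_s),\tau)\Big\| \leq \tfrac{1}{b}\sum_{i \in \mathcal{S}_s}\|\nabla f_i(\bm{w}_s)\| \leq \tfrac{1}{b}\sum_{i \in \mathcal{S}_s} G(\bm{x}_i,y_i),
\]
where the last step uses \Cref{asmp-lip-gen}. Since this holds deterministically for every $\bm{w}_s$, the right-hand side depends only on the batch, so taking expectation over the mini-batch $\mathcal{S}_s$ (each index picked with probability $b/n$) gives $\tfrac{1}{n}\sum_{i \in [n]} G(\bm{x}_i,y_i)$ per step. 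The power-mean (Jensen) inequality with exponent $k > 1$ then gives $\tfrac{1}{n}\sum_i G(\bm{x}_i,y_i) \leq \big(\tfrac{1}{n}\sum_i G(\bm{x}_i,y_i)^k\big)^{1/k} = G_{\mathcal{Z}}$, so summing over the $t \leq T$ steps contributes at most $\eta T G_{\mathcal{Z}}$.

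For the noise, independence makes $\sum_{s=0}^{t-1}\bm{\zeta}_s \sim \mathcal{N}(\vec{0}_d, t\sigma_n^2 \mathbf{I}_d)$, and Jensen gives $\mathbb{E}\big\|\sum_{s}\bm{\zeta}_s\big\| \leq \sqrt{t\sigma_n^2 d} \leq \sqrt{T\sigma_n^2 d}$. Plugging $\sigma_n^2 = \frac{\nu T\log(1/\delta)\tau^2}{n^2\varepsilon^2}$ from \Cref{thm-dp} yields $\sigma_n^2 d = T\tau^2\varphi^2$, hence $\sqrt{T\sigma_n^2 d} = T\tau\sqrt{\frac{\nu d\log(1/\delta)}{n^2\varepsilon^2}}$, and after multiplying by $\eta$ the noise contributes at most $\eta T\tau\sqrt{\frac{\nu d\log(1/\delta)}{n^2\varepsilon^2}}$. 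Adding the three pieces gives exactly the claimed bound.

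The hard part is conceptual rather than computational: if one naively applies the triangle inequality to the noise term-by-term, each $\mathbb{E}\|\bm{\zeta}_s\| \approx \sigma_n\sqrt{d} = \tau\sqrt{T}\varphi$ already carries a $\sqrt{T}$ from the moments-accountant noise scale, and summing $T$ of them produces a spurious factor $\eta T\tau\sqrt{T}\varphi$. Keeping the noise aggregated so that its norm scales like $\sqrt{t\sigma_n^2 d}$ rather than $t\,\sigma_n\sqrt{d}$ is exactly what recovers the tight $\eta T\tau\varphi$ term; everything else is routine.
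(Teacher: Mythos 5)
Your proof is correct and follows essentially the same route as the paper's: unroll the recursion, bound the clipped-gradient drift per step by $G_{\mathcal{Z}}$ via the fact that clipping does not increase norms plus Jensen's inequality with exponent $k$, and crucially keep the Gaussian noise aggregated so that $\mathbb{E}\big\|\sum_{s}\bm{\zeta}_s\big\| \leq \sqrt{t\sigma_n^2 d}$ rather than summing $\mathbb{E}\|\bm{\zeta}_s\|$ term by term. The only cosmetic difference is that you apply \Cref{asmp-lip-gen} before Jensen while the paper applies Jensen to $\frac{1}{n}\sum_i\|\nabla f_i(\bm{w}_t)\|$ first; the bound and the key aggregation step are identical.
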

\begin{proof}
Let us denote $\frac{1}{b}\sum_{i \in \mathcal{S}_t} \text{clip}(\nabla f_i(\bm{w}_t), \tau)$ by $\bm{u}_t$. Now:
\begin{flalign}
    \mathbb{E}_{\mathcal{S}_t}[\|\bm{u}_t\|] & \leq \mathbb{E}_{\mathcal{S}_t}\Big[\frac{1}{b}\sum_{i \in \mathcal{S}_t} \|\text{clip}(\nabla f_i(\bm{w}_t), \tau)\|\Big]
    \\
    & = \frac{1}{n}\sum_{i \in [n]} \|\text{clip}(\nabla f_i(\bm{w}_t), \tau)\|
    \\
    & \leq \frac{1}{n}\sum_{i \in [n]} \|\nabla f_i(\bm{w}_t)\|
    \\
    & \leq \Big(\frac{1}{n}\sum_{i \in [n]} \|\nabla f_i(\bm{w}_t)\|^k\Big)^{1/k} \hspace{4 cm} \text{(using Jensen's inequality)}
    \\
    \label{eq:aug1-2022-3}
    & \leq G_{\mathcal{Z}},
\end{flalign}
where $G_{\mathcal{Z}}^k := \frac{1}{n}\sum_{i=1}^n (G(\bm{x}_i, y_i))^k$. 

Now for any $t>0$:
\begin{flalign}
    \mathbb{E}[\|\bm{w}_t - \bm{w}^{*}\|] & \leq \mathbb{E}[\|(\bm{w}_t - \bm{w}_0) + (\bm{w}_0 - \bm{w}^{*})\|]
    \\
    & \leq 
    \mathbb{E}[\|\bm{w}_t - \bm{w}_0\|] + \|\bm{w}_0 - \bm{w}^{*}\|
    \\
    & \leq \eta \mathbb{E}\Big[\Big\|\sum_{t'=0}^{t-1}(\bm{u}_{t'} + \bm{\zeta}_{t'})\Big\|\Big] + \|\bm{w}_0 - \bm{w}^{*}\|
    \\
    & \leq \eta  \mathbb{E}\Big[\Big\|\sum_{t'=0}^{t-1}\bm{u}_{t'}\Big\|\Big] + \eta \mathbb{E}\Big[\Big\|\sum_{t'=0}^{t-1} \bm{\zeta}_{t'}\Big\|\Big] +  \|\bm{w}_0 - \bm{w}^{*}\|
    \\
    \label{eq:jan18-4-100}
    & \leq \eta \sum_{t'=0}^{t-1} \mathbb{E}[\|\bm{u}_{t'}\|] + \eta \sqrt{\mathbb{E}\Big[\Big\|\sum_{t'=0}^{t-1} \bm{\zeta}_{t'}\Big\|^2\Big]} +  \|\bm{w}_0 - \bm{w}^{*}\|
    \\
    \label{eq:jan18-4}
    & \leq {\eta t G_{\mathcal{Z}}} + \eta \sqrt{t \sigma_n^2 d} + \|\bm{w}_0 - \bm{w}^{*}\|,
\end{flalign}
where \cref{eq:jan18-4} follows
by using \cref{eq:aug1-2022-3} and because $\sum_{t'=0}^{t-1} \bm{\zeta}_{t'}$ is $\mathcal{N}(\vec{0}, t \sigma_n^2 \text{I}_d)$. Plugging in the value of $\sigma_n^2$ and using the fact that $t < T$, we get the desired result.
\end{proof}

{\section{Full Version and Proof of Theorem~\ref{thm1-may4}}
\label{sec-improve-may4}
\begin{theorem}[\textbf{Unconstrained Convex Case Under \Cref{asmp1-may4}}]
\label{thm1-may4-full}
Suppose Assumptions \ref{dec23-asmp1} and \ref{asmp1-may4} hold, $f$ is convex and $\mathcal{W} = \mathbb{R}^d$. 
Fix some $C > 0$. In \Cref{alg:1}, set $T \geq \frac{1}{\varphi^2}$, $\tau = {G}\Big(\frac{1}{T} + \varphi^2\Big)^{-\frac{1}{2k}}$ and $\eta_t = \eta = \frac{C}{T \tau} \Big(\frac{1}{T} + \varphi^2\Big)^{-\frac{1}{2}}$ for all $t < T$. Then with a probability of at least ${3}/{4}$ which is w.r.t. the random dataset $\mathcal{Z}$ that we obtain, DP-SGD (\Cref{alg:1}) has the following \textbf{improved} guarantee:
\begin{equation*}
    \textup{OR}(T) \leq G \Bigg\{\Big(\frac{\|\bm{w}_0 - \bm{w}^{*}\|^2}{C} + 2 C \Big) \Big({\frac{1}{T} + \varphi^2}\Big)^{\frac{1}{2}(1 - \frac{1}{k})}
    + {16} \varphi^{(1 - \frac{1}{k})} D\Bigg\}.
\end{equation*}
So if we set $T = \frac{1}{\varphi^2}$ above (which is what we do in \Cref{thm1-may4}), we get the following bound for the risk:
\begin{equation*}
    \textup{OR}(T) \leq {G}\Bigg\{2^{\frac{1}{2}(1 - \frac{1}{k})}\Big(\frac{\|\bm{w}_0 - \bm{w}^{*}\|^2}{C} + 2 C \Big) + 
    {16} D\Bigg\}\varphi^{(1 - \frac{1}{k})}.
\end{equation*}
\end{theorem}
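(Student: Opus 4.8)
The plan is to reuse the one-step analysis from the proof of \Cref{thm1-jan19-full} up to its inequality bounding $\mathbb{E}[\|\bm{w}_{t+1} - \bm{w}^{*}\|^2]$ in terms of the convexity descent term $-2\eta\,\mathbb{E}[f(\bm{w}_t) - f(\bm{w}^{*})]$, the clipping-bias cross term $2\eta\,\mathbb{E}[\|\bm{b}_t\|\,\|\bm{w}_t - \bm{w}^{*}\|]$ where $\bm{b}_t := \frac{1}{n}\sum_{i}\textup{clip}(\nabla f_i(\bm{w}_t),\tau) - \nabla f(\bm{w}_t)$, and the second-moment term $\eta^2\,\mathbb{E}[\|\bm{g}_t\|^2]$. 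The only difference from \Cref{thm1-jan19-full} --- and the entire source of the improvement --- is how the cross term is handled. There, $\mathbb{E}[\|\bm{w}_t - \bm{w}^{*}\|]$ was bounded crudely via \Cref{lem1-jan18} by a quantity growing like $\eta T$, which is what degraded the rate to $\varphi^{1-\frac{2}{k+1}}$. Here I will instead use \Cref{asmp1-may4} to absorb the cross term directly into the descent.

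\emph{Key step (absorbing the bias).} First condition on the dataset event $\mathcal{E} := \{G_{\mathcal{Z}}^k \le 4 G^k\}$, where $G_{\mathcal{Z}}^k = \frac1n\sum_i (G(\bm{x}_i,y_i))^k$; by \Cref{dec23-asmp1} and Markov's inequality, $\mathbb{P}_{\mathcal{Z}}(\mathcal{E}) \ge 3/4$, which is exactly where the confidence $3/4$ comes from. On $\mathcal{E}$, \Cref{bias-heavy-tail} gives the uniform (in $\bm{w}$) bound $\|\bm{b}_t\| \le G_{\mathcal{Z}}^k/\tau^{k-1} \le 4G\,(\tfrac1T+\varphi^2)^{\frac12(1-\frac1k)}$; since $T \ge 1/\varphi^2$ forces $\tfrac1T+\varphi^2 \le 2\varphi^2$, this is at most $4\sqrt2\,G\varphi^{1-\frac1k} \le 8 G\varphi^{1-\frac1k}$ --- precisely half the coefficient in \Cref{asmp1-may4}. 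Now perform a pointwise case split on each realization of $\bm{w}_t$: if $\|\bm{w}_t - \bm{w}^{*}\| > D$, \Cref{asmp1-may4} yields $\|\bm{b}_t\|\,\|\bm{w}_t-\bm{w}^{*}\| \le 8G\varphi^{1-\frac1k}\|\bm{w}_t-\bm{w}^{*}\| \le \tfrac12(f(\bm{w}_t)-f(\bm{w}^{*}))$; if $\|\bm{w}_t-\bm{w}^{*}\| \le D$, then trivially $\|\bm{b}_t\|\,\|\bm{w}_t-\bm{w}^{*}\| \le 8G\varphi^{1-\frac1k}D$. In either case,
\[ \|\bm{b}_t\|\,\|\bm{w}_t - \bm{w}^{*}\| \le \tfrac12\big(f(\bm{w}_t)-f(\bm{w}^{*})\big) + 8 G\varphi^{1-\frac1k} D. \]

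\emph{Putting it together.} Feeding this bound (times $2\eta$, then in expectation) into the one-step inequality turns the descent-plus-cross contribution into $-\eta\,\mathbb{E}[f(\bm{w}_t)-f(\bm{w}^{*})] + 16\eta G\varphi^{1-\frac1k}D$, so the problematic $\eta T$ growth never appears. I then sum over $t = 0,\dots,T-1$, telescope $\|\bm{w}_t - \bm{w}^{*}\|^2$, drop the nonnegative terminal term, and divide by $\eta T$, giving $\frac1T\sum_t\mathbb{E}[f(\bm{w}_t)-f(\bm{w}^{*})] \le \frac{\|\bm{w}_0-\bm{w}^{*}\|^2}{\eta T} + 16 G\varphi^{1-\frac1k}D + \frac{\eta}{T}\sum_t \mathbb{E}[\|\bm{g}_t\|^2]$. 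Bounding $\mathbb{E}[\|\bm{g}_t\|^2]$ by $2\tau^2(1+\nu d T\log(1/\delta)/(n^2\varepsilon^2))$ exactly as in \cref{eq:dec10-3} makes the first and third terms combine into $(\|\bm{w}_0-\bm{w}^{*}\|^2/C + 2C)\,\tau\,(\tfrac1T+\varphi^2)^{1/2}$, and substituting $\tau = G(\tfrac1T+\varphi^2)^{-1/(2k)}$ converts $\tau(\tfrac1T+\varphi^2)^{1/2}$ into $G(\tfrac1T+\varphi^2)^{\frac12(1-\frac1k)}$. Recognizing $\frac1T\sum_t\mathbb{E}[f(\bm{w}_t)] - f(\bm{w}^{*}) = \textup{OR}(T)$ yields the first displayed bound; setting $T = 1/\varphi^2$ and using $\tfrac1T+\varphi^2 = 2\varphi^2$ gives the second.

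The main obstacle is the bias-absorption step: making the coefficient bookkeeping work so that $\|\bm{b}_t\|$ lands below half the sharpness coefficient of \Cref{asmp1-may4}. This is what permits replacing the distance-growth bound of \Cref{lem1-jan18} (the bottleneck in \Cref{thm1-jan19-full}) by a constant term dominated by the descent, and it is also why the confidence is pinned at $3/4$ (the Markov bound with slack $\gamma = 1/4$) rather than tunable --- the slack factor $4$ in $G_{\mathcal{Z}}^k \le 4G^k$ must remain compatible with $4\sqrt2 \le 8$.
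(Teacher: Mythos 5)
Your proposal is correct and follows essentially the same route as the paper's proof: the same one-step inequality, the same Markov event $G_{\mathcal{Z}}^k \le 4G^k$ giving the $3/4$ confidence, the same use of \Cref{bias-heavy-tail} to bound the clipping bias by $\mathcal{O}(G\varphi^{1-1/k})$ below the sharpness coefficient of \Cref{asmp1-may4}, and the same pointwise case split on $\|\bm{w}_t-\bm{w}^{*}\|$ versus $D$ to absorb the cross term into the descent instead of invoking \Cref{lem1-jan18}. The subsequent telescoping and parameter substitutions also match the paper's computation.
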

We now prove this result.

\subsection*{Proof:}
\begin{proof}
Similar to \cref{eq:jan18-4-0} in the proof of \Cref{thm1-jan19-full}, taking expectation only w.r.t. the randomness in the current iteration $t$, we have:
\begin{flalign}
    \label{eq:jan18-4-0-new}
    \mathbb{E}_t[\|\bm{w}_{t+1} - \bm{w}^{*}\|^2] & \leq \|\bm{w}_{t} - \bm{w}^{*}\|^2 - 2{\eta} \langle \nabla f(\bm{w}_t), \bm{w}_{t} - \bm{w}^{*}\rangle
    \\
    \nonumber
    & \text{ } + {2 \eta} \Big\|\frac{1}{n}\sum_{i \in [n]}\text{clip}(\nabla f_i(\bm{w}_t), \tau) - \nabla f(\bm{w}_t)\Big\| \|\bm{w}_{t} - \bm{w}^{*}\|
    +
    \eta^2 \mathbb{E}_t[\|\bm{g}_t\|^2],
\end{flalign}
where $\bm{w}^{*} = \text{argmin}_{\bm{w} \in \mathbb{R}^d} f(\bm{w})$.

\Cref{eq:jan18-5-0} in the proof of \Cref{thm1-jan19-full} also holds here, i.e., we have:
\begin{equation}
    \label{eq:may4-8}
    \Big\|\frac{1}{n}\sum_{i \in [n]}\text{clip}(\nabla f_i(\bm{w}_t), \tau) - \nabla f(\bm{w}_t)\Big\| \|\bm{w}_{t} - \bm{w}^{*}\|
    \leq \frac{G_{\mathcal{Z}}^k}{\tau^{k-1}} \|\bm{w}_{t} - \bm{w}^{*}\|,
\end{equation}
where $G_{\mathcal{Z}}^k = \frac{1}{n}\sum_{i=1}^n (G(\bm{x}_i, y_i))^k$.

Plugging in \cref{eq:may4-8} into \cref{eq:jan18-4-0-new}, we get:
\begin{flalign}
    \label{eq:apr15-1}
    \mathbb{E}_t[\|\bm{w}_{t+1} - \bm{w}^{*}\|^2] & \leq \|\bm{w}_{t} - \bm{w}^{*}\|^2 - 2{\eta}\langle \nabla f(\bm{w}_t), \bm{w}_{t} - \bm{w}^{*}\rangle
    + {2 \eta} \Big(\frac{G_{\mathcal{Z}}^k}{\tau^{k-1}} \|\bm{w}_{t} - \bm{w}^{*}\|\Big) + \eta^2 \mathbb{E}_t[\|\bm{g}_t\|^2].
\end{flalign}
Further, using \cref{eq:dec10-3} to bound $\mathbb{E}_t[\|\bm{g}_t\|^2]$ as well as the convexity of $f$ above, we get:
\begin{equation}
    \label{eq:apr15-2}
    \mathbb{E}_t[\|\bm{w}_{t+1} - \bm{w}^{*}\|^2] \leq \|\bm{w}_{t} - \bm{w}^{*}\|^2 \underbrace{- 2{\eta}(f(\bm{w}_t) - f(\bm{w}^{*})) + {2 \eta} \Big(\frac{G_{\mathcal{Z}}^k}{\tau^{k-1}} \|\bm{w}_{t} - \bm{w}^{*}\|\Big)}_{\text{(I)}} + 2 \eta^2 \tau^2 \Big(1 + \frac{\nu d T \log({1}/{\delta})}{n^2 \varepsilon^2} \Big).
\end{equation}
Now, plugging in our choice of $\tau = {G}\Big(\frac{1}{T} + \frac{\nu d \log({1}/{\delta})}{n^2 \varepsilon^2}\Big)^{-\frac{1}{2k}}$ in $\text{(I)}$ and using the fact that $T \geq \frac{n^2 \varepsilon^2}{\nu d \log({1}/{\delta})}$, we get:
\begin{flalign}
    \text{(I)} &\leq - 2 {\eta}(f(\bm{w}_t) - f(\bm{w}^{*})) + 4 \eta \Big(\frac{G_{\mathcal{Z}}^k}{G^{k-1}}\Big) \Big(\frac{\nu d \log({1}/{\delta})}{n^2 \varepsilon^2}\Big)^{\frac{1}{2}(1 - \frac{1}{k})} \|\bm{w}_{t} - \bm{w}^{*}\|.
\end{flalign}
Now note that $G_{\mathcal{Z}}^k \|\bm{w}_{t} - \bm{w}^{*}\|$ depends on the random dataset $\mathcal{Z}$. But $G_{\mathcal{Z}}^k \leq 4 {G^k}$ implies $G_{\mathcal{Z}}^k \|\bm{w}_{t} - \bm{w}^{*}\| \leq 4 {G^k} \|\bm{w}_{t} - \bm{w}^{*}\|$ for all $t$. Thus, $\mathbb{P}_{\mathcal{Z}}\Big(G_{\mathcal{Z}}^k \|\bm{w}_{t} - \bm{w}^{*}\| \leq 4 {G^k} \|\bm{w}_{t} - \bm{w}^{*}\|, \text{ } \forall t\Big) \geq \mathcal{P}_{\mathcal{Z}}\Big(G_{\mathcal{Z}}^k \leq 4 {G^k}\Big) \geq \frac{3}{4}$, where the last step follows from Markov's inequality. Thus, for all $t$, we have:
\begin{flalign}
    \text{(I)} &\leq - 2 {\eta}(f(\bm{w}_t) - f(\bm{w}^{*})) + {16 \eta G} \Big(\frac{\nu d \log({1}/{\delta})}{n^2 \varepsilon^2}\Big)^{\frac{1}{2}(1 - \frac{1}{k})} \|\bm{w}_{t} - \bm{w}^{*}\|,
\end{flalign}
with a probability of at least $\frac{3}{4}$ w.r.t. the random dataset $\mathcal{Z}$. Henceforth, we will not mention this and it should be inferred directly.

\textbf{Case 1: $\|\bm{w}_{t} - \bm{w}^{*}\| \leq D$}.
\\
In this case, we simply have:
\begin{flalign}
    \label{eq:apr15-3}
    \text{(I)} &\leq - 2 {\eta}(f(\bm{w}_t) - f(\bm{w}^{*})) + {16 \eta G} \Big(\frac{\nu d \log({1}/{\delta})}{n^2 \varepsilon^2}\Big)^{\frac{1}{2}(1 - \frac{1}{k})} D.
\end{flalign}
\textbf{Case 2: $\|\bm{w}_{t} - \bm{w}^{*}\| > D$}.
\\
In this case, we have:
\begin{flalign}
    \label{eq:apr15-4}
    \text{(I)} &\leq - {\eta}(f(\bm{w}_t) - f(\bm{w}^{*})) - {\eta}\underbrace{\Big\{(f(\bm{w}_t) - f(\bm{w}^{*})) - {16 G} \Big(\frac{\nu d \log({1}/{\delta})}{n^2 \varepsilon^2}\Big)^{\frac{1}{2}(1 - \frac{1}{k})} \|\bm{w}_{t} - \bm{w}^{*}\|\Big\}}_{\geq 0 \text{ using \Cref{asmp1-may4}}} 
    \\
    & \leq - {\eta}(f(\bm{w}_t) - f(\bm{w}^{*})).
\end{flalign}
Combining equations (\ref{eq:apr15-3}) and (\ref{eq:apr15-4}), we have:
\begin{flalign}
    \label{eq:apr15-5}
    \text{(I)} &\leq - {\eta}(f(\bm{w}_t) - f(\bm{w}^{*})) + {16 \eta G} \Big(\frac{\nu d \log({1}/{\delta})}{n^2 \varepsilon^2}\Big)^{\frac{1}{2}(1 - \frac{1}{k})} D.
\end{flalign}
Now plugging \cref{eq:apr15-5} in \cref{eq:apr15-2}, we get:
\begin{equation}
    \label{eq:apr15-6}
    \mathbb{E}_t[\|\bm{w}_{t+1} - \bm{w}^{*}\|^2] \leq \|\bm{w}_{t} - \bm{w}^{*}\|^2 - {\eta}(f(\bm{w}_t) - f(\bm{w}^{*})) + {16 \eta G} \Big(\frac{\nu d \log({1}/{\delta})}{n^2 \varepsilon^2}\Big)^{\frac{1}{2}(1 - \frac{1}{k})} D  + 2 \eta^2 \tau^2 \Big(1 + \frac{\nu d T \log({1}/{\delta})}{n^2 \varepsilon^2} \Big).
\end{equation}
Next, summing the above for $t = 0$ through to $T-1$ after taking expectation throughout, rearranging a bit and then dividing by $\eta T$ throughout, we get the following:
\begin{equation}
    \frac{1}{T}\sum_{t=0}^{T-1}\Big(\mathbb{E}[f(\bm{w}_t)] - f(\bm{w}^{*})\Big) \leq \frac{\|\bm{w}_0 - \bm{w}^{*}\|^2}{\eta T} + {2 \eta T \tau^2} \Bigg(\frac{\nu d \log({1}/{\delta})}{n^2 \varepsilon^2} + \frac{1}{T}\Bigg) + {16 G} \Big(\frac{\nu d \log({1}/{\delta})}{n^2 \varepsilon^2}\Big)^{\frac{1}{2}(1 - \frac{1}{k})} D.
\end{equation}
Plugging in our choice of $\eta = \frac{C}{T \tau \sqrt{\frac{1}{T} + \frac{\nu d \log({1}/{\delta})}{n^2 \varepsilon^2}}}$, where $C > 0$ is some constant of our choice, and $\tau = {G}\Big(\frac{1}{T} + \frac{\nu d \log({1}/{\delta})}{n^2 \varepsilon^2}\Big)^{-\frac{1}{2k}}$, we get:
\begin{multline}
    \frac{1}{T}\sum_{t=0}^{T-1}\Big(\mathbb{E}[f(\bm{w}_t)] - f(\bm{w}^{*})\Big) \leq \Big(\frac{\|\bm{w}_0 - \bm{w}^{*}\|^2}{C} + 2 C \Big) G \Big({\frac{1}{T} + \frac{\nu d \log({1}/{\delta})}{n^2 \varepsilon^2}}\Big)^{\frac{1}{2}(1 - \frac{1}{k})}
    \\
    + {16 G} \Big(\frac{\nu d \log({1}/{\delta})}{n^2 \varepsilon^2}\Big)^{\frac{1}{2}(1 - \frac{1}{k})} D.
\end{multline}
Lastly, plugging in $\varphi = \frac{\sqrt{\nu d  \log({1}/{\delta})}}{n\varepsilon}$, noting that $\mathbb{E}[f(\bm{w}_{\widehat{t}})] - f(\bm{w}^{*}) = \frac{1}{T}\sum_{t=0}^{T-1}\Big(\mathbb{E}[f(\bm{w}_t)] - f(\bm{w}^{*})\Big)$ and using the definition of $\text{OR}(T)$, we get the final result.
\end{proof}}

\section{Full Version and Proof of Theorem~\ref{lower_bound}}
\label{app-l}
\begin{theorem}[\textbf{Lower Bound for Unconstrained Convex Case Under \Cref{asmp1-may4}}]
\label{lower_bound_full}
Suppose $\varphi < o(1)$ {and $\delta < \exp(-\varepsilon^2)$}. There exists a convex loss function $\ell$, such that for every $(\varepsilon,\delta)$-DP algorithm $\mathcal{A}$ which tries to solve for $\bm{w}^{*} = \text{arg min}_{\bm{w} \in \mathbb{R}^d} f(\bm{w})$ where $f$ is the average loss for a dataset $\mathcal{Z}$ of $n$ samples drawn from the data distribution $\mathcal{D}$, there exists a choice of $\mathcal{D}$ such that: 
\begin{itemize}
    \item $f$ satisfies Assumptions \ref{dec23-asmp1} and \ref{asmp1-may4} (the latter up to constant terms and with a probability of at least $1 - \exp\big(-\mathcal{O}\big(\nicefrac{\sqrt{d \log(1/\delta)}}{\varepsilon}\big)\big)$ w.r.t. $\mathcal{Z}$).
    \item $\mathbb{E}_{\mathcal{Z} \sim \mathcal{D}^n, \mathcal{A}}\Big[f(\bm{w}_{\mathcal{Z}}^{(\mathcal{A})}) - f(\bm{w}^{*})\Big] \geq \Omega\big(\varphi^{1 - \frac{1}{k}}\big)$, where $\bm{w}_{\mathcal{Z}}^{(\mathcal{A})}$ is the output of algorithm $\mathcal{A}$ on the dataset $\mathcal{Z}$.
\end{itemize}
\end{theorem}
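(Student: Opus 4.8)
The plan is to prove the lower bound by reducing private optimization to a private \emph{estimation/distinguishing} problem over a carefully chosen family of data distributions, following the outline of Theorem~6.4 of \citet{kamath2021improved} but adapting it to the unconstrained ERM setting. Concretely, I would fix a single convex loss $\ell$ and a family $\{\mathcal{D}_\theta\}$ of data distributions indexed by a hidden parameter $\theta$ (a sign pattern, or a point in a packing of a ball in $\mathbb{R}^d$), chosen so that (i) the minimizer $\bm{w}^{*}(\theta)$ of $f_\theta = \mathbb{E}_{\mathcal{D}_\theta}[\ell]$ shifts by an amount tied to $\varphi^{1 - 1/k}$ as $\theta$ varies, and (ii) distinct values of $\theta$ are provably hard to tell apart for \emph{any} $(\varepsilon,\delta)$-DP algorithm given $n$ samples. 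Then any $\mathcal{A}$ that returned an $o(\varphi^{1-1/k})$-suboptimal point in expectation would implicitly identify $\theta$, contradicting the distinguishing lower bound; hence $\mathbb{E}[f(\bm{w}^{(\mathcal{A})}_{\mathcal{Z}}) - f(\bm{w}^{*})] = \Omega(\varphi^{1-1/k})$ for at least one $\mathcal{D}_\theta$.

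\textbf{The construction and verification of the assumptions.} The delicate point, and the reason the statement needs a \enquote{non-obvious} function, is that the linear loss $-w\,x$ used in constrained heavy-tailed lower bounds is unbounded below and thus useless when $\mathcal{W} = \mathbb{R}^d$. I would instead use a coordinate-separable, piecewise-linear (scaled absolute-value) loss of the form $\ell(\bm{w},\bm{x}) \propto \|\bm{x}\|\,|\langle \bm{u},\bm{w}\rangle - c(\bm{x})|$, so that each per-sample gradient has norm $\Theta(\|\bm{x}\|)$. Taking $\bm{x}$ heavy-tailed under $\mathcal{D}_\theta$ --- with small probability $p$ a sample has large magnitude $M$ satisfying $p\,M^{k} \le G^{k}$ --- makes \Cref{dec23-asmp1} hold with a constant $G$, while the \emph{signal} about $\theta$ is carried precisely by those rare, high-sensitivity samples (the empirical minimizer is essentially a weighted median that is sensitive to a few large points, which is exactly what is costly to release privately). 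This loss generalizes the worked example of \Cref{new-asmp-eg} to heavy-tailed scales: $f_\theta$ is convex, bounded below, and grows linearly with slope $\Theta(\varphi^{1-1/k}G)$ outside a ball of radius $D = \mathcal{O}(1)$ around $\bm{w}^{*}(\theta)$, so \Cref{asmp1-may4} holds up to constants with high probability over $\mathcal{Z}$ (matching the caveat in \Cref{lower_bound_full}). I would verify convexity, the exact moment bound, and the sharpness growth by direct computation as in \Cref{new-asmp-eg}.

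\textbf{The $(\varepsilon,\delta)$-DP distinguishing tool and the ERM subtlety.} The crux is the distinguishing lower bound itself. \citet{kamath2021improved} work with $(0,\rho)$-zCDP, which admits a clean transport/composition bound on how far the algorithm's output law can move across the packing; for $(\varepsilon,\delta)$-DP I would have to re-derive the analogue. Via group privacy, two distributions at Hamming distance $m$ are $(m\varepsilon,\, \mathcal{O}(e^{m\varepsilon})\delta)$-indistinguishable, and I would bound the hockey-stick (or total-variation) divergence of the two output laws by a quantity that remains $o(1)$ only when the accumulated $\delta$-slack over the relevant Hamming radius is negligible --- this is exactly where the hypothesis $\delta < \exp(-\varepsilon^2)$ enters, ensuring the $(\varepsilon,\delta)$-DP argument recovers the strength of the zCDP one. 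Separately, because we are in ERM (bounding the \emph{empirical} gap $\mathbb{E}[f(\bm{w}) - f(\bm{w}^{*})]$) rather than SCO, I would control the deviation between the empirical minimizer of $\mathcal{Z}$ and the population minimizer $\bm{w}^{*}(\theta)$ by concentration of the sampled scales and locations, so that the distinguishing-based bound on population error transfers to the training error; this concentration step is the extra work beyond \citet{kamath2021improved}.

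\textbf{Main obstacle.} The hardest part will be the re-derivation of the $(\varepsilon,\delta)$-DP distinguishing inequality with a tight enough dependence to yield $\Omega(\varphi^{1-1/k})$ rather than a weaker exponent, done simultaneously with engineering the unconstrained sharp loss so that the heavy tail genuinely governs the error. Making one instance satisfy \emph{all} of: convexity, the exact $k$-th moment bound with constant $G$, a sharpness slope matching the $16\varphi^{1-1/k}G$ threshold of \Cref{asmp1-may4}, and a $\Theta(\varphi^{1-1/k})$ separation between optima that survives the ERM-to-population transfer, is a tight balancing act in which getting the constants to line up consumes most of the effort.
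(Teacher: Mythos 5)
Your architecture matches the paper's: a packing of directions, heavy-tailed two-point distributions $Q_{\bm{v}}$ with a rare large sample of magnitude $p^{-1/k}$ occurring with probability $p = \Theta(\sqrt{d\log(1/\delta)}/n\varepsilon)$ so that \Cref{dec23-asmp1} holds with constant $G$, a DP Fano-type distinguishing inequality, and an ERM-to-parameter-estimation reduction. The two places where you instantiate differently are exactly the two places the paper calls \enquote{non-obvious}. First, the loss: you propose a weighted-median loss $\|\bm{x}\|\,|\langle\bm{u},\bm{w}\rangle - c(\bm{x})|$, whereas the paper uses $\ell(\bm{w},\bm{x}) = -\langle\bm{w},\bm{x}\rangle + 2\|\bm{x}\|\max(\|\bm{w}\|-1,0)$, i.e., the linear loss soft-constrained to the unit ball. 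The paper's choice buys something you would have to work for: since every sample is a scalar multiple of $\bm{v}$, the \emph{empirical} minimizer is exactly $\bm{v}/\|\bm{v}\|$ whenever $\overline{\bm{x}}\neq 0$, i.e., it is data-independent and coincides with the population minimizer, so the only dataset-dependent quantity in the suboptimality lower bound is the scale $\|\overline{\bm{x}}\|$, which is handled by a single Chernoff event. Under your median loss the empirical minimizer genuinely moves with the sampled locations $c(\bm{x}_i)$, so your \enquote{concentration of scales and locations} step must control the drift between the empirical and population minimizers \emph{and} decouple it from $\bm{w}^{(\mathcal{A})}_{\mathcal{Z}}$ inside the expectation; this is the ERM difficulty the paper's footnote warns about, and your construction makes it harder rather than easier. (Also, as stated your loss depends on $\bm{w}$ only through $\langle\bm{u},\bm{w}\rangle$, so \Cref{asmp1-may4} fails in directions orthogonal to $\bm{u}$; the coordinate-separable fix you allude to is needed.) Second, the DP tool: you propose a direct group-privacy/hockey-stick accounting for $(\varepsilon,\delta)$-DP, whereas the paper converts $(\varepsilon,\delta)$-DP to $(\xi,\rho)$-zCDP via Lemma 22 of \citet{bun2016concentrated} (this is precisely where $\delta < e^{-\varepsilon^2}$ enters, to ensure $\hat\rho = \varepsilon^2/\log(1/\delta) < 1$) and then extends Kamath et al.'s zCDP Fano inequality to nonzero $\xi$ by bounding $\mathbb{E}_{z\sim\mathrm{Bin}(n,\alpha)}[z\log(ez)]$. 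Your route is plausible but would produce a condition on $\delta$ of a different form (depending on $n$ and the Hamming radius), not the clean $\delta < e^{-\varepsilon^2}$ in the statement. Neither difference is a fatal gap, but both mean your sketch defers real work that the paper's specific construction is engineered to avoid.
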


As mentioned in the main text, even though we follow the proof outline of Theorem 6.4 of \citet{kamath2021improved}, our proof is more involved. First, we had to use a \textit{non-obvious} loss function $\ell$ (to obtain the lower bound on) for the \textit{unconstrained} case. Second, since we are in the ERM setting, we have to lower bound the expected training error which is harder than lower bounding the expected generalization error in the SCO setting of \citet{kamath2021improved}; see the footnote in the line after \cref{eq:footnote} (i.e., footnote \ref{ft_note}) for details. {Finally, note that \citet{kamath2021improved} derive a bound for $(0,\rho)$-zCDP while our bound is for $(\varepsilon,\delta)$-DP. To our knowledge, $(\varepsilon,\delta)$-DP does not imply $(0,\tilde{\rho})$-zCDP for some $\tilde{\rho}$ due to which we had to re-derive an important tool in their analysis (specifically, Theorem 1.4 of \citet{kamath2021improved}); see \Cref{zcd-fano}. That is also the reason we had to impose the constraint of $\delta < \exp(-\varepsilon^2)$.}

\begin{proof}
We shall borrow some ideas from \citet{acharya2021differentially} and \citet{kamath2021improved}. 

Let $\mathcal{V}$ be a set of $d$-dimensional points satisfying:
\begin{itemize}
    \item For all $\bm{v} \in \mathcal{V}$, $\bm{v} \in \{0, 1\}^d$ and the number of 1's in $\bm{v}$ is $\frac{d}{2}$.
    \item For all $\bm{v}, \bm{v}' \in \mathcal{V}$, $d_{\text{Ham}}(\bm{v}, \bm{v}') \geq \frac{d}{8}$.
\end{itemize}
By Lemma 6 of \citet{acharya2021differentially}, there must exist such a $\mathcal{V}$ with cardinality at least $2^{\frac{7d}{128}}$. Note that $\|\bm{v}\| = \sqrt{d/2}$ for all $\bm{v} \in \mathcal{V}$.

Next, similar to \citet{kamath2021improved}, let $Q_{\bm{v}}$ be a distribution whose support includes $\vec{0}_d$ and $p^{-1/k} \bm{v}$ for {$p = \frac{2 \sqrt{d \log(1/\delta)}}{n \varepsilon} < \frac{1}{2}$}, such that $\mathbb{P}_{\bm{x} \sim Q_{\bm{v}}}(\bm{x} = \vec{0}_d) = 1-p$ and $\mathbb{P}_{\bm{x} \sim Q_{\bm{v}}}(\bm{x} = p^{-1/k} \bm{v}) = p$. Note that:
\begin{equation}
    \label{eq:86}
    \mathbb{E}_{\bm{x} \sim Q_{\bm{v}}}[\bm{x}] = p^{1 - \frac{1}{k}}\bm{v} \text{ and } \mathbb{E}_{\bm{x} \sim Q_{\bm{v}}}[\|\bm{x}\|^k] = \|\bm{v}\|^k.
\end{equation}
Now, let $\ell:\mathbb{R}^{d} \times \mathbb{R}^{d} \xrightarrow{} \mathbb{R}$ be defined as:
\begin{equation}
    \ell(\bm{w}, \bm{x}) = - \langle \bm{w}, \bm{x} \rangle + 2 \|\bm{x}\| \max\big(\|\bm{w}\| - 1, 0\big).
\end{equation}
We have a dataset $\mathcal{Z}_{\bm{v}}$ of $n$ i.i.d. samples $\{\bm{x}_i\}_{i=1}^n$ drawn from $Q_{\bm{v}}$. Then, we denote the $i^{\text{th}}$ sample's loss in $\mathcal{Z}_{\bm{v}}$ by:
\begin{equation}
    f_{\bm{v},i}(\bm{w}) = \ell(\bm{w}, \bm{x}_i) = - \langle \bm{w}, \bm{x}_i \rangle + 2 \|\bm{x}_i\| \max\big(\|\bm{w}\| - 1, 0\big).
\end{equation}
Here, the subscript $\bm{v}$ denotes that $\mathcal{Z}_{\bm{v}}$ is drawn from $Q_{\bm{v}}$. Next, the average loss is:
\begin{equation}
    \label{eq:88}
    f_{\bm{v}}(\bm{w}) = \frac{1}{n}\sum_{i=1}^n f_{\bm{v},i}(\bm{w}) = - \langle \bm{w}, \overline{\bm{x}} \rangle + 2 \|\overline{\bm{x}}\| \max\big(\|\bm{w}\| - 1, 0\big),
\end{equation}
where $\overline{\bm{x}} = \frac{1}{n} \sum_{i=1}^n \bm{x}_i$. \Cref{eq:88} follows because each $\bm{x}_i = c_i \bm{v}$, where $c_i = 0$ with a probability of $1-p$ and $c_i = p^{-1/k}$ otherwise, due to which $\frac{1}{n}\sum_{i=1}^n \|\bm{x}_i\| = \Big\|\frac{1}{n}\sum_{i=1}^n \bm{x}_i\Big\| = \|\overline{\bm{x}}\|$.

Note that $q(\bm{w}) = \max\big(\|\bm{w}\| - 1, 0\big)$ is convex. This is because both $q_1(\bm{w}) = \|\bm{w}\| - 1$ and $q_2(\bm{w}) = 0$ are convex and the point-wise maximum of  convex functions is also convex. Thus, $f_{\bm{v}}(\bm{w})$ is convex. 

Now, we shall show that \Cref{dec23-asmp1} holds. It can be checked that $\|\nabla \ell(\bm{w}, \bm{x})\| \leq 3 \|\bm{x}\|$ for all $\bm{w} \in \mathbb{R}^d$. Thus,
\begin{equation}
    \label{eq:90}
    \mathbb{E}_{\bm{x} \sim Q_{\bm{v}}}[\|\nabla \ell(\bm{w}, \bm{x})\|^k] \leq 3^k \mathbb{E}_{\bm{x} \sim Q_{\bm{v}}}[\|\bm{x}\|^k] = (3 \|\bm{v}\|)^k,
\end{equation}
where the last step follows from \cref{eq:86}. Thus, \Cref{dec23-asmp1} holds here with $G = 3 \|\bm{v}\| = 3\sqrt{\frac{d}{2}}$.

Let us first obtain $\text{arg min}_{\bm{w} \in \mathbb{R}^d} f_{\bm{v}}(\bm{w})$. Note that if $\|\overline{\bm{x}}\| = 0$ (which happens when all the $\bm{x}_i$'s are $\vec{0}_d$), $f_{\bm{v}}$ is identically 0 and so any point is a minimizer. So, let us focus on the case of $\|\overline{\bm{x}}\| > 0$; we claim that $\text{arg min}_{\bm{w} \in \mathbb{R}^d} f_{\bm{v}}(\bm{w}) = \hat{\bm{x}} := {\overline{\bm{x}}}/{\|\overline{\bm{x}}\|}$. Clearly, $\hat{\bm{x}} = \text{arg min}_{\bm{w}:\|\bm{w}\| \leq 1} f_{\bm{v}}(\bm{w})$. Also, for any $\bm{w}$ such that $\|\bm{w}\| > 1$, we have:
\begin{flalign}
    \label{eq:91}
   f_{\bm{v}}(\bm{w}) - f_{\bm{v}}(\hat{\bm{x}}) &= - \langle \bm{w}, \overline{\bm{x}} \rangle + 2 \|\overline{\bm{x}}\| \big(\|\bm{w}\| - 1\big) + \|\overline{\bm{x}}\|
   \\
   &= \Big(-\langle \bm{w}, \overline{\bm{x}} \rangle + \|\overline{\bm{x}}\| \|\bm{w}\|\Big) + \|\overline{\bm{x}}\| \|\bm{w}\| - \|\overline{\bm{x}}\|
   \\
   \label{eq:93-old}
   & \geq \|\overline{\bm{x}}\| (\|\bm{w}\| - 1)
   \\
   \label{eq:94}
   & > 0.
\end{flalign}
\Cref{eq:93} follows from the Cauchy-Schwarz inequality while \cref{eq:94} follows because $\|\bm{w}\| > 1$. Thus, $f_{\bm{v}}(\bm{w}) > f_{\bm{v}}(\hat{\bm{x}})$ for all $\bm{w}$ such that $\|\bm{w}\| > 1$. Hence, we must have $\text{arg min}_{\bm{w} \in \mathbb{R}^d} f_{\bm{v}}(\bm{w}) = \hat{\bm{x}}$. Also, note that since each $\bm{x}_i = c_i \bm{v}$ where $c_i \in \{0,p^{-1/k}\}$, $\overline{\bm{x}} = \Big(\frac{1}{n}\sum_{i=1}^n c_i \Big) \bm{v}$ with $\frac{1}{n}\sum_{i=1}^n c_i > 0$ and so,
\begin{equation}
    \label{eq:95-new2}
    \hat{\bm{x}} = \bm{v}/\|\bm{v}\| := \bm{w}_{\bm{v}}^{*},
\end{equation}
which is independent of the dataset $\mathcal{Z}_{\bm{v}}$ that we receive; in fact, $\bm{w}_{\bm{v}}^{*} = \text{arg min}_{\bm{w} \in \mathbb{R}^d} \mathbb{E}_{\bm{x} \sim Q_{\bm{v}}}[\ell(\bm{w}, \bm{x})]$. We shall be using these facts later.

Rewriting \cref{eq:93-old} by replacing $\hat{\bm{x}}$ with $\bm{w}_{\bm{v}}^{*}$ in the LHS, we get:
\begin{equation}
    \label{eq:93}
    f_{\bm{v}}(\bm{w}) - f_{\bm{v}}(\bm{w}_{\bm{v}}^{*}) \geq \|\overline{\bm{x}}\| (\|\bm{w}\| - 1) \text{ } \forall \text{ } \bm{w} \in \mathbb{R}^d.
\end{equation}
Also note that \cref{eq:93} holds trivially for the case of $\|\overline{\bm{x}}\| = 0$. 

Finally, let us show that \Cref{asmp1-may4} also holds up to constant factors {with high-probability} over the dataset $\mathcal{Z}_{\bm{v}}$. Specifically, we shall restrict our attention to $\|\overline{\bm{x}}\| \geq \frac{p^{1 - \frac{1}{k}}}{2} \|\bm{v}\|$. Since $\mathbb{E}[\|\overline{\bm{x}}\|] = p^{1 - \frac{1}{k}} \|\bm{v}\|$, using the Chernoff bound for Binomial random variables\footnote{Specifically, for $n$ i.i.d. binomial ($\in \{0,1\}$) random variables $\{Z_i\}_{i=1}^n$ with $\mathbb{P}(Z_i = 1) = \mu$ $\forall$ $i \in [n]$, we use $\mathbb{P}\Big(\frac{1}{n}\sum_{i=1}^n Z_i \leq {\mu}(1-\beta)\Big) \leq \exp(-\frac{n \mu \beta^2}{3})$ for $\beta \in (0,1)$.}, we have $\|\overline{\bm{x}}\| \geq \frac{p^{1 - \frac{1}{k}}}{2} \|\bm{v}\|$ with a probability of at least
\begin{equation}
    \label{eq:98}
    1 - \exp(-\mathcal{O}(n p)) = 1 - \exp\Big(-\mathcal{O}\Big(\frac{\sqrt{d \log(1/\delta)}}{\varepsilon}\Big)\Big), 
\end{equation}
where the last step follows by plugging in $p = \frac{2 \sqrt{d \log(1/\delta)}}{n \varepsilon}$.

Now, let us consider all $\bm{w}$ such that $\|\bm{w} - \bm{w}_{\bm{v}}^{*}\| \geq 4$ -- in this case, since $\|\bm{w}_{\bm{v}}^{*}\| = 1$, we must have $\|\bm{w}\| \geq 3$. Further 
\begin{flalign}
    \label{eq:95}
    \|\bm{w} - \bm{w}_{\bm{v}}^{*}\| & \leq \|\bm{w}\| + 1 
    \\
    \label{eq:96}
    & \leq 2 (\|\bm{w}\| - 1),
\end{flalign}
where the last step holds for $\|\bm{w}\| \geq 3$. \Cref{eq:95} follows by using the triangle inequality and the fact that $\|\bm{w}_{\bm{v}}^{*}\| = 1$. Using \cref{eq:96} in \cref{eq:93}, we get for $\|\bm{w} - \bm{w}_{\bm{v}}^{*}\| \geq 4$:
\begin{equation}
    \label{eq:100-asmp}
    f_{\bm{v}}(\bm{w}) - f_{\bm{v}}(\bm{w}_{\bm{v}}^{*}) \geq \Omega\Big(\|\overline{\bm{x}}\| \|\bm{w} - \bm{w}_{\bm{v}}^{*}\|\Big).
\end{equation}
Recall that we are restricting our attention to $\|\overline{\bm{x}}\| \geq \frac{p^{1 - \frac{1}{k}}}{2} \|\bm{v}\|$ which occurs with probability $\geq 1 - \exp\Big(-\mathcal{O}\Big(\frac{\sqrt{d \log(1/\delta)}}{\varepsilon}\Big)\Big)$ (w.r.t. the random dataset $\mathcal{Z}_{\bm{v}}$). 
Using this as well as the fact that $\|\bm{w} - \bm{w}_{\bm{v}}^{*}\|$ is independent of $\mathcal{Z}_{\bm{v}}$ in \cref{eq:100-asmp}, and then plugging in the value of $p = \frac{2 \sqrt{d \log(1/\delta)}}{n \varepsilon}$ and $G = 3 \|v\|$ (as defined after \cref{eq:90}), we get:
\begin{equation}
    f_{\bm{v}}(\bm{w}) - f_{\bm{v}}(\bm{w}_{\bm{v}}^{*}) \geq \Omega\Big( p^{1 - \frac{1}{k}} {\|\bm{v}\|} \|\bm{w} - \bm{w}_{\bm{v}}^{*}\|\Big) = \Omega\Big(\big(\nicefrac{\sqrt{d \log(1/\delta)}}{n \varepsilon}\big)^{1 - \frac{1}{k}} G \|\bm{w} - \bm{w}_{\bm{v}}^{*}\|\Big),
\end{equation}
with probability $\geq 1 - \exp\Big(-\mathcal{O}\Big(\frac{\sqrt{d \log(1/\delta)}}{\varepsilon}\Big)\Big)$. Hence, \Cref{asmp1-may4} holds up to constant factors {with high-probability} over the dataset $\mathcal{Z}_{\bm{v}}$.

Now that we have shown that $f_{\bm{v}}$ is convex and satisfies Assumptions \ref{dec23-asmp1} and \ref{asmp1-may4}, let us move onto the lower bound. Let us denote the output of the $(\varepsilon, \delta)$-DP algorithm $\mathcal{A}$ (on the dataset $\mathcal{Z}_{\bm{v}}$) by $\bm{w}_{\bm{v}}^{(\mathcal{A})}$\footnote{A better choice of notation would have been to put $\mathcal{Z}_{\bm{v}}$ instead of just $\bm{v}$ in the subscript; however, we do not do so to avoid overloading notation going further.}. We shall consider two cases.

\textbf{Case 1, $\|\bm{w}_{\bm{v}}^{(\mathcal{A})}\| \geq 3$:} Using \cref{eq:93}, we get
\begin{flalign}
    \label{eq:101-new}
    f_{\bm{v}}(\bm{w}_{\bm{v}}^{(\mathcal{A})}) - f_{\bm{v}}(\bm{w}_{\bm{v}}^{*}) & \geq 2 \|\overline{\bm{x}}\|.
\end{flalign}

\textbf{Case 2, $\|\bm{w}_{\bm{v}}^{(\mathcal{A})}\| < 3$:} 
First, let us consider $\|\bm{w}_{\bm{v}}^{(\mathcal{A})}\| \in (1,3)$. Rewriting \cref{eq:91} for $\|\overline{\bm{x}}\| > 0$ by replacing $\hat{\bm{x}}$ with $\bm{w}_{\bm{v}}^{*}$ in the LHS, we get:
\begin{equation}
    \label{eq:91-new}
    f_{\bm{v}}(\bm{w}) - f_{\bm{v}}(\bm{w}_{\bm{v}}^{*}) \geq - \langle \bm{w}, \overline{\bm{x}} \rangle + 2 \|\overline{\bm{x}}\| \big(\|\bm{w}\| - 1\big) + \|\overline{\bm{x}}\| \text{ } \forall \text{ } \bm{w} \in \mathbb{R}^d.
\end{equation}
Also note that \cref{eq:91-new} holds trivially for the case of $\|\overline{\bm{x}}\| = 0$. Using \cref{eq:91-new}, we get
\begin{flalign}
    f_{\bm{v}}(\bm{w}_{\bm{v}}^{(\mathcal{A})}) - f_{\bm{v}}(\bm{w}_{\bm{v}}^{*}) & \geq - \langle \bm{w}_{\bm{v}}^{(\mathcal{A})}, \overline{\bm{x}} \rangle + 2 \|\overline{\bm{x}}\| \|\bm{w}_{\bm{v}}^{(\mathcal{A})}\| - \|\overline{\bm{x}}\|
    \\
    & = \|\overline{\bm{x}}\| \Bigg(- \Big\langle \bm{w}_{\bm{v}}^{(\mathcal{A})}, \underbrace{\frac{\overline{\bm{x}}}{\|\overline{\bm{x}}\|}}_{\bm{w}_{\bm{v}}^{*}}\Big\rangle + 2\|\bm{w}_{\bm{v}}^{(\mathcal{A})}\| - 1\Bigg)
    \\
    & = \frac{\|\overline{\bm{x}}\|}{2} \Big({1} -2 \langle \bm{w}_{\bm{v}}^{(\mathcal{A})}, \bm{w}_{\bm{v}}^{*}\rangle + {4\|\bm{w}_{\bm{v}}^{(\mathcal{A})}\| - 3}\Big).
\end{flalign}
Next, using the fact that $\|\bm{w}_{\bm{v}}^{*}\|^2 = 1$ and $4\|\bm{w}_{\bm{v}}^{(\mathcal{A})}\| - 3 \geq \|\bm{w}_{\bm{v}}^{(\mathcal{A})}\|^2$ for $\|\bm{w}_{\bm{v}}^{(\mathcal{A})}\| \in (1,3)$ above, we get:
\begin{flalign}
   f_{\bm{v}}(\bm{w}_{\bm{v}}^{(\mathcal{A})}) - f_{\bm{v}}(\bm{w}_{\bm{v}}^{*}) \geq \frac{\|\overline{\bm{x}}\|}{2} \Big(\|\bm{w}_{\bm{v}}^{*}\|^2 - 2 \langle \bm{w}_{\bm{v}}^{(\mathcal{A})}, \bm{w}_{\bm{v}}^{*}\rangle + \|\bm{w}_{\bm{v}}^{(\mathcal{A})}\|^2\Big) = \frac{\|\overline{\bm{x}}\|}{2} \|\bm{w}_{\bm{v}}^{(\mathcal{A})} - \bm{w}_{\bm{v}}^{*}\|^2.
\end{flalign}
Let us now consider $\|\bm{w}_{\bm{v}}^{(\mathcal{A})}\| \leq 1$. In this case:
\begin{flalign}
    f_{\bm{v}}(\bm{w}_{\bm{v}}^{(\mathcal{A})}) - f_{\bm{v}}(\bm{w}_{\bm{v}}^{*}) & \geq - \langle \bm{w}_{\bm{v}}^{(\mathcal{A})}, \overline{\bm{x}} \rangle + \|\overline{\bm{x}}\|
    \\
    & = \|\overline{\bm{x}}\| \Bigg(- \Big\langle \bm{w}_{\bm{v}}^{(\mathcal{A})}, \underbrace{\frac{\overline{\bm{x}}}{\|\overline{\bm{x}}\|}}_{\bm{w}_{\bm{v}}^{*}}\Big\rangle + 1\Bigg)
    \\
    & = \frac{\|\overline{\bm{x}}\|}{2} \Big(1 + 1 - 2 \langle \bm{w}_{\bm{v}}^{(\mathcal{A})}, \bm{w}_{\bm{v}}^{*}\rangle\Big)
    \\
    \label{eq:108}
    & \geq \frac{\|\overline{\bm{x}}\|}{2} \Big(\|\bm{w}_{\bm{v}}^{(\mathcal{A})}\|^2 + \|\bm{w}_{\bm{v}}^{*}\|^2 - 2 \langle \bm{w}_{\bm{v}}^{(\mathcal{A})}, \bm{w}_{\bm{v}}^{*}\rangle\Big)
    \\
    & = \frac{\|\overline{\bm{x}}\|}{2} \|\bm{w}_{\bm{v}}^{(\mathcal{A})} - \bm{w}_{\bm{v}}^{*}\|^2.
\end{flalign}
\Cref{eq:108} follows because $\|\bm{w}_{\bm{v}}^{*}\|=1$ and $\|\bm{w}_{\bm{v}}^{(\mathcal{A})}\| \leq 1$. 

So for both $\|\bm{w}_{\bm{v}}^{(\mathcal{A})}\| \in (1,3)$ and $\|\bm{w}_{\bm{v}}^{(\mathcal{A})}\| \leq 1$, we have:
\begin{equation}
    \label{eq:111-new}
    f_{\bm{v}}(\bm{w}_{\bm{v}}^{(\mathcal{A})}) - f_{\bm{v}}(\bm{w}_{\bm{v}}^{*}) \geq \frac{\|\overline{\bm{x}}\|}{2} \|\bm{w}_{\bm{v}}^{(\mathcal{A})} - \bm{w}_{\bm{v}}^{*}\|^2.
\end{equation}
Combining the results of both cases, i.e., \cref{eq:101-new} and \cref{eq:111-new}, we get:
\begin{equation}
    \label{eq:footnote}
    f_{\bm{v}}(\bm{w}_{\bm{v}}^{(\mathcal{A})}) - f_{\bm{v}}(\bm{w}_{\bm{v}}^{*}) \geq
    2 \|\overline{\bm{x}}\| \min\Big(1, \frac{1}{4}\|\bm{w}_{\bm{v}}^{(\mathcal{A})} - \bm{w}_{\bm{v}}^{*}\|^2\Big).
\end{equation}
Now taking expectation w.r.t. ${Q}_{\bm{v}}$ and $\mathcal{A}$ (henceforth, we shall omit these in the subscript of expectation for brevity), we get\footnote{\label{ft_note}Lower bounding $\mathbb{E}\big[f_{\bm{v}}(\bm{w}_{\bm{v}}^{(\mathcal{A})}) - f_{\bm{v}}(\bm{w}_{\bm{v}}^{*})\big]$, i.e. the training error, is harder than lower bounding $\mathbb{E}\big[\ell(\bm{w}_{\bm{v}}^{(\mathcal{A})}) - \ell(\bm{w}_{\bm{v}}^{*})\big]$, i.e. the generalization error. This is because the lower bound for the training error includes $\|\overline{\bm{x}}\|$ and $\bm{w}_{\bm{v}}^{(\mathcal{A})}$, both of which depend on the dataset $\mathcal{Z}$, making the expectation of the lower bound challenging to compute. In contrast, in the lower bound for the generalization error, $\|\overline{\bm{x}}\|$ is replaced by $\|\mathbb{E}_{\bm{x} \sim Q_{\bm{v}}}[\bm{x}]\|$ and so, $\bm{w}_{\bm{v}}^{(\mathcal{A})}$ is the only quantity depending on $\mathcal{Z}$, making the expectation of the corresponding lower bound much simpler to compute.}:
\begin{flalign}
    \mathbb{E}\Big[f_{\bm{v}}(\bm{w}_{\bm{v}}^{(\mathcal{A})}) - f_{\bm{v}}(\bm{w}_{\bm{v}}^{*})\Big] & \geq
    2 \mathbb{E}\Big[\|\overline{\bm{x}}\| \min\Big(1, \frac{1}{4}\|\bm{w}_{\bm{v}}^{(\mathcal{A})} - \bm{w}_{\bm{v}}^{*}\|^2\Big)\Big]
    \\
    & \geq 2 \mathbb{E}\Bigg[\|\overline{\bm{x}}\| \min\Big(1, \frac{1}{4}\|\bm{w}_{\bm{v}}^{(\mathcal{A})} - \bm{w}_{\bm{v}}^{*}\|^2\Big) \Big| \|\overline{\bm{x}}\| \geq \frac{p^{1 - \frac{1}{k}}}{2} \|\bm{v}\| \Bigg] \underbrace{\mathbb{P}\Big(\|\overline{\bm{x}}\| \geq \frac{p^{1 - \frac{1}{k}}}{2} \|\bm{v}\| \Big)}_{= \Theta(1)}
    \\
    \label{eq:115-new}
    & \geq \Omega\big({p^{1 - \frac{1}{k}}} \|\bm{v}\|\big) \mathbb{E}\Bigg[\min\Big(1, \frac{1}{4}\|\bm{w}_{\bm{v}}^{(\mathcal{A})} - \bm{w}_{\bm{v}}^{*}\|^2\Big) \Big| \|\overline{\bm{x}}\| \geq \frac{p^{1 - \frac{1}{k}}}{2} \|\bm{v}\|\Bigg].
\end{flalign}
To obtain \cref{eq:115-new}, we have used the fact that $\mathbb{P}\Big(\|\overline{\bm{x}}\| \geq \frac{p^{1 - \frac{1}{k}}}{2} \|\bm{v}\| \Big) = \Theta(1)$ from \cref{eq:98}. Next, letting $M_{\bm{v}, \mathcal{A}} = \min\Big(1, \frac{1}{4}\|\bm{w}_{\bm{v}}^{(\mathcal{A})} - \bm{w}_{\bm{v}}^{*}\|^2\Big)$, we have:
\begin{flalign}
   \mathbb{E}[M_{\bm{v}, \mathcal{A}}] = \mathbb{E}\Bigg[M_{\bm{v}, \mathcal{A}} \Big| \|\overline{\bm{x}}\| \geq \frac{p^{1 - \frac{1}{k}}}{2} \|\bm{v}\|\Bigg] \mathbb{P}\Big(\|\overline{\bm{x}}\| \geq \frac{p^{1 - \frac{1}{k}}}{2} \|\bm{v}\|\Big) + \mathbb{E}\Bigg[M_{\bm{v}, \mathcal{A}} \Big| \|\overline{\bm{x}}\| < \frac{p^{1 - \frac{1}{k}}}{2} \|\bm{v}\|\Bigg] {\mathbb{P}\Big(\|\overline{\bm{x}}\| < \frac{p^{1 - \frac{1}{k}}}{2} \|\bm{v}\|\Big)}.
\end{flalign}
Now note that $\mathbb{E}\Bigg[M_{\bm{v}, \mathcal{A}} \Big| \|\overline{\bm{x}}\| < \frac{p^{1 - \frac{1}{k}}}{2} \|\bm{v}\|\Bigg] \leq 1$ (as $M_{\bm{v}, \mathcal{A}} \leq 1$ by definition) and from \cref{eq:98}, $\mathbb{P}\Big(\|\overline{\bm{x}}\| < \frac{p^{1 - \frac{1}{k}}}{2} \|\bm{v}\|\Big) \leq \exp\Big(-\mathcal{O}\Big(\frac{\sqrt{d \log(1/\delta)}}{\varepsilon}\Big)\Big)$. Also, from \Cref{lem-lb}, there exists a $\bm{v}$, say $\hat{\bm{v}}$, for which $\mathbb{E}[M_{\hat{\bm{v}}, \mathcal{A}}] = \Omega(1)$ for $\delta < \exp(-\varepsilon^2)$. Thus, for $\bm{v} = \hat{\bm{v}}$:
\begin{equation}
    \mathbb{E}\Bigg[M_{\hat{\bm{v}}, \mathcal{A}} \Big| \|\overline{\bm{x}}\| \geq \frac{p^{1 - \frac{1}{k}}}{2} \|\hat{\bm{v}}\|\Bigg] = \Theta \Big(\mathbb{E}[M_{\hat{\bm{v}}, \mathcal{A}}]\Big) = \Omega(1).
\end{equation}
Putting this back in \cref{eq:115-new}, we get for $\bm{v} = \hat{\bm{v}}$:
\begin{equation}
    \mathbb{E}\Big[f_{\hat{\bm{v}}}(\bm{w}_{\hat{\bm{v}}}^{(\mathcal{A})}) - f_{\hat{\bm{v}}}(\bm{w}_{\hat{\bm{v}}}^{*})\Big] \geq \Omega\Big({p^{1 - \frac{1}{k}}} \|\hat{\bm{v}}\|\Big) = \Omega\Big(\Big(\frac{\sqrt{d \log(1/\delta)}}{n \varepsilon}\Big)^{1 - \frac{1}{k}} G\Big),
\end{equation}
where the last step follows by plugging in the values of $p$ and $G$.

This finishes the proof.
\end{proof}

\begin{lemma}
\label{lem-lb}
In the setting of the proof of Theorem~\ref{lower_bound_full} and for $\delta < \exp(-\varepsilon^2)$, there exists some $\bm{\hat{v}} \in \mathcal{V}$ such that:
\begin{equation}
    \mathbb{E}_{{Q}_{\hat{\bm{v}}}, \mathcal{A}}\Bigg[\min\Big(1, \frac{1}{4}\|\bm{w}_{\hat{\bm{v}}}^{(\mathcal{A})} - \bm{w}_{\hat{\bm{v}}}^{*}\|^2\Big)\Bigg] \geq \Omega(1). 
\end{equation}
\end{lemma}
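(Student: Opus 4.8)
The plan is to run a standard packing-plus-Fano reduction, but carried out against an $(\varepsilon,\delta)$-DP mechanism rather than the zCDP mechanism of \citet{kamath2021improved}. I would reduce the quantity $M_{\bm{v},\mathcal{A}} = \min\big(1, \tfrac14\|\bm{w}_{\bm{v}}^{(\mathcal{A})} - \bm{w}_{\bm{v}}^{*}\|^2\big)$ to an identification (multiple-hypothesis-testing) problem over the packing $\mathcal{V}$. The enabling geometric fact is that the candidate minimizers $\bm{w}_{\bm{v}}^{*} = \bm{v}/\|\bm{v}\|$ are well separated: for distinct $\bm{v},\bm{v}' \in \mathcal{V}$, since $\|\bm{v}-\bm{v}'\|^2 = d_{\text{Ham}}(\bm{v},\bm{v}') \geq d/8$ and $\|\bm{v}\|^2 = \|\bm{v}'\|^2 = d/2$, we get $\|\bm{w}_{\bm{v}}^{*} - \bm{w}_{\bm{v}'}^{*}\|^2 = \|\bm{v}-\bm{v}'\|^2/\|\bm{v}\|^2 \geq 1/4$. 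Hence the nearest-neighbour decoder $\psi(\bm{w}) := \text{arg min}_{\bm{v}\in\mathcal{V}}\|\bm{w} - \bm{w}_{\bm{v}}^{*}\|$ recovers $\bm{v}$ exactly whenever $\|\bm{w}_{\bm{v}}^{(\mathcal{A})} - \bm{w}_{\bm{v}}^{*}\| < 1/4$, i.e. whenever $M_{\bm{v},\mathcal{A}} < 1/4$.

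Next I would argue by contradiction. Suppose $\max_{\bm{v}\in\mathcal{V}} \mathbb{E}[M_{\bm{v},\mathcal{A}}] = o(1)$. By Markov's inequality this forces $\mathbb{P}(M_{\bm{v},\mathcal{A}} \geq 1/4) \leq 4\,\mathbb{E}[M_{\bm{v},\mathcal{A}}] = o(1)$ for every $\bm{v}$, so the composed mechanism $\psi\circ\mathcal{A}$ — which remains $(\varepsilon,\delta)$-DP by post-processing — identifies $\bm{v}$ correctly with probability $1-o(1)$ for every $\bm{v}$, and therefore also on average when $\bm{v}$ is drawn uniformly from $\mathcal{V}$.

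The heart of the proof is then to show this is impossible via the $(\varepsilon,\delta)$-DP Fano-type bound of \Cref{zcd-fano}. The datasets $\mathcal{Z}_{\bm{v}}$ and $\mathcal{Z}_{\bm{v}'}$ admit a natural coupling: draw a shared indicator that is $1$ with probability $p$, set the sample to $\vec{0}_d$ when the indicator is $0$ (agreeing across $\bm{v},\bm{v}'$) and to $p^{-1/k}\bm{v}$ (resp. $p^{-1/k}\bm{v}'$) otherwise, so the two datasets differ in exactly $\textup{Binomial}(n,p)$ coordinates, which concentrates around $np = 2\sqrt{d\log(1/\delta)}/\varepsilon$. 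Invoking \Cref{zcd-fano} with coupling distance $D = \mathcal{O}(np)$ and packing size $\log|\mathcal{V}| = \Omega(d)$, reliable identification of $\bm{v}$ would require roughly $\varepsilon D \gtrsim \log|\mathcal{V}|$; but $\varepsilon D = \mathcal{O}(\sqrt{d\log(1/\delta)}) \ll d = \Theta(\log|\mathcal{V}|)$ in our regime, so the average identification error is $\Omega(1)$. This contradicts the previous paragraph, so $\max_{\bm{v}} \mathbb{E}[M_{\bm{v},\mathcal{A}}] = \Omega(1)$, which furnishes the desired $\hat{\bm{v}}$.

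The main obstacle is making the third step rigorous for $(\varepsilon,\delta)$-DP rather than zCDP. Unlike the zCDP setting of \citet{kamath2021improved}, here the $\delta$ term accumulated when passing from a single-sample change to the coupling distance $D$ does not vanish under naive group privacy, which is precisely why \Cref{zcd-fano} (a re-derivation of their Theorem 1.4) is needed. The hypothesis $\delta < \exp(-\varepsilon^2)$, i.e. $\log(1/\delta) > \varepsilon^2$, is exactly the condition under which the $\delta$-dependent contribution in \Cref{zcd-fano} stays dominated by the $\varepsilon$-dependent one, so that the Fano inequality still yields a constant lower bound on the testing error. Everything else — the geometric separation $\|\bm{w}_{\bm{v}}^{*}-\bm{w}_{\bm{v}'}^{*}\| \geq 1/2$, the coupling count, and the Markov reduction — is routine.
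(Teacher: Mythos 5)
Your skeleton — packing $\mathcal{V}$, separation of the candidate minimizers $\bm{w}_{\bm{v}}^{*}=\bm{v}/\|\bm{v}\|$, reduction of the loss $M_{\bm{v},\mathcal{A}}$ to an identification problem, and a Fano-type contradiction — matches the paper's route. But the one step that is actually nontrivial, namely obtaining a Fano inequality that applies to an $(\varepsilon,\delta)$-DP mechanism, is where your proposal has a genuine gap. You invoke \Cref{zcd-fano} as if it were "the $(\varepsilon,\delta)$-DP Fano-type bound" with a condition of the form $\varepsilon D \gtrsim \log|\mathcal{V}|$ for a coupling distance $D=\mathcal{O}(np)$; that is the \emph{pure}-DP packing argument, and as you yourself observe, it does not survive naive group privacy for approximate DP (the $\delta$ term inflates by $e^{k\varepsilon}$ over groups of size $k=\Theta(np)$, which is fatal here). \Cref{zcd-fano} does not resolve this the way you describe: it is stated for $(\xi,\rho)$-zCDP estimators and contains no $\delta$-dependent term at all. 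The missing idea is the conversion step the paper performs first: by Lemma 22 of \citet{bun2016concentrated}, an $(\varepsilon,\delta)$-DP mechanism is $(\xi,\rho)$-zCDP with $\rho=\varepsilon^2/(4\log(1/\delta))$ and a nonzero $\xi$, and \emph{this conversion} is where the hypothesis $\delta<\exp(-\varepsilon^2)$ is used (it guarantees $\varepsilon^2/\log(1/\delta)<1$ so the lemma applies) — not, as you claim, to make "the $\delta$-dependent contribution in \Cref{zcd-fano}" subdominant. Only after this conversion does group privacy become benign (KL grows like $\rho k^2$ plus a $\xi$-term in the Hamming distance $k\sim\mathrm{Bin}(n,p)$), which is exactly why the paper had to extend Theorem 1.4 of \citet{kamath2021improved} to nonzero $\xi$.

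Two smaller issues: with separation $\|\bm{w}_{\bm{v}}^{*}-\bm{w}_{\bm{v}'}^{*}\|\geq 1/2$, your nearest-neighbour decoder succeeds when $\|\bm{w}_{\bm{v}}^{(\mathcal{A})}-\bm{w}_{\bm{v}}^{*}\|<1/4$, i.e.\ when $M_{\bm{v},\mathcal{A}}<1/64$, not $1/4$ (harmless, but the constant in the Markov step changes). More substantively, your numerology "$\varepsilon D=\mathcal{O}(\sqrt{d\log(1/\delta)})\ll d$" is not the comparison that decides the bound and is not implied by the standing assumptions; after the zCDP conversion the relevant quantity is $\rho(np)^2=\Theta(d)$, which is of the \emph{same order} as $\log|\mathcal{V}|=\Theta(d)$, so the conclusion rests on choosing the constant in $p$ appropriately rather than on an order-of-magnitude separation.
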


\begin{proof}
From Lemma 22 of \citet{bun2016concentrated}, note that $(\varepsilon, \delta)$-DP implies $\big(5 (\frac{\varepsilon^2}{\log({1}/{\delta})})^{1/4} - \frac{\varepsilon^2}{4 \log({1}/{\delta})}, \frac{\varepsilon^2}{4 \log({1}/{\delta})})$-zCDP; this follows by setting $\hat{\xi} = 0$ \& $\hat{\rho} = \frac{\varepsilon^2}{\log({1}/{\delta})}$ in that lemma. Since $\hat{\rho} < 1$ in that lemma, we impose the constraint of $\delta < \exp(-\varepsilon^2)$.

Now, we shall use \Cref{zcd-fano} (stated and proved after this proof). In \Cref{zcd-fano}, let us use the loss function $\widetilde{l}(\bm{w}, \bm{w}') = \min\Big(1, \frac{1}{4}\|\bm{w} - \bm{w}'\|^2\Big)$. 
As mentioned after \cref{eq:95-new2}, $\bm{w}_{\bm{v}}^{*} = \frac{\bm{v}}{\|\bm{v}\|}$ is the minimizer of the expected loss $\ell$ over the distribution $Q_{\bm{v}}$. Now:
\begin{equation}
    \widetilde{l}(\bm{w}_{{\bm{v}}}^{*}, \bm{w}_{{\bm{v}'}}^{*}) = \min\Big(1, \frac{1}{4}\|\bm{w}_{{\bm{v}}}^{*} - \bm{w}_{{\bm{v}'}}^{*}\|^2\Big) = \min\Bigg(1, \frac{1}{4}\Bigg\|\frac{\bm{v}}{\|\bm{v}\|} - \frac{\bm{v}'}{\|\bm{v}'\|}\Bigg\|^2\Bigg). 
\end{equation}
Using the fact that $\|\bm{v}\| = \|\bm{v}'\| = \sqrt{\frac{d}{2}}$ and $d_{\text{Ham}}(\bm{v}, \bm{v}') \geq \frac{d}{8}$ for all $\bm{v} \neq \bm{v}'$, we have that $\widetilde{l}(\bm{w}_{{\bm{v}}}^{*}, \bm{w}_{{\bm{v}'}}^{*}) \geq \Omega(1)$ for all $\bm{v} \neq \bm{v}'$. 
Also, $d_{TV}(Q_{\bm{v}}, Q_{\bm{v}'}) = p = \frac{2 \sqrt{d \log(1/\delta)}}{n \varepsilon}$ by definition and $\log |\mathcal{V}| = \mathcal{O}(d)$. Using all of this in \Cref{zcd-fano}, we get:
\begin{equation}
    \frac{1}{|\mathcal{V}|} \sum_{\bm{v} \in \mathcal{V}} \mathbb{E}\Big[\widetilde{l}(\bm{w}_{{\bm{v}}}^{(\mathcal{A})}, \bm{w}_{{\bm{v}}}^{*})\Big] \geq \Omega(1).
\end{equation}
Since the average over all $\bm{v}$ is $\Omega(1)$, there must exist some $\bm{v}$, say $\hat{\bm{v}}$, for which:
\begin{equation}
    \mathbb{E}\Big[\widetilde{l}(\bm{w}_{\hat{\bm{v}}}^{(\mathcal{A})}, \bm{w}_{\hat{\bm{v}}}^{*})\Big] \geq \Omega(1).
\end{equation}
This completes the proof.
\end{proof}

\begin{lemma}[\textbf{$(\xi,\rho)$-zCDP  Fano’s inequality}: based on Thm. 1.4 of \citet{kamath2021improved}]
\label{zcd-fano}
Let $\{p_1,\ldots,p_M\} \subseteq \mathcal{P}$ be a set of probability distributions, $\bm{\theta}: \mathcal{P} \xrightarrow{} \mathbb{R}^d$ be a parameter of interest, and $\tilde{l}: \mathbb{R}^d \times \mathbb{R}^d \xrightarrow{} \mathbb{R}$ be a loss function. Suppose for all $i \neq j$, it satisfies (a) $\tilde{l}(\bm{\theta}(p_i), \bm{\theta}(p_j)) \geq r$, (b) $d_{TV}(p_i, p_j) \leq \alpha$, (c) $d_{KL}(p_i, p_j) \leq \beta$. Then for any $(\xi,\rho)$-zCDP estimator $\bm{\hat{\theta}}$:
\begin{multline}
    \frac{1}{M}\sum_{i \in M} \mathbb{E}_{\bm{X} \sim p_i^n} \Big[\tilde{l}\big(\bm{\hat{\theta}}(X), \bm{\theta}(p_i)\big)\Big] \geq 
    \\
    \frac{r}{2} \textup{max}\Big\{1 - \frac{\beta+\log 2}{\log M}, 1 - \frac{\rho(n^2 \alpha^2 + n \alpha (1-\alpha)) + {\color{red} \xi(u n \alpha \log(e u n \alpha) + n \log (e n) \exp(-\frac{n \alpha (u-1)^2}{3}))} + \log 2}{\log M} \Big\}, 
    \\
    \text{ for any } u > 1.
\end{multline}
\end{lemma}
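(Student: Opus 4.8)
The plan is to treat this as a privacy-aware Fano's method, following the template of Theorem 1.4 of \citet{kamath2021improved} (and \citet{acharya2021differentially}), with all of the new work isolated in handling the additive parameter $\xi$ of $(\xi,\rho)$-zCDP. I would first set up the standard reduction of estimation to testing. Let $J \sim \text{Unif}\{1,\ldots,M\}$ and, conditioned on $J = i$, draw $\bm{X} \sim p_i^n$ and run the estimator to produce $\bm{\hat\theta}(\bm{X})$; define the minimum-distance test $\hat{J} := \text{arg min}_{j}\, \tilde{l}(\bm{\hat\theta}(\bm{X}), \bm{\theta}(p_j))$. Hypothesis (a) guarantees that the indices are $r$-separated in the loss, so a small value of $\tilde{l}(\bm{\hat\theta}, \bm{\theta}(p_J))$ forces $\hat{J} = J$; the usual Markov-type argument then yields $\frac{1}{M}\sum_i \mathbb{E}_{p_i^n}[\tilde{l}(\bm{\hat\theta}, \bm{\theta}(p_i))] \geq \frac{r}{2}\, \mathbb{P}(\hat{J} \neq J)$. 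Since $J \to \bm{X} \to \bm{\hat\theta} \to \hat{J}$ is a Markov chain, Fano's inequality gives $\mathbb{P}(\hat{J} \neq J) \geq 1 - \frac{I(J;\bm{\hat\theta}) + \log 2}{\log M}$, so everything reduces to upper-bounding the mutual information $I(J;\bm{\hat\theta})$ in two different ways, one yielding each branch of the maximum.

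For the first (information-theoretic) branch, I would ignore privacy and use data processing together with the convexity bound $I(J;\bm{\hat\theta}) \leq I(J;\bm{X}) \leq \frac{1}{M^2}\sum_{i,j} d_{KL}(p_i^n \| p_j^n)$; tensorizing KL over the $n$ coordinates and invoking (c) bounds this by the quantity playing the role of $\beta$, producing the term $1 - \frac{\beta + \log 2}{\log M}$. The second (privacy) branch is the crux. Writing $P_i := \mathcal{A}(p_i^n)$ for the output law, I would start from $I(J;\bm{\hat\theta}) \leq \frac{1}{M^2}\sum_{i,j} d_{KL}(P_i\|P_j)$ and bound each pairwise term by a coupling argument: by (b) there is a coupling of $\bm{X} \sim p_i^n$ and $\bm{X}' \sim p_j^n$ whose number of differing coordinates $K$ is stochastically dominated by $\mathrm{Bin}(n,\alpha)$, and joint convexity of KL along this coupling gives $d_{KL}(P_i\|P_j) \leq \mathbb{E}_{K}\big[\, d_{KL}(\mathcal{A}(\bm{X})\|\mathcal{A}(\bm{X}')) \,\big]$, where the inner divergence is controlled by group privacy at Hamming distance $K$.

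The key technical step, and the tool I expect to re-derive since \citet{kamath2021improved} only treat pure ($\xi=0$) zCDP, is a group-privacy bound for $(\xi,\rho)$-zCDP. Iterating the weak triangle inequality for R\'enyi divergences along a path of $K$ single-coordinate edits, I would establish a bound of the form $d_{KL}(\mathcal{A}(\bm{X})\|\mathcal{A}(\bm{X}')) \leq \rho K^2 + c\,\xi\, K\log(eK)$ for an absolute constant $c$ (the $\rho K^2$ term being the familiar quadratic composition, the $\xi K\log(eK)$ term being the new contribution of the additive parameter). Taking expectations, the $\rho$ part gives $\rho\,\mathbb{E}[K^2] \leq \rho(n^2\alpha^2 + n\alpha(1-\alpha))$ directly from the binomial second moment. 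For the $\xi$ part I would control $\mathbb{E}[K\log(eK)]$ by truncating at the threshold $K = un\alpha$: on $\{K \leq un\alpha\}$ monotonicity bounds the contribution by $un\alpha\log(eun\alpha)$, while on $\{K > un\alpha\}$ the crude bound $K\log(eK) \leq n\log(en)$ combined with the multiplicative Chernoff inequality $\mathbb{P}(K > un\alpha) \leq \exp(-n\alpha(u-1)^2/3)$ (with $u > 1$) gives the tail term $n\log(en)\exp(-n\alpha(u-1)^2/3)$. This reproduces exactly the $\xi$-dependent expression in the second branch.

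Finally I would assemble: for any fixed $u > 1$ I may use whichever of the two upper bounds on $I(J;\bm{\hat\theta})$ is smaller, so plugging into Fano and then into the testing reduction yields the maximum of the two lower bounds, which is the claim. The main obstacle is the $(\xi,\rho)$-zCDP group-privacy re-derivation together with the accompanying bookkeeping: the additive $\xi$ destroys the clean quadratic composition available for pure zCDP, and extracting the precise $K\log(eK)$ growth (hence the need for the truncation parameter $u$ and the Chernoff tail) while keeping the constants in the right places is the delicate part; getting the weak-triangle-inequality chaining to close with the stated form is where I would expect to spend most of the effort.
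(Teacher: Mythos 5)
Your proposal is correct and follows essentially the same route as the paper: the paper likewise reduces to the Kamath et al.\ Theorem 1.4 template, replaces the pure-zCDP group-privacy bound $\rho\, d_{\text{Ham}}^2$ with $\xi\, d_{\text{Ham}}\log(e\cdot d_{\text{Ham}}) + \rho\, d_{\text{Ham}}^2$ (citing Proposition 27 of Bun--Steinke together with $\sum_{i=1}^k \frac{1}{i}\le \log(ek)$, which is exactly the weak-triangle-inequality chaining you propose to re-derive), and then bounds $\mathbb{E}_{z\sim\mathrm{Bin}(n,\alpha)}[z\log(ez)]$ by the same truncation at $un\alpha$ plus a multiplicative Chernoff tail. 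The only cosmetic difference is that the paper invokes the existing group-privacy proposition rather than rebuilding it from the R\'enyi weak triangle inequality.
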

Compared to Theorem 1.4 of \citet{kamath2021improved} (who only consider the case of $\xi=0$), note the extra term in red.
\begin{proof}
The proof is largely based on the proof of Theorem 1.4 of \citet{kamath2021improved}; we mention the changes required (while following their notation).

The first change is that instead of $d_{\text{KL}}(\hat{p}(x), \hat{p}(x')) \leq \rho d_{\text{Ham}}(x, x')^2$ (in \citet{kamath2021improved}), here we have $d_{\text{KL}}(\hat{p}(x), \hat{p}(x')) \leq {\color{blue}\xi d_{\text{Ham}}(x, x') \log(e . d_{\text{Ham}}(x, x'))} + \rho d_{\text{Ham}}(x, x')^2$; this follows using Proposition 27 of \citet{bun2016concentrated} and the fact that $\sum_{i=1}^k \frac{1}{i} \leq \log(e.k)$. Secondly, due to the extra term in blue, we need to bound $\mathbb{E}_{z \sim \text{Bin}(n, \alpha)}[z \log(e. z)]$ (this is because $d_{\text{Ham}}(x, x') \sim \text{Bin}(n,\alpha)$ as discussed in the proof of Theorem 1.4 of \citet{kamath2021improved}). 
For any $u > 1$, we have: 
\begin{flalign}
    \mathbb{E}[z \log(e. z)] & \leq \mathbb{E}[z \log(e. z)| z < u n \alpha] + \mathbb{E}[z \log(e. z)| z \geq u n \alpha] \mathbb{P}(z \geq u n \alpha)
    \\
    & \leq u n \alpha \log(e u n \alpha) + n \log (e n) \exp\Big(-\frac{n \alpha (u-1)^2}{3}\Big),
\end{flalign}
where the last step uses the Chernoff bound. Combining all of this and following the rest of the steps in the proof of Theorem 1.4 of \citet{kamath2021improved}, we get the desired result.
\end{proof}

\section{Proof of Theorem~\ref{lower_bound_constrained}}
\label{lb_cons_pf}
The proof of Theorem~\ref{lower_bound_constrained} follows from that of \Cref{lower_bound_full} by just restricting ourselves to $\|\bm{w}\| \leq 1$ (as well as $\|\bm{w}_{\bm{v}}^{(\mathcal{A})}\| \leq 1$).

\section{Full Version and Proof of Theorem~\ref{thm1-dec15}}
\label{full-2}
\begin{theorem}[\textbf{Unconstrained Nonconvex Case}]
\label{thm1-dec15-full}
Suppose \Cref{dec23-asmp1} holds, $f$ is $L$-smooth and $\mathcal{W} = \mathbb{R}^d$. Fix some $\gamma \in (0,1)$ and $C > 0$. 
In \Cref{alg:1}, set $\tau = G \Big(\frac{G}{\gamma^2 C \sqrt{L}}\Big)^{\frac{1}{2k-1}}\Big(\frac{1}{T} + \varphi^2\Big)^{-\frac{1}{2(2k-1)}}$ and $\eta_t = \eta = \frac{C}{T \tau \sqrt{L}}\Big(\frac{1}{T} + \varphi^2\Big)^{-\frac{1}{2}}$ for all $t < T$.
Then with a probability of at least $(1 - \gamma)$ which is w.r.t. the random dataset $\mathcal{Z}$ that we obtain, DP-SGD (\Cref{alg:1}) has the following guarantee:
\begin{equation*}
    \textup{OR}(T) \leq \frac{(\sqrt{L})^{1 - \frac{1}{2k-1}}
    {G^{1 + \frac{1}{2k-1}}}}{\gamma^{\frac{2}{2k-1}} C^{\frac{1}{2k-1}}} \Bigg(3 C + \frac{2 (f(\bm{w}_0) - \min_{\bm{w} \in \mathbb{R}^d} f(\bm{w}))}{C}\Bigg)
    \Big(\frac{1}{T} + \varphi^2 \Big)^{\frac{1}{2}(1-\frac{1}{2k-1})}.
\end{equation*}
So if we set $T = \frac{1}{\varphi^2}$ above (which is what we do in \Cref{thm1-dec15}), we get the following bound for the risk:
\begin{equation*}
    \textup{OR}(T) \leq
    \frac{(\sqrt{2 L})^{1 - \frac{1}{2k-1}}
    {G^{1 + \frac{1}{2k-1}}}}{\gamma^{\frac{2}{2k-1}} C^{\frac{1}{2k-1}}} \Bigg(3 C + \frac{2 (f(\bm{w}_0) - \min_{\bm{w} \in \mathbb{R}^d} f(\bm{w}))}{C}\Bigg) \varphi^{(1-\frac{1}{2k-1})}.
\end{equation*}
\end{theorem}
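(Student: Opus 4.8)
The plan is to run the standard $L$-smooth nonconvex descent analysis, which in the \emph{unconstrained} case is cleaner than the convex case because the update is simply $\bm{w}_{t+1} = \bm{w}_t - \eta \bm{g}_t$ and we can telescope \emph{function values} rather than iterate distances. First I would apply the descent inequality $f(\bm{w}_{t+1}) \le f(\bm{w}_t) - \eta\langle \nabla f(\bm{w}_t), \bm{g}_t\rangle + \tfrac{L\eta^2}{2}\|\bm{g}_t\|^2$ and take expectation conditioned on $\bm{w}_t$. Since the minibatch is an unbiased estimator of the full clipped average and the Gaussian noise is zero-mean, the conditional mean of $\bm{g}_t$ equals $\tfrac1n\sum_i \mathrm{clip}(\nabla f_i(\bm{w}_t),\tau)$, so the cross term becomes $-\eta\langle \nabla f(\bm{w}_t), \nabla f(\bm{w}_t) - \bm{b}_t\rangle$, where $\bm{b}_t$ is the clipping bias. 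The crucial point is that, unlike the convex proofs (Theorems \ref{thm1-jan19} and \ref{thm1-may4}), this bias enters only through its norm and never forces us to bound $\|\bm{w}_t - \bm{w}^{*}\|$.

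Next I would split the cross term by Young's inequality, $\eta\langle\nabla f(\bm{w}_t),\bm{b}_t\rangle \le \tfrac{\eta}{2}\|\nabla f(\bm{w}_t)\|^2 + \tfrac{\eta}{2}\|\bm{b}_t\|^2$, leaving a clean $-\tfrac{\eta}{2}\|\nabla f(\bm{w}_t)\|^2$ descent term. I would then control $\|\bm{b}_t\|$ by \Cref{bias-heavy-tail}, giving $\|\bm{b}_t\|^2 \le G_{\mathcal{Z}}^{2k}/\tau^{2k-2}$ with $G_{\mathcal{Z}}^k = \tfrac1n\sum_i (G(\bm{x}_i,y_i))^k$, and bound the second moment $\mathbb{E}[\|\bm{g}_t\|^2]$ via \eqref{eq:dec10-3}. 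Rearranging and summing $t=0,\ldots,T-1$ telescopes the function values; using $f(\bm{w}_T) \ge \min_{\bm{w}} f(\bm{w})$ and dividing by $\eta T/2$ yields, since $\widehat t \sim \mathrm{unif}[0,T-1]$ makes $\tfrac1T\sum_t \mathbb{E}\|\nabla f(\bm{w}_t)\|^2 = \mathrm{OR}(T)$, a bound of the form $\mathrm{OR}(T) \le \tfrac{2(f(\bm{w}_0)-\min f)}{\eta T} + \tfrac{G_{\mathcal{Z}}^{2k}}{\tau^{2k-2}} + 2L\eta T\tau^2(\tfrac1T + \varphi^2)$. Finally, Markov's inequality gives $G_{\mathcal{Z}}^k \le G^k/\gamma$ (hence $G_{\mathcal{Z}}^{2k}\le G^{2k}/\gamma^2$) with probability at least $1-\gamma$ over $\mathcal{Z}$.

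The remaining and main obstacle is the parameter bookkeeping: I must choose $\tau$ and $\eta$ so that all three terms above carry the \emph{same} power of $A := \tfrac1T + \varphi^2$ and collapse into the stated prefactor. The first and third terms both scale like $\tau\sqrt{L}\,A^{1/2}$ (after substituting $\eta T = \tfrac{C}{\tau\sqrt{L}}A^{-1/2}$), while the bias term scales like $\tau^{-(2k-2)}$; balancing the bias against these forces $\tau = G\bigl(\tfrac{G}{\gamma^2 C\sqrt{L}}\bigr)^{1/(2k-1)}A^{-1/(2(2k-1))}$, which equalizes the exponent of $A$ at $\tfrac12\bigl(1-\tfrac1{2k-1}\bigr)$ across all three terms. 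Substituting the stated $\eta$ and $\tau$, terms one and three contribute the factor $\tfrac{2(f(\bm{w}_0)-\min f)}{C} + 2C$ and the bias term contributes the remaining $C$, assembling into $3C + \tfrac{2(f(\bm{w}_0)-\min f)}{C}$ times the claimed prefactor. Setting $T = 1/\varphi^2$ (so $A = 2\varphi^2$) and folding $2^{\frac12(1-\frac1{2k-1})}$ into $(\sqrt{L})^{1-\frac1{2k-1}}$ to get $(\sqrt{2L})^{1-\frac1{2k-1}}$ produces the final displayed risk bound. The only genuine care needed is verifying that the exponents of $G$, $\gamma$, $C$, and $\sqrt{L}$ all match simultaneously, which is routine but error-prone.
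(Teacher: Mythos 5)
Your proposal is correct and follows essentially the same route as the paper's proof: the smoothness descent inequality, bounding the clipping bias via Lemma~\ref{bias-heavy-tail}, the second-moment bound \eqref{eq:dec10-3}, telescoping function values, Markov's inequality on $G_{\mathcal{Z}}^k$, and the same parameter balancing. The only cosmetic difference is that you split the cross term with Young's inequality whereas the paper uses the polarization identity $\langle \bm{a},\bm{b}\rangle = \tfrac12(\|\bm{a}\|^2+\|\bm{b}\|^2-\|\bm{a}-\bm{b}\|^2)$ and drops the nonnegative term; these yield the identical bound $-\tfrac{\eta}{2}\|\nabla f(\bm{w}_t)\|^2 + \tfrac{\eta}{2}\|\bm{b}_t\|^2$.
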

We prove this below.

\subsection*{Proof:}
\label{pf-4}
\begin{proof}
From \Cref{bias-heavy-tail}, recall that
\begin{equation}
    \label{eq:dec13-4-cp}
    \Big\|\frac{1}{n}\sum_{i \in [n]}\text{clip}(\nabla f_i(\bm{w}), \tau) - \nabla f(\bm{w})\Big\| \leq \frac{G_{\mathcal{Z}}^k}{\tau^{k-1}},
\end{equation}
where $G_{\mathcal{Z}}^k = \frac{1}{n}\sum_{i=1}^n (G(\bm{x}_i, y_i))^k$.

Using the $L$-smoothness of $f$ and taking expectation only with respect to the randomness in the current iteration, we have:
\begin{flalign}
   \mathbb{E}[f(\bm{w}_{t+1})] 
   & \leq f(\bm{w}_{t}) - \eta \mathbb{E}[\langle \nabla f(\bm{w}_{t}), \bm{g}_t \rangle] + \frac{\eta^2 L}{2} \mathbb{E}[\|\bm{g}_t\|^2]
   \\
   \label{eq:may4-0}
   & = f(\bm{w}_{t}) - \eta \Big[\Big\langle \nabla f(\bm{w}_{t}), \frac{1}{n}\sum_{i = 1}^n \text{clip}(\nabla f_i(\bm{w}_t), \tau) \Big \rangle \Big] + {\eta^2 L \tau^2} \Big(1 + \frac{\nu d T \log(\frac{1}{\delta})}{n^2 \varepsilon^2} \Big)
   \\
   \label{eq:dec15-1}
   & =  f(\bm{w}_{t}) - \frac{\eta}{2}\Bigg\{\Big\|\frac{1}{n}\sum_{i = 1}^n \text{clip}(\nabla f_i(\bm{w}_t), \tau)\Big\|^2 + \|\nabla f(\bm{w}_t)\|^2 
   - \Big\| \frac{1}{n}\sum_{i = 1}^n \text{clip}(\nabla f_i(\bm{w}_t), \tau) - \nabla f(\bm{w}_t)\Big\|^2\Bigg\} 
   \\
   \nonumber
   & \text{ } \text{ } + {\eta^2 L \tau^2} \Big(1 + \frac{\nu d T \log(\frac{1}{\delta})}{n^2 \varepsilon^2} \Big)
   \\
    \label{eq:dec15-3}
    & \leq f(\bm{w}_{t}) - \frac{\eta}{2} \|\nabla f(\bm{w}_{t})\|^2 + \frac{\eta}{2} \Big(\frac{G_{\mathcal{Z}}^{2k}}{\tau^{2(k-1)}}\Big) + {\eta^2 L \tau^2} \Big(1 + \frac{\nu d T \log(\frac{1}{\delta})}{n^2 \varepsilon^2} \Big).
\end{flalign}
In \cref{eq:may4-0}, we have used \cref{eq:dec10-3}.
\Cref{eq:dec15-1} follows by using the fact for any two vectors $\bm{a}$ and $\bm{b}$, $\langle \bm{a}, \bm{b} \rangle = \frac{1}{2}\Big(\|\bm{a}\|^2 + \|\bm{b}\|^2 - \|\bm{a} - \bm{b}\|^2 \Big)$. \Cref{eq:dec15-3} is obtained by using \cref{eq:dec13-4-cp}. 

Next, summing up the above for $t=0$ through to $T-1$, taking expectation throughout and then after rearranging a bit and using the fact that $\mathbb{E}[f(\bm{w}_T)] \geq f^{*} = \min_{\bm{w} \in \mathbb{R}^d} f(\bm{w})$, we get:
\begin{flalign}
   \frac{1}{T}\sum_{t=0}^{T-1}\mathbb{E}[\|\nabla f(\bm{w}_t)\|^2] & \leq \frac{2(f(\bm{w}_0) - f^{*})}{\eta T} + 2 {\eta T L \tau^2} \Big(\frac{1}{T} + \frac{\nu d \log({1}/{\delta})}{n^2 \varepsilon^2} \Big) + \frac{G_{\mathcal{Z}}^{2k}}{\tau^{2(k-1)}}.
\end{flalign}
Let us plug in $\eta = \frac{C}{T \tau \sqrt{L} \sqrt{\frac{1}{T} + \frac{\nu d \log({1}/{\delta})}{n^2 \varepsilon^2}}}$ above, where $C > 0$ is a constant of our choice. With that, we get:
\begin{equation}
    \label{eq:apr20-1}
    \frac{1}{T}\sum_{t=0}^{T-1}\mathbb{E}[\|\nabla f(\bm{w}_t)\|^2] \leq \underbrace{\Big(\frac{2(f(\bm{w}_0) - f^{*})}{C} + 2 C\Big)}_{:=C'} \sqrt{L} \tau \sqrt{\frac{1}{T} + \frac{\nu d \log({1}/{\delta})}{n^2 \varepsilon^2}}
    + \frac{G_{\mathcal{Z}}^{2k}}{\tau^{2(k-1)}}.
\end{equation}
Let us now choose $\tau = G \Big(\frac{G}{\gamma^2 C \sqrt{L}}\Big)^{\frac{1}{2k-1}}\Big(\frac{1}{T} + \frac{\nu d \log({1}/{\delta})}{n^2 \varepsilon^2}\Big)^{-\frac{1}{2(2k-1)}}$, where $\gamma \in (0,1)$. That gives us:
\begin{equation}
    \frac{1}{T}\sum_{t=0}^{T-1}\mathbb{E}[\|\nabla f(\bm{w}_t)\|^2] \leq \frac{(\sqrt{L})^{1 - \frac{1}{2k-1}}
    {G^{1 + \frac{1}{2k-1}}}}{\gamma^{\frac{2}{2k-1}}} \Bigg(\frac{C'}{C^{\frac{1}{2k-1}}} + C^{1 - \frac{1}{2k-1}} \Big(\frac{G_{\mathcal{Z}}^{k}}{{G^{k}}/{\gamma}}\Big)^2\Bigg)
    \Big(\frac{1}{T} + \frac{\nu d \log({1}/{\delta})}{n^2 \varepsilon^2}\Big)^{\frac{1}{2}(1-\frac{1}{2k-1})}.
\end{equation}
Now, using Markov's inequality, $G_{\mathcal{Z}}^k \leq \frac{G^k}{\gamma}$ with a probability of at least $1 - \gamma$ w.r.t. the random dataset $\mathcal{Z}$. Plugging this above, we get:
\begin{equation}
    \frac{1}{T}\sum_{t=0}^{T-1}\mathbb{E}[\|\nabla f(\bm{w}_t)\|^2] \leq \frac{(\sqrt{L})^{1 - \frac{1}{2k-1}}
    {G^{1 + \frac{1}{2k-1}}}}{\gamma^{\frac{2}{2k-1}}} \Bigg(\frac{C'}{C^{\frac{1}{2k-1}}} + C^{1 - \frac{1}{2k-1}}\Bigg)
    \Big(\frac{1}{T} + \frac{\nu d \log({1}/{\delta})}{n^2 \varepsilon^2}\Big)^{\frac{1}{2}(1-\frac{1}{2k-1})},
\end{equation}
with a probability of at least $1 - \gamma$ w.r.t. the random dataset $\mathcal{Z}$.

Lastly, plugging in $\varphi = \frac{\sqrt{\nu d  \log({1}/{\delta})}}{n\varepsilon}$ and the value of $C'$, noting that $\mathbb{E}[\|\nabla f(\bm{w}_{\widehat{t}})\|^2] = \frac{1}{T}\sum_{t=0}^{T-1}\mathbb{E}[\|\nabla f(\bm{w}_t)\|^2]$ and using the definition of $\text{OR}(T)$, we get the final result.
\end{proof}

\end{document}